\DeclareMathOperator{\argmin}{argmin}
\newtheorem{theorem}{Theorem}[section]
\newtheorem{definition}{Definition}[section]
\newtheorem{lemma}{Lemma}[section]
\newtheorem{proposition}{Proposition}[section]
\title{A Theoretical Framework for Inference Learning}
\author{%
  Nick Alonso\thanks{Correspondence to   nalonso2@uci.edu} \\
  UC, Irvine\\
  \And
  Beren Millidge \\
  Oxford University\\
  \And
  Jeff Krichmar \\
  UC, Irvine \\
  \And
  Emre Neftci \\
  Forschungszentrum Jülich,\\
  UC, Irvine\\
}
\begin{document}

\maketitle

\begin{abstract}
Backpropagation (BP) is the most successful and widely used algorithm in deep learning. However, the computations required by BP are challenging to reconcile with known neurobiology. This difficulty has stimulated interest in more biologically plausible alternatives to BP. One such algorithm is the inference learning algorithm (IL). IL has close connections to neurobiological models of cortical function and has achieved equal performance to BP on supervised learning and auto-associative tasks. In contrast to BP, however, the mathematical foundations of IL are not well-understood. Here, we develop a novel theoretical framework for IL. Our main result is that IL closely approximates an optimization method known as implicit stochastic gradient descent (implicit SGD), which is distinct from the explicit SGD implemented by BP. Our results further show how the standard implementation of IL can be altered to better approximate implicit SGD. Our novel implementation considerably improves the stability of IL across learning rates, which is consistent with our theory, as a key property of implicit SGD is its stability. We provide extensive simulation results that further support our theoretical interpretations and also demonstrate IL achieves quicker convergence when trained with small mini-batches while matching the performance of BP for large mini-batches.
\end{abstract}

\section{Introduction}

Backpropagation (BP) \cite{rumelhart1995backpropagation}, the most successful and ubiquitously used learning algorithm in deep learning, has long been criticized as being incompatible with known neurobiology \cite{crick1989recent, lillicrap2020backpropagation}. This criticism has inspired researchers to search for alternatives to BP that may better fit biological constraints. One alternative to BP is the inference learning algorithm (IL) \cite{rao1999predictive, whittington2017approximation, song2020can}. IL is an energy based algorithm \cite{scellier2017equilibrium, whittington2019theories}, which first minimizes an energy function, known as free energy, w.r.t. neuron activities, then at convergence updates weights to further minimize energy. Interest in IL has grown in recent years because IL arguably better fits biological constraints than BP, while matching BP's performance on many tasks. For example, BP is sometimes said to compute weight updates using information non-local to the synapse, whereas synapses in the brain depend on local information having to do with the activity of pre-synaptic and post-synaptic neurons \cite{crick1989recent, lillicrap2020backpropagation}. IL, on the other hand, uses local learning rules that are similar to Hebbian models of synaptic plasticity in the brain (e.g., \cite{whittington2017approximation, whittington2019theories}). Another criticism is that error feedback propagated by BP does not alter feed forward (FF) activities, which is hard to reconcile with the highly interconnected circuitry of FF and feedback signals in the brain \cite{lillicrap2016random, markov2014anatomy}. Feedback used in IL, on the other hand, necessarily alters FF activity in order to minimize free energy. Further, IL has long been used to train predictive coding (PC) models of circuits in the cortex (e.g. \cite{rao1999predictive, friston2005theory, spratling2008predictive, bastos2012canonical, adams2013predictions, bogacz2017tutorial}). PC and IL have even been proposed as canonical computations used throughout the brain \cite{friston2009predictive, friston2010free, bastos2012canonical}. More recently, IL has achieved comparable accuracy to BP on supervised learning tasks with small and medium scale data-sets  (e.g, \cite{whittington2017approximation, alonso2021tightening, salvatori2022learning}, see also below). IL has also been used to train PC networks to perform auto-associative memory tasks, and on some tests has achieved a higher recall accuracy than both modern Hopfield networks \cite{krotov2016dense, ramsauer2020hopfield} and denoising auto-encoders trained with BP \cite{salvatori2021associative, salvatori2022learning}. 

IL's biological connections motivates further analysis into its optimization properties given that such an analysis may lead to a better understanding of the principles behind synaptic plasticity and optimization in the brain. Additionally, IL's competitive performance with BP further motivates an investigation into the question of \textit{why} IL is competitive with BP. Similar questions, such as \textit{How does IL behave differently than BP? What advantages/disadvantages does IL have as compared to BP?} can also aid in understanding which applications, if any, are better suited for IL rather than BP and vice versa. The optimization properties of IL and their relations to BP, however, are still not fully understood. This gap in our understanding is the main motivation for this paper.

In this paper, we develop a novel theoretical framework for IL that describes its optimization properties and their similarities and differences from BP. More specifically, 1) we expand beyond current literature by deriving a general form of the IL algorithm, which we call Generalized IL (G-IL). 2) We demonstrate our main result, which is that G-IL closely approximates an optimization method known as \textit{implicit stochastic gradient descent} (implicit SGD). Implicit SGD is distinct from explicit SGD (the standard form of SGD used in machine learning and the sort executed by BP). 3) We provide theoretical guarantees concerning IL's stability and a connection to Gauss-Newton optimization and BP. 4) We identify a learning rule and variable settings needed in order for IL to equal implicit SGD and use this result to develop a novel implementation of IL called IL-prox. 5) Finally, we present extensive simulation results that support our theoretical findings, and for the first time show certain performance advantages of IL over BP, such as better stability across learning rates and improved convergence speed with biologically realistic streaming data and small mini-batches. These results collectively suggest that, not only does IL better fit biological constraints but is mathematically justified and has performance advantages over BP in more biologically realistic training scenarios.

\section{Related Works}
Whittington and Bogacz \cite{whittington2017approximation} proved that parameter updates performed by the IL algorithm approach those of BP as the global loss approaches zero and optimized activities approach feed-forward activities. However, this same proof implies that non-zero IL updates essentially never equal those of BP. Additionally, this analysis leaves open the question of how IL is able to minimize the loss in a stable manner early in training when the loss is large and IL updates diverge significantly from those of BP. Other work has shown IL is formally related to variational expectation maximization \cite{millidge2021predictive, marino2022predictive}, which is a learning algorithm with convergence guarantees \cite{moon1996expectation}. This analysis still leaves open the question of how IL relates to BP and the broader framework of gradient-based methods that are the backbone of deep learning. The focus of the current work is to develop insights into these open questions. More recently, several works have altered IL to more closely approximate SGD and BP \cite{millidge2020predictive, song2020can}. These variants significantly change the original IL algorithm, and some the alterations seem hard to reconcile with neurobiology \cite{millidge2020predictive, song2020can}. Alternatively, instead of altering IL to better fit SGD and BP, we show the standard implementation of IL already closely approximates a working optimization method known as implicit SGD, and we show the variable settings under which the standard implementation of IL better approximates implicit SGD.

Algorithms similar to IL have been proposed. For example, the alternating minimization algorithm \cite{choromanska2019beyond}, which was developed independently of the predictive coding and IL framework from neuroscience, updates weights to minimize local prediction errors similar to IL (see equation \ref{eq:FEnergy} below). However, \cite{choromanska2019beyond} do not connect their algorithm to implicit SGD. A more recent variant of this algorithm by Qiao et al. \cite{qiao2021inertial} show some formal links to proximal operators, an optimization process related to implicit SGD \cite{parikh2014proximal} (see appendix \ref{app:introISGD}). However, they do not interpret the weight updates as performing the proximal update or implicit SGD as we do here with IL. Proximal operators have also recently been incorporated into learning algorithms for deep networks. Frerix et al. \cite{frerix2017proximal} developed a BP, proximal algorithm hybrid. Lau et al. \cite{lau2018proximal} developed an algorithm that combined and algorithm known as block coordinate descent with the proximal operator. Other works use proximal operators to reduce noise or perform other tasks, then BP is used to to update at convergence \cite{meinhardt2017learning, yang2018proximal}. All of these algorithms, however, are distinct from IL, as they use BP or other non-local gradient information to compute weight updates or local targets.

Finally, IL has similarities to target propagation algorithms, in which local target activities are computed and used to compute local error gradients to update weights. Target propagation \cite{bengio2014auto} and difference target propagation \cite{lee2015difference} utilize approximate Gauss-Newton updates on activities to create local targets \cite{meulemans2020theoretical, bengio2020deriving}, while other algorithms compute targets using gradient updates on activities \cite{lecun1988theoretical, ororbia2019biologically, bartunov2018assessing, laborieux2021scaling}. These algorithms have important differences from IL, e.g., they use a different learning rules than IL, which we discuss further below. In sum, previous works have not developed the same mathematical interpretation of IL that we do here and have not developed the same mathematical descriptions of the differences between IL and BP. Further, none of the works mentioned above produced the same empirical findings we do in our simulations below.
\section{Background and Notation} \label{sec:bg}
\subsection{Notation}
{\small
\begin{table}[h]\label{tab:AccStab}
\centering
  \begin{tabular}{c l}
    \toprule
    Term & Description \\
    \toprule
    $W_n$ & Weight Matrix, pre-synaptic layer $n$\\
    $h_n$ & Feedforward Activity layer $n$, $h_n = W_{n-1} f(h_{n-1})$\\
    $\hat{h}_n$ & Optimized/Target Activity layer $n$\\
    $\Delta h_n$ & Optimized Activity Change layer $n$, $\Delta h_n = \hat{h}_n - h_n$\\
    $p_n$ & Local Prediction layer $n$, $p_n = W_{n-1}f( \hat{h}_{n-1})$\\
    $e_n$ & Local Error, layer $n$, either $e_n = \hat{h}_n - h_n$ or $e_n = \hat{h}_n - p_n$\\
    \bottomrule
\end{tabular}
\caption{Notation \label{tab:notation}}
\end{table}}
Notation describing a multi-layered FF network (MLP) is summarized in table \ref{tab:notation}. To simplify notation, we assume bias is stored in an extra column on each weight matrix $W_n$ and we assume a $1$ is concatenated to the end of the pre-synaptic activity vector ($h_n$ or $\hat{h}_n$ depending). All the following results should hold with and without biases. 

\subsection{Generalized Backpropagation}
Consider a set of parameters $\theta^{(b)}$ at training iteration $b$ with global loss function $L$. An explicit SGD iteration subtracts the gradient of the loss from the parameters: $\theta^{(b+1)} = \theta^{(b)} - \alpha \frac{\partial L}{\partial \theta^{(b)}}$, with step size $\alpha$. BP \cite{rumelhart1986learning} is a common algorithm for implementing SGD in deep networks. Here, to allow for easier comparison to IL, we consider an alternative implementation of SGD in deep networks that involves computing local target activities at hidden and output layers, similar to \cite{lecun1988theoretical}, \cite{bartunov2018assessing}, \cite{ororbia2019biologically}. Specifically, the local target at layer $n$, $\hat{h}_n$, is computed as follows
\begin{equation}\label{eq:BPTarg}
   \hat{h}_n = h_n + \frac{\partial l_{n+1}}{\partial h_n} = h_n - \frac{\partial L}{\partial h_n},
\end{equation}
where local loss $l_{n+1} = \frac12 \Vert \hat{h}_{n+1} - h_{n+1} \Vert^2 = \frac12 \Vert e_{n+1} \Vert^2$. The target at the output layer $\hat{h}_N = h_N - \frac{\partial L}{\partial h_N}$. Notice that $e_{n+1} = \hat{h}_{n+1} - h_{n+1} = h_{n+1} + \frac{\partial l_{n+2}}{\partial h_{n+1}} - h_{n+1} = \frac{\partial l_{n+2}}{\partial h_{n+1}} = - \frac{\partial L}{\partial h_{n+1}}$. As noted in the above equation, this local target update is equivalent to subtracting the global loss gradient from the FF activity $h_n$ (for proof see appendix \cite{bartunov2018assessing}). To update weight $W_n$ one uses the gradient of the local loss:
\begin{equation}\label{eq:BPWtupdate}
   \Delta W_n = - \alpha \frac{\partial l_{n+1}}{\partial W_n} = \alpha \frac{\partial L}{\partial W_n} = -\alpha e_{n+1}f(h_n)^T
\end{equation}
This update uses only local information and is equivalent to subtracting the global loss gradient from the weights (see \cite{bartunov2018assessing}). This local target formulation of SGD is similar to other target propagation (TP) \cite{bengio2014auto, lee2015difference} algorithms, which use a similar learning rule but compute targets slightly differently. For example, a well-known variant of TP is difference target propagation (DTP) \cite{lee2015difference}, which computes local targets by approximating a Gauss-Newton update on activities rather than gradient updates \cite{meulemans2020theoretical, bengio2020deriving}: $\hat{h}_n \approx h_n - J_{N,n}^+e_N$, where $J_{N,n}^+$ is the pseudo-inverse of the Jacobian of the forward network from layer $n$ to $N$ and $e_N = y - h_N$. Gauss-Newton updates approximate (are within $90\degree$) of SGD updates \cite{meulemans2020theoretical}, and recent attempts to improve DTP generally do so by making DTP updates more similar to SGD updates (e.g. \cite{meulemans2020theoretical, ernoult2022towards}). Additionally, variants of BP algorithms like random feedback alignment \cite{lillicrap2016random} and sign symmetric feedback alignment \cite{xiao2018biologically} approximate SGD. Since all of these methods approximate the (explicit) SGD update over weights, we consolidate them all under the heading of generalized BP (G-BP). The target-based formulation of G-BP is shown in algorithm \ref{alg:1}.

\subsection{Generalized Inference Learning}\label{sec:GIL}
We now present a general description of the IL algorithm, which we call Generalized Inference Learning (G-IL). A summary of the algorithm is presented in algorithm \ref{alg:2}. At each layer $n$ of an MLP, G-IL stores two variables: $\hat{h}_n$ and $p_n$. At initialization these variables are set to feed-forward activity values: $\hat{h}_n = p_n = h_n$. Then $\hat{h}_n$ are altered over time to minimize the energy function $F$:
\begin{equation}\label{eq:FEnergy}
    F = L(y, \hat{h}_N) + \sum_{n=1}^{N} \gamma_n \frac12 \Vert \hat{h}_n - p_n \Vert^2 + \sum_{n=1}^{N-1} \gamma^{decay}_n \frac{1}{2} \Vert \hat{h}_n \Vert^2,
\end{equation}
where $L$ is the global loss, $\frac12 \Vert \hat{h}_n - p_n \Vert^2$ are local losses, $\gamma^{decay}_n \frac{1}{2} \Vert \hat{h}_n \Vert^2$ is an optional regularization term, and $\gamma$ are positive scalar weighting terms. 
The prediction $p_n$ is computed as $p_n = W_{n-1}f(\hat{h}_{n-1})$. (Note $p_n$ may also be computed as $p_n = f(W_{n-1}\hat{h}_{n-1})$. Our theoretical results use the first formulation, but we find little difference in performance between the two in practice. See simulations below.)
G-IL first minimizes $F$ w.r.t. to activities $\hat{h}$ (inference phase). While optimizing $\hat{h}$, predictions also change since $p_{n+1} = W_n f(\hat{h}_n)$. After $\argmin_{\hat{h}} F$ is approximated, $F$ is again minimized, now w.r.t. the weight, i.e. $\argmin_{W} F$. Typical IL algorithms use SGD to perform the inference phase and either the LMS rule (below) or Adam optimizers to update weights (e.g. \cite{rao1999predictive, whittington2017approximation}). G-IL, on the other hand, is general w.r.t. the optimization processes that approximate $\argmin_{W} F$ and $\argmin_{\hat{h}} F$. We consider two weight updates in our analysis and simulations below. First, is a local error gradient update, which is equivalent to the least-mean squares (LMS) rule:
\begin{equation}\label{eq:LMSUpdate}
    \argmin_{W_n} F \approx W_n + \Delta W_n = W_n - \alpha \frac{\partial l_{n+1}}{\partial W_n} = W_n +  \alpha e_{n+1}f(\hat{h}_n)^T,
\end{equation}
where $e_{n+1} = \hat{h}_{n+1} - p_{n+1}$ and local loss $l_{n+1} = \frac12 \Vert e_{n+1} \Vert^2$. The LMS rule does not solve $\argmin_{W} F$ but only approximates it. The next rule, which is equivalent to the normalized least mean squared rule (NLMS) \cite{nagumo1967learning} is
\begin{equation}\label{eq:NLMSUpdate}
    \argmin_{W_n} F = W_n +  \Delta W_n =W_n + \Vert f(\hat{h}_n) \Vert^{-2} e_{n+1}f(\hat{h}_n)^T.
\end{equation}
The NLMS rule performs a gradient update but with an inverse squared $\ell_2$ norm of the pre-synaptic activities as its learning rate instead of a hyper-parameter. The NLMS rule is the minimum-norm solution to $argmin_{W} F$ in the case where we use mini-batch size 1 (see proposition \ref{prop:NLMSsol}). The LMS rule closely approximates the NLMS update in the sense the two are proportional. Comparing these rules to the update rule for G-BP (equation \ref{eq:BPWtupdate}) we see the main difference is the pre-synaptic term used in the rule: BP uses FF activity $h_n$, while IL uses optimized/target activity $\hat{h}_n$. This difference leads to significantly different stability properties between the algorithms, as we note below.

\begin{minipage}{0.48\textwidth}
\begin{algorithm}[H]
\SetAlgoLined
\DontPrintSemicolon
\Begin{
    \tcp{Feedforward Pass}
    $h_0 \leftarrow x^{(b)}$\;
    \For{$n=0$ \KwTo $N-1$}{
        $h_{n+1} \leftarrow W_n f(h_n)$
    }
    $\hat{h}_N \leftarrow y^{(b)}$\;
    \tcp{Compute Local Targets}
    \For{$n=1$ \KwTo $N$}{
        $\hat{h}_n \approx h_n - \frac{\partial L}{\partial h_n}$
    }
    \tcp{Update Weight Matrices}
        Eqn. \ref{eq:BPWtupdate}\;
}
\caption{Generalized BP \label{alg:1}}
\end{algorithm}
\end{minipage}
\hfill
\begin{minipage}{0.49\textwidth}
\begin{algorithm}[H]
\SetAlgoLined
\DontPrintSemicolon
\Begin{
    \tcp{Feedforward Pass}
    $\hat{h}_0 \leftarrow x^{(b)}$\;
    \For{$n=0$ \KwTo $N-1$}{
        $p_{n+1}, \hat{h}_{n+1} \leftarrow W_n f(h_n)$
    }
    \tcp{Compute Local Targets}
    $(\hat{h}_1,... \hat{h}_{N}) \approx \argmin_{\hat{h}} F$\;
    \tcp{Update Weight Matrices}
        $(W_0,... W_{N-1}) \approx \argmin_{W} F$, Eqn. 4,5\;
}
\caption{Generalized IL \label{alg:2}}
\end{algorithm}
\end{minipage}

\section{Theoretical Results}

\subsection{Implicit Gradient Descent}
Let the set of MLP weight parameters $\theta^{(b)} = [W_0^{(b)},...W_{N-1}^{(b)}]$, input data $x^{(b)}$, and output target $y^{(b)}$, where $b$ is the current training iteration. The explicit SGD update uses the loss gradient produced by the current parameters:
\begin{equation}
    \theta^{(b + 1)} = \theta^{(b)} - \alpha \frac{\partial L(\theta^{(b)}, x^{(b)}, y^{(b)})}{\partial \theta^b},
\end{equation}
where $\alpha$ is step size and $L$ is the loss measure. This update is explicit because the gradient can be readily computed given $(\theta^{(b)}, x^{(b)}, y^{(b)})$. The \textit{implicit} SGD update uses the loss gradient of the parameters at the next training iteration:
\begin{equation}\label{eq:implicit}
    \theta^{(b + 1)} = \theta^{(b)} - \alpha \frac{\partial L(\theta^{(b+1)}, x^{(b)}, y^{(b)})}{\partial \theta^{(b+1)}}.
\end{equation}
This is an implicit update because $\theta^{(b+1)}$ shows up on both sides of the equation. Unlike explicit SGD, $\theta^{(b+1)}$ cannot be readily computed using available quantities. However, the implicit gradient is equivalent to the solution of the following optimization problem (see appendix equation \ref{eq:implicitAsProx}):
\begin{equation}\label{eq:proximal1}
    \theta^{(b + 1)} = \argmin_{\theta} (L(\theta, x^{(b)}, y^{(b)}) + \frac{1}{2\alpha} \Vert \theta - \theta^{(b)} \Vert^2).
\end{equation}
This update is equivalent to the proximal operator/update \cite{parikh2014proximal}. This proximal update changes parameters $\theta^{(b)}$ in a way that minimizes the loss $L$ and the magnitude of the update, which helps keep $\theta^{(b+1)}$ in the proximity of $\theta^{(b)}$. (For more details on the proximal update and its relation to implicit SGD see appendix \ref{app:introISGD}).
\begin{figure}[t]
\centering
 \includegraphics[width=.9\textwidth]{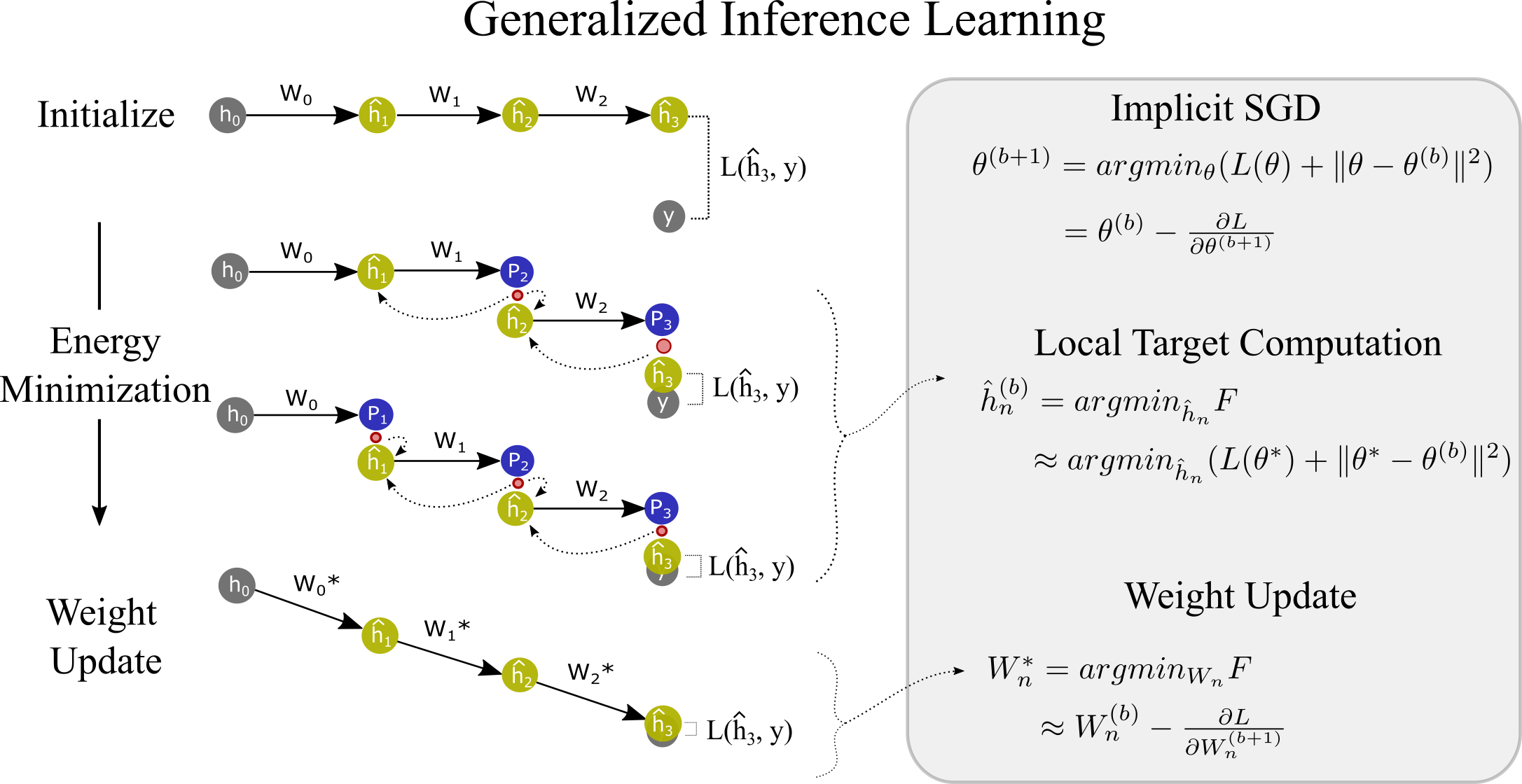}
\vspace*{2pt}
\caption{A depiction of IL. Target activities $\hat{h}$ are initialized to FF activities $h$ then updated to minimize energy $F$. Afterward, weights are updated to further minimize $F$. Weight updates 'align' with $\hat{h}$ activities so $\hat{h}$ become FF activities given the same data point. $\hat{h}$ are activities that will both improve the loss $L$ and minimize $\Vert \Delta \theta \Vert^2$. This process is equivalent to minimizing the proximal loss and approximates implicit SGD.}
\end{figure}\label{fig:ILprocess}

\subsection{Main Results}
We now present our main result, which is to show G-IL is equivalent to implicit SGD in certain limits that are approximated well in practice. We focus on the case where single data-points (mini-batch size 1) are presented each training iteration. This scenario is more biologically realistic than mini-batching and is the case where IL best approximates implicit SGD. We discuss the case of mini-batching in appendix \ref{app:minibatch}. We first define a kind of IL algorithm that minimizes the proximal loss (equation \ref{eq:proximal1}) w.r.t. neuron activities and show G-IL approaches this algorithm in certain limits.
\begin{definition}\label{def:il-prox}
\textit{Proximal Inference Learning (IL-prox).} An algorithm identical to G-IL (algorithm \ref{alg:2}), except activities are optimized according to $\argmin_{\hat{h}} prox = \argmin_{\hat{h}} (L(\theta^*) + \frac{1}{2\alpha} \Vert \theta^* - \theta^{(b)} \Vert^2)$ and the NLMS rule is used to update weights.
\end{definition}
Here $b$ is the current training iteration and $\theta^*$ are the parameters updated with the NLMS rule. IL-prox is the same as G-IL except during the inference phase, it minimizes the proximal loss w.r.t. activities $\hat{h}$ instead of the energy $F$. Intuitively, minimizing the proximal loss w.r.t. activities will result in $\hat{h}$ that yields a small loss after weights are updated and small weight update norms. Weight update norm $\frac12 \Vert\theta^* - \theta^{(b)} \Vert^2$ is known during the inference phase because the NLMS learning rule is used and known explicitly:
\[
\frac12 \Vert\theta^* - \theta^{(b)} \Vert^2 = \frac12 \sum_n^N  \Vert \Delta W_n \Vert^2 = \frac12 \sum_n^N  \Vert \alpha_n e_{n+1}f(\hat{h}_n)^T \Vert^2,
\]
where $\alpha_n = \Vert f(\hat{h}_n) \Vert^{-2}$. Thus, it can be minimized w.r.t. $\hat{h}_n$. As we explain in the appendix (see lemma \ref{lem:argminFW} and proposition \ref{prop:NLMSsol}), the loss $L(\theta^*)$ is also explicitly known during the inference phase and can be optimized w.r.t. $\hat{h}_n$.

Now, let $\alpha$ be a 'global' learning rate hyper-parameter, and $\alpha_n$ be layer-wise learning rates used in the actual weight update $\Delta W_n$.
\begin{theorem}\label{thrm:ILisProx}
Let $\alpha_n = \Vert f(\hat{h}_n) \Vert^{-2}$ and assume mini-batch size 1. In the limit where $\gamma^{decay}_n \rightarrow  \Vert e_{n+1} \Vert^2 (1 + \frac{-2}{\alpha_n^6})$, $\gamma_N \rightarrow \frac{\alpha_{N-1}}{\alpha}$, and $\gamma_n \rightarrow \alpha_{n-1}^{-1}$ for all $n < N$, it is the case that $\argmin_{\hat{h}} F = \argmin_{\hat{h}} L(\theta^*) +  \frac{1}{2\alpha}\Vert \Delta \theta \Vert^2$. Hence, in these limits G-IL is equivalent to IL-prox. 
\end{theorem}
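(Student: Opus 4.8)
The plan is to reduce both optimization problems to a common quantity expressed purely in terms of the target activities $\hat h = (\hat h_1,\dots,\hat h_N)$ and then show that their first-order stationarity conditions coincide under the stated limits. The key enabling fact is the alignment property established for the NLMS rule (proposition \ref{prop:NLMSsol} and lemma \ref{lem:argminFW}): with $\alpha_n = \Vert f(\hat h_n)\Vert^{-2}$ and mini-batch size $1$, the post-update prediction satisfies $W_n^\ast f(\hat h_n) = \hat h_{n+1}$, so a forward pass through $\theta^\ast$ reproduces the targets layer by layer, i.e. $h_n^\ast(\hat h) = \hat h_n$ as an identity in $\hat h$. Consequently $L(\theta^\ast) = L(y,\hat h_N)$ exactly, which removes the implicit dependence of the loss on the weight update and lets me treat the proximal objective as an explicit function of $\hat h$.

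First I would substitute the NLMS update $\Delta W_n = \alpha_n e_{n+1} f(\hat h_n)^T$ into the penalty and use $\alpha_n = \Vert f(\hat h_n)\Vert^{-2}$ to collapse each Frobenius norm, giving $\Vert \Delta W_n\Vert^2 = \alpha_n \Vert e_{n+1}\Vert^2$ and hence $\mathrm{prox} = L(y,\hat h_N) + \tfrac{1}{2\alpha}\sum_{n} \alpha_{n-1}\Vert e_n\Vert^2$. This already has the same shape as $F$, a global loss plus a weighted sum of squared local errors $\tfrac12\Vert e_n\Vert^2$, except that the effective weights $\alpha_{n-1}/\alpha$ are activity-dependent and there is no explicit decay term. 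I would then write out the two stationarity conditions $\partial_{\hat h_n} F = 0$ and $\partial_{\hat h_n}\mathrm{prox} = 0$, noting that each $\hat h_n$ enters through both $e_n$ (as a target) and $e_{n+1}$ (through $p_{n+1}=W_n f(\hat h_n)$), so both gradients contain an $e_n$ term and a back-projected cross term $D_n W_n^T e_{n+1}$ with $D_n=\mathrm{diag}(f'(\hat h_n))$.

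The matching then proceeds layer by layer. At the output layer only $L$ and $e_N$ involve $\hat h_N$, so $\partial_{\hat h_N}F = \partial L/\partial \hat h_N + \gamma_N e_N$ while $\partial_{\hat h_N}\mathrm{prox} = \partial L/\partial \hat h_N + (\alpha_{N-1}/\alpha)e_N$; since the $\partial L$ terms are identical they agree exactly iff $\gamma_N \to \alpha_{N-1}/\alpha$, the first stated limit, which also pins the overall normalization. For a hidden layer $n<N$ I would equate the $e_n$ and cross terms of the two conditions to constrain $\gamma_n$ and $\gamma_{n+1}$, and then assign to the optional regularizer the entire residual gradient that has no counterpart in $F$, namely the term $-\alpha_n^2\Vert e_{n+1}\Vert^2 D_n f(\hat h_n)$ produced by differentiating the activity-dependent rate $\alpha_n = \Vert f(\hat h_n)\Vert^{-2}$; folding this into $\gamma^{decay}_n \hat h_n$ and reading off the scalar coefficient yields the stated limits $\gamma_n \to \alpha_{n-1}^{-1}$ and $\gamma^{decay}_n \to \Vert e_{n+1}\Vert^2(1 - 2\alpha_n^{-6})$. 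With all three families of weights at these values the stationarity conditions of $F$ and $\mathrm{prox}$ coincide at every layer, so $\argmin_{\hat h}F = \argmin_{\hat h}\mathrm{prox}$ and G-IL reduces to IL-prox.

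The main obstacle I anticipate is exactly this last step: the residual gradient coming from the $\hat h_n$-dependence of $\alpha_n$ is aligned with $D_n f(\hat h_n)$, not with $\hat h_n$, so it cannot be reproduced by a genuine weight-decay gradient $\gamma^{decay}_n \hat h_n$ for a fixed scalar unless one works at the solution and absorbs the direction mismatch into an activity-dependent value of $\gamma^{decay}_n$; this is what forces $\gamma^{decay}_n$ to carry the factor $\Vert e_{n+1}\Vert^2$ and powers of $\alpha_n$ rather than being a constant hyper-parameter, and it is the source of the unusual $1 - 2\alpha_n^{-6}$ expression. A secondary bookkeeping difficulty is keeping the per-layer scalings consistent, since $\gamma_{n+1}$ plays the double role of error weight at layer $n+1$ and cross-term weight in layer $n$; verifying that the choices $\gamma_n\to\alpha_{n-1}^{-1}$ and $\gamma_N\to\alpha_{N-1}/\alpha$ remain mutually compatible once the normalization fixed at the output propagates inward is the part of the argument that demands the most care.
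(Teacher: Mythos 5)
Your proposal is correct and follows essentially the same route as the paper's own proof: both arguments use lemma \ref{lem:argminFW} and proposition \ref{prop:NLMSsol} to express the proximal objective purely in terms of $\hat{h}$, then match the first-order stationarity conditions $\partial F/\partial \hat{h}_n = 0$ and $\partial\,\mathrm{prox}/\partial \hat{h}_n = 0$ layer by layer, fixing $\gamma_N$ at the output, $\gamma_n$ from the error/cross terms, and absorbing the residual produced by differentiating through $\alpha_n = \Vert f(\hat{h}_n)\Vert^{-2}$ into the activity-dependent $\gamma^{decay}_n$. Your preliminary collapse $\Vert \Delta W_n \Vert^2 = \alpha_n \Vert e_{n+1}\Vert^2$ and your explicit flagging of the direction mismatch between the residual (along $f'(\hat{h}_n)f(\hat{h}_n)$) and a true decay gradient are cleaner than the paper's three-factor chain-rule bookkeeping, but they are refinements of the same argument, not a different one.
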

The proof can be found in supplementary materials theorem \ref{thrm:ILisProxApp}. This theorem states that when using the NLMS rule, G-IL is increasingly similar to IL-prox, as the $\gamma$ weighting terms in the free energy (equation \ref{eq:FEnergy}) approach the scalar values specified in the above theorem. In these limits, $\argmin_{\hat{h}} prox = \argmin_{\hat{h}} F$ and G-IL is equivalent to IL-prox. The intuitive explanation of this theorem goes as follows: minimizing the proximal loss (equation \ref{eq:proximal}) requires minimizing $L$ w.r.t. $\theta^*$ while also minimizing the update norm $\frac{1}{2\alpha} \Vert \theta^* - \theta^{(b)} \Vert^2$. The inference phase of IL minimizes $L$ w.r.t. $\theta^*$ by initializing output layer activity $\hat{h}_N$ to FF output $h_N$ then shifting it toward global target $y$. Upon a weight update this will yield a $\theta^*$ that produces a smaller loss $L$. See figure \ref{fig:ILprocess} for visualization. The magnitude of a weight update $\frac{1}{2\alpha} \Vert \alpha e_{n+1}f(\hat{h}_n)^T \Vert^2$ intuitively depends on the magnitude of local prediction error $e_{n+1}$. F is a sum of the magnitudes of errors (see equation \ref{eq:FEnergy}), so minimizing F minimizes the magnitudes of errors and consequently weight update norms $\frac{1}{2\alpha} \Vert \theta^* - \theta^{(b)} \Vert^2$. Importantly, the $\gamma$ settings under which theorem \ref{thrm:ILisProx} holds can be computed exactly in practice and are of reasonable magnitude that is easily approximated in practice (e.g., they do not approach $\infty$ or $0$ and typically approach positive scalars). This is opposed to the limits under which IL approximates BP/explicit SGD, which is the limit where $\hat{h}_n \rightarrow h_n$ and thus $\Delta W_n \rightarrow 0$ \cite{whittington2017approximation}, a limit that does not typically obtain in practice.

\begin{theorem} Let $\theta^{(b)}$ be a set of MLP parameters at training iteration $b$. Let $\theta^{(b+1)}_{prox} = argmin_{\theta} L(\theta) + \frac{1}{2\alpha} \Vert \theta - \theta^{(b)} \Vert^2$. Let $\theta^{(b+1)}_{IL-prox}$ be the parameters updated by IL-prox (see def. \ref{def:il-prox}) and $\theta^{(b+1)}_{IL}$ the parameters updated by G-IL under $\gamma$ values in theorem \ref{thrm:ILisProx}. Assume mini-batch size 1. Under this assumption, it is the case $\theta^{(b+1)}_{prox} =  \theta^{(b+1)}_{IL-prox} = \theta^{(b+1)}_{IL}$.
\end{theorem}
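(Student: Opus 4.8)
\section*{Proof proposal}

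The statement is a chain of two equalities, $\theta^{(b+1)}_{prox} = \theta^{(b+1)}_{IL-prox}$ and $\theta^{(b+1)}_{IL-prox} = \theta^{(b+1)}_{IL}$, and the plan is to treat them separately, with almost all of the work in the first. For the second equality I would invoke Theorem \ref{thrm:ILisProx} directly: under the stated $\gamma$ values it gives $\argmin_{\hat h} F = \argmin_{\hat h}\big(L(\theta^*) + \frac{1}{2\alpha}\Vert \Delta\theta\Vert^2\big)$, so G-IL and IL-prox produce identical optimized activities $\hat h$. Since both then apply the \emph{same} NLMS weight update (equation \ref{eq:NLMSUpdate}) to the same activities and the same starting weights $\theta^{(b)}$, the resulting parameters coincide, giving $\theta^{(b+1)}_{IL} = \theta^{(b+1)}_{IL-prox}$.

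The first equality is the substantive one, and the plan is to reparametrize weight space through the forward map and use a double-minimization decomposition. I would partition weight space into fibers indexed by the feed-forward activations weights produce on $x^{(b)}$: for an activity tuple $\hat h = (\hat h_0,\dots,\hat h_N)$ with $\hat h_0 = x^{(b)}$, let $\mathrm{fiber}(\hat h) = \{\theta : W_n f(\hat h_n) = \hat h_{n+1}\ \forall n\}$. Two observations make each fiber tractable. First, the global loss $L(\theta) = L(y^{(b)}, \hat h_N)$ is constant on $\mathrm{fiber}(\hat h)$, since every such $\theta$ outputs $\hat h_N$; this is the sense in which $L(\theta^*)$ is explicitly available during inference (cf.\ Lemma \ref{lem:argminFW}). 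Second, the penalty $\Vert \theta - \theta^{(b)}\Vert^2 = \sum_n \Vert W_n - W_n^{(b)}\Vert^2$ decouples across layers, and for mini-batch size $1$ each constraint $W_n f(\hat h_n) = \hat h_{n+1}$ is a single linear equation whose minimum-norm solution is exactly the NLMS update $\Delta W_n = \Vert f(\hat h_n)\Vert^{-2} e_{n+1} f(\hat h_n)^T$ (Proposition \ref{prop:NLMSsol}).

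With these two facts, the inner minimization of the proximal objective over each fiber is attained at the NLMS weights $\theta^*(\hat h)$, with value $G(\hat h) := L(y^{(b)}, \hat h_N) + \frac{1}{2\alpha}\sum_n \Vert f(\hat h_n)\Vert^{-2}\Vert e_{n+1}\Vert^2$, which is precisely the IL-prox activity objective of Definition \ref{def:il-prox} evaluated at $\theta^* = \theta^*(\hat h)$. Since the fibers partition all of weight space, I would then write
\[
\min_\theta\Big(L(\theta) + \tfrac{1}{2\alpha}\Vert\theta - \theta^{(b)}\Vert^2\Big) = \min_{\hat h}\ \min_{\theta \in \mathrm{fiber}(\hat h)}\Big(L(\theta) + \tfrac{1}{2\alpha}\Vert\theta-\theta^{(b)}\Vert^2\Big) = \min_{\hat h} G(\hat h),
\]
and conclude that the proximal minimizer lies in the fiber $\hat h^{\star} = \argmin_{\hat h} G$ and must be its min-norm (NLMS) element, since otherwise replacing it by $\theta^*(\hat h^{\star})$ would hold $L$ fixed while strictly lowering the penalty. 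Hence $\theta^{(b+1)}_{prox} = \theta^*(\hat h^{\star}) = \theta^{(b+1)}_{IL-prox}$, as IL-prox returns exactly $\theta^*$ applied to $\argmin_{\hat h} G$.

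The main obstacle I anticipate is not the algebra but the rigor of the fiber decomposition. I must ensure every fiber is nonempty, which requires $f(\hat h_n) \neq 0$ so that the single-vector constraint is solvable; this is a genericity assumption I would state explicitly, and it is exactly where mini-batch size $1$ is essential, since a larger batch turns each per-layer constraint into a system NLMS need not solve exactly, breaking the identification of the inner minimum with $G$. I would also address attainment and uniqueness of the various $\argmin$s, assuming minimizers exist and arguing at the level of minimizing sets where uniqueness fails, so that the chain of equalities between single points is justified.
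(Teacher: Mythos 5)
Your proposal is correct, and its overall skeleton matches the paper's own proof (theorem \ref{thrm:GILisImpSGD} in the appendix): the equality $\theta^{(b+1)}_{IL} = \theta^{(b+1)}_{IL-prox}$ is delegated to theorem \ref{thrm:ILisProx} exactly as you do, and the equality $\theta^{(b+1)}_{prox} = \theta^{(b+1)}_{IL-prox}$ rests on lemma \ref{lem:argminFW} and proposition \ref{prop:NLMSsol}, i.e., on the fact that the inference-phase activities $\hat h$ are precisely the FF activities of the NLMS-updated parameters $\theta^*$. Where you genuinely differ is in how the reduction from the unconstrained proximal problem over $\theta$ to the problem over $\hat h$ is justified. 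The paper argues informally that IL-prox performs the proximal optimization ``in reverse order,'' and that because $\hat h$ ranges over an unbounded set, the induced set of candidate $\theta^*$ is unbounded as well, so the two problems optimize ``over the same space'' and share a minimizer. Taken literally, that step has a gap: the NLMS image $\{\theta^*(\hat h)\}$ is a strict subset of weight space, and unboundedness alone does not show that minimizing over this subset attains the global proximal minimum. Your fiber decomposition supplies exactly the missing argument: every $\theta$ lies in the unique fiber of its own FF activities, $L$ is constant on each fiber, and by proposition \ref{prop:NLMSsol} the NLMS point is the penalty-minimizing element of its fiber, so $\min_\theta \bigl(L(\theta) + \tfrac{1}{2\alpha}\Vert\theta - \theta^{(b)}\Vert^2\bigr) = \min_{\hat h} G(\hat h)$ and the proximal minimizer must itself be an NLMS point. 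This is the same idea the paper gestures at, carried out rigorously, and it also surfaces the side conditions the paper leaves implicit (nonempty fibers, i.e.\ $f(\hat h_n) \neq 0$, which the paper's bias-concatenation convention guarantees, plus attainment and uniqueness of the various minimizers). One minor imprecision: each constraint $W_n f(\hat h_n) = \hat h_{n+1}$ is not a single linear equation but one equation per row of $W_n$; since all rows share the regressor $f(\hat h_n)$, the minimum-norm solution is still the rank-one NLMS update, so nothing in your argument breaks.
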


This theorem states that the IL-prox update is equivalent to the proximal update (equation \ref{eq:proximal1}), and thus implicit SGD. It follows that, under the $\gamma$ settings in theorem \ref{thrm:ILisProx} the G-IL update is also equivalent to the proximal update/implicit gradient update.

Further theoretical results are developed in the appendix. A novel closed form description of IL targets is developed \ref{app:GNG-ILprop}. The closed form description shows local targets approximate a regularized Gauss-Newton update on FF activities (see section \ref{sec:closedDesc}). We call this closed form approximation IL-GN. We study the stability of IL-GN when using the LMS learning rule, and compare to an analogous BP network, which uses Gauss-Newton updates (BP-GN). We show that IL-GN weight updates push output layer activities down their loss gradients \textit{for any positive learning rate}, i.e., for any positive learning rate $\hat{h}_N^{(b)} - \hat{h}_N^{(b+1)} = j \frac{\partial L}{\partial \hat{h}_N^{(b)}}$ where $j$ is a positive scalar (theorem \ref{thrm:main2}). We show the same is not true of BP-GN, which only has this property for a small range of learning rates (theorem \ref{thrm:main3}). We develop a partial explanation of why IL is more stable than BP that is based on differences in their learning rule (for details see section \ref{app:Interfere}). These results make the important point that the way IL computes and uses local targets in its weight update significantly contributes to its stability, while the way G-BP computes and uses local targets in its weight update (which is distinct from IL, see section 2) is largely responsible for its instability. The IL learning rule and method of computing $\hat{h}$ is advantageous over that of G-BP in this sense.

\section{Experiments}
\begin{figure}[ht]
\centering
 \includegraphics[width=\textwidth]{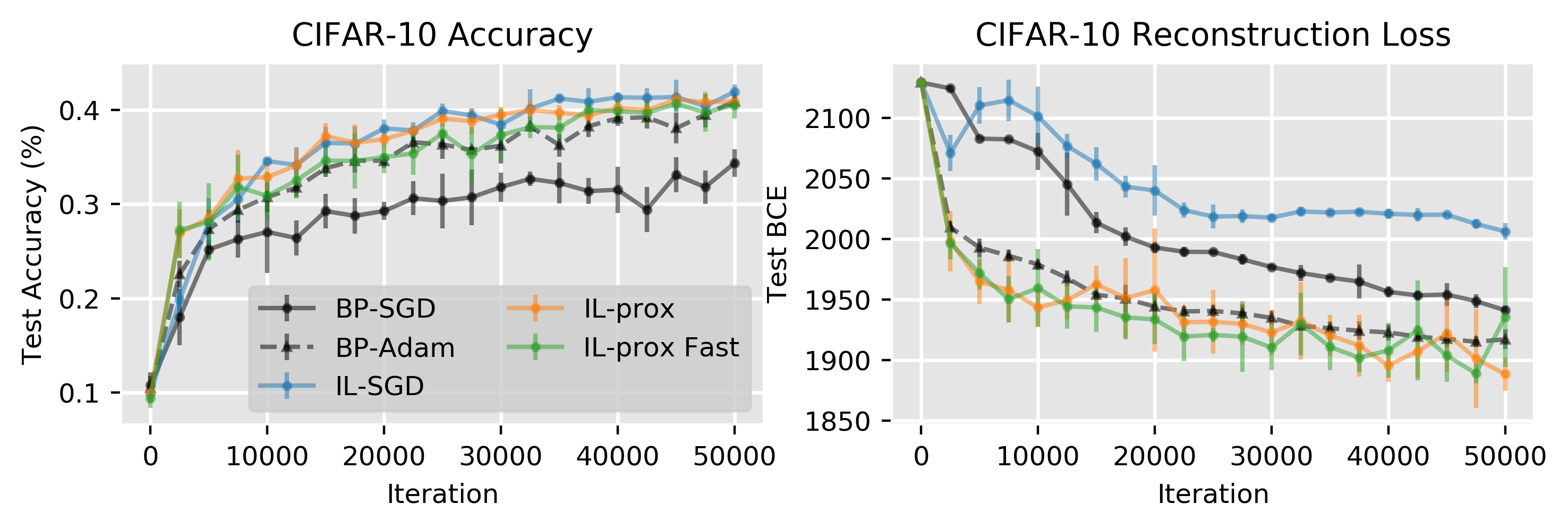}
\vspace{2pt}
\caption{\textbf{(Left)} Test accuracies on CIFAR-10 with mini-batch size 1 for one epoch of training. \textbf{(Right)} Reconstruction losses for autoencoders on CIFAR-10 for over one epoch of training, mini-batch size 1.}\label{fig:CIFARComb}
\end{figure}

\begin{wraptable}{r}{0.6\linewidth}
\centering
  \begin{tabular}{c c c}
    \toprule
    \multicolumn{3}{c} {CIFAR-10 Test Accuracy (mean$\pm$std.)}\\
    \toprule
    Model & m-batch size=1 & m-batch size=64 \\
    \toprule
    BP-SGD & $36.834 (\pm.478)$ & \textbf{57.044} $(\pm.144)$\\
    IL-SGD & \textbf{43.894} $(\pm.371)$ & $46.924 (\pm.248)$ \\
    IL-prox & $42.060 (\pm.412)$ & $49.844 (\pm.359)$ \\
    IL-prox Fast & \textbf{42.984} $(\pm.530)$ & $46.094 (\pm.228)$ \\
    \midrule
    BP-Adam & $42.102 (\pm.770)$ & \textbf{56.466} $(\pm.228)$\\
    IL-Adam & $40.724 (\pm.160)$ & $54.066 (\pm.190)$\\
    IL-prox + Adam & $42.802 (\pm.390)$ & $55.572 (\pm.172)$\\
    \bottomrule
\end{tabular}
\caption{Best test accuracy on CIFAR-10 supervised task in two conditions. First, mini-batch size 1 trained for 50,000 iterations (1 epoch). Second, is mini-batch size 64 trained for 77,000 iteration ($\approx$ 100 epochs). Fully connected networks with layer sizes 3072-3x1024-10 were used. Top two scores highlighted in bold.}\label{tab:CifarAcc}
\end{wraptable}
In this section, we test and compare BP based algorithms to several IL algorithms. BP algorithms compute and use explicit gradients to update weights, while IL algorithms compute and use approximate implicit gradients to perform updates. We test BP models that either perform simple gradient descent  (\textbf{BP-SGD}) or use Adam optimizers (\textbf{BP-Adam}). \textbf{IL-SGD} is a simple implementation of IL that is similar to implementations of \cite{rao1999predictive, whittington2017approximation}. It uses the LMS rule (local error gradient update) to update weights and optimizes $\hat{h}$ using SGD to near convergence (25 update steps). \textbf{IL-Prox} is our novel algorithm that is based on definition \ref{def:il-prox}. IL-prox uses the NLMS rule to update weights. As theorem \ref{thrm:ILisProx} shows, to minimize the proximal loss, the learning rate is only used to determine how much the output layer activities $\hat{h}_N$ are pushed toward the target $y$. As $\alpha \rightarrow \infty$, $\hat{h}_N \rightarrow y$ and as $\alpha \rightarrow 0$, $\hat{h}_N \rightarrow h_N$. IL-Prox optimizes $\hat{h}$ using SGD to near convergence (25 update steps). \textbf{IL-Prox Fast} is the same as the IL-Prox algorithm except it truncates the optimization of $\hat{h}$ to only 12 iterations. \textbf{IL-Adam} computes the weight gradients using IL-SGD, then uses Adam optimizers to update weights. \textbf{IL-prox Adam} computes the normalized weight gradient using IL-prox and uses Adam optimizers to update weights. All simulations average results over at least 5 seeds of each model type. Best test scores from these averages are shown in tables. All models use ReLU activations at hidden layers. Softmax is used at output layer for classification, while sigmoid is used on the autoencoder task.
\begin{table}[ht]
\centering
\begin{adjustbox}{width=1\textwidth}
  \begin{tabular}{c c c c c}
    \toprule
    \multicolumn{5}{c} {Reconstruction BCE - CIFAR-10 (mean$\pm$std.)}\\
    \toprule
    BP-SGD & BP-Adam & IL-SGD & IL-prox & IL-prox fast\\
    \toprule
    $1937.884 (\pm1.826)$ & $1912.202 (\pm3.548)$ & $2004.062 (\pm5.543)$ & \textbf{1878.463} $(\pm3.641$)  & \textbf{1875.899} $(\pm3.134)$\\
    \bottomrule
\end{tabular}
\end{adjustbox}
\caption{Best test reconstruction loss after one epoch of training on with CIFAR-10 images, mini-batch size 1. BCE was summed across pixels and averaged across data-points. Fully connected network layer sizes 3072-1024-512-100-512-1024-3072. Top two scores highlighted in bold.}\label{tab:SelfSupBestOnl}
\end{table}

\textbf{Supervised and Self-Supervised Learning} We begin by training these models on supervised classification tasks. Results for CIFAR-10 are summarized in table \ref{tab:CifarAcc}. Consistent with previous results \cite{whittington2017approximation, alonso2021tightening}, in the case where large mini-batches are used, IL algorithms achieve near equal accuracy as BP-SGD and BP-Adam when using Adam optimizers. But IL-SGD and IL-prox algorithms, which do not use Adam optimizers, do not perform as well as BP algorithms in this scenario. However, in the case where they are trained with mini-batch size 1, IL algorithms improve accuracy significantly faster than BP-SGD for the entire first epoch (50,000 iterations) of training. IL-SGD and IL-prox algorithms perform similar to and even slightly better than BP-Adam, despite not using momentum or stored, parameter-wise adaptive learning rates like Adam. As far as we know, this is a novel result. Previous works have not tested IL in case where small mini-batches are used (e.g., \cite{whittington2017approximation, alonso2021tightening}). To further test this improvement in optimization speed, we also trained these models on MNIST and fashion-MNIST classification tasks (table \ref{tab:AccBestOnl}), and we trained autoencoders on CIFAR-10 images (table \ref{tab:SelfSupBestOnl}), with mini-batch size 1. Slight speed-ups in training were seen on MNIST data sets (see figure \ref{fig:AccMNIST}). On the autoencoder task with CIFAR-10, we found again IL-prox algorithms improve loss as quickly and even slightly quicker than BP-Adam (see figure \ref{fig:CIFARComb}).

\textbf{Stability Test} Implicit SGD is unconditionally stable, which, roughly, means that it is able to discount the loss in a stable manner for nearly any positive learning rate \cite{toulis2014implicit}. We compare performance of BP-SGD, IL-SGD, and IL-prox on MNIST (table \ref{tab:AccStab1}) and CIFAR-10 (table \ref{tab:AccStab1Cif}) classification task across different learning rates. As a control, we also test a hybrid of BP and IL-prox (BP-prox). BP-prox, like IL-prox, uses the learning rate only to adjust the target at the output layer and uses the NLMS rule to update weights. Unlike IL-prox, BP-prox computes local targets using the G-BP algorithm rather than the IL algorithm. This control helps to show the stability of IL-prox is not only due to normalizing the weight updates and softly clamping the output layer. Rather, the way IL computes and uses local targets in its learning rule also contributes to stability. Results can be found in table \ref{tab:AccStab1} and \ref{tab:AccStab1Cif}. Blank entries are those with accuracies below $12\%$. IL-SGD is more stable than BP-SGD, and IL-prox is more stable than BP-prox. These results provide evidence that the way IL computes and uses targets contributes to stability. Additionally, the prox algorithms are more stable than SGD algorithms, showing that using the normalized gradient to update weights and $\alpha$ to adjust the output layer target rather than scale weight updates also contributes to stability.
\begin{table}[ht]
\centering
\begin{adjustbox}{width=1\textwidth}
  \begin{tabular}{c c c c c c c c}
    \toprule
    \multicolumn{7}{c} {Stability Test: CIFAR-10 Test Accuracy} \\
    \toprule
    Model & lr=.01 & lr=.1 & lr=1 & lr=2.5 & lr=10 & lr=100 \\
    \toprule
    BP-SGD & $ 35.48(\pm1.65)$ & $-$ & $-$ & $-$ & $-$ & $-$\\
    IL-SGD & $41.66 (\pm1.65)$ & $36.268 (\pm.271)$ & $-$ & $-$ & $-$ & $-$\\
    \midrule
    BP-prox & $38.39 (\pm2.91)$ & $23.978 (\pm1.57)$ & $12.89 (\pm1.1)$ & $12.10 (\pm1.61)$ & $12.21(\pm1.61)$ & $-$\\
    IL-prox & $34.97 (\pm.67)$ & $33.47 (\pm2.56)$ & $37.38 (\pm1.46)$ & $37.12 (\pm1.81)$ & $37.01 (\pm1.01)$ & $37.64 (\pm.74)$\\
    \bottomrule
\end{tabular}
\end{adjustbox}
\caption{Accuracies after 50,000 training iterations on CIFAR-10, mini-batch size 1. Fully connected networks with layer sizes 3072-3x1024-10.}\label{tab:AccStab1Cif}
\end{table}

\begin{figure}[ht]
\centering
\includegraphics[width=\textwidth]{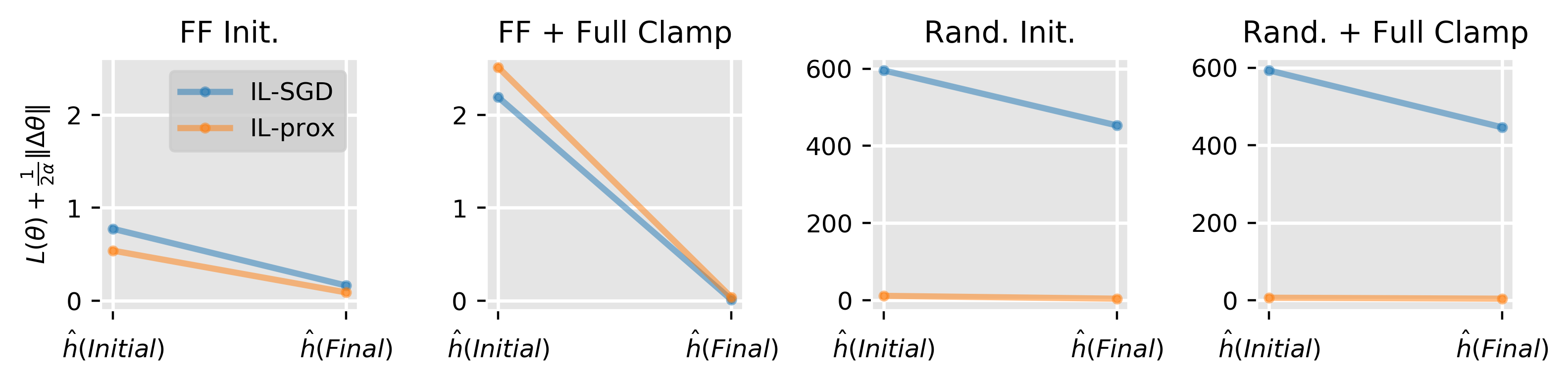}
\caption{We measure how the proximal loss changes during the minimization of F w.r.t. activities $\hat{h}$ under four different initializations. First, all $\hat{h}$ are initialized to $h$ (FF Init.). Second, hidden layer $\hat{h}$ are initialized to $h$, while output layer $\hat{h}_n = y$ (FF + Full Clamp). For both of these initialization, the proximal quantity is reduced during inference to near zero. We next initialized $\hat{h}$ randomly (Rand Init. and Rand + Full Clamp). Both networks reduce the proximal loss, but nowhere near the global minima, which may help explain why the network needs to be initialized to FF activities when $F$ is minimized with SGD. These results generally support theorem \ref{thrm:ILisProx}, which states minimizing F minimizes the proximal loss.}
\end{figure}\label{fig:optActs}

\textbf{Further Results} We computed the update norms $\Vert \Delta \theta \Vert$ during training of each model on the CIFAR-10 classification task. Average magnitudes and standard deviations are shown in appendix \ref{app:furtherRes}. IL algorithms had similar or smaller update magnitudes than BP-SGD on most of their training runs. This suggests IL algorithms do not minimize loss more quickly because their updates are greater in magnitude than BP-SGD. One possibility is that IL updates take a more direct path to local minima than the path of steepest descent. This is consistent with other experiments in which we analyzed the output layer of BP-SGD and IL-SGD. We measured how the FF output layer values $h_N$ changed after updates. IL updates tended to change the output layer values more and in a direction nearer the minimum norm (shortest) path to that target $y$ than BP-SGD updates (figures \ref{fig:outputlayer}, \ref{fig:outputlayer2}).

\section{Limitations}\label{sec:lim}
Our theoretical results show that IL algorithms closely approximate implicit SGD in the case where only a single data-point (i.e., mini-batch size 1) is presented each training iteration. This approximation is looser in the case where large mini-batches are used (see appendix \ref{app:minibatch}). This is no problem from a biological point of view, since the brain does not train with large mini-batches. Additionally, when Adam optimizers are used IL still performs similarly to BP. However, it may still be desirable to further study the optimization properties of IL with mini-batches to improve performance on machine learning tasks where mini-batches are typically used.

\section{Discussion}
IL was originally used to train predictive coding models of cortical circuits \cite{rao1999predictive}. The success of subsequent predictive coding models lead some neuroscientists to propose the hypothesis that predictive coding and IL are canonical computations used throughout the brain (e.g., \cite{friston2005theory, friston2010free, bastos2012canonical, keller2018predictive}). In this paper, we provided novel mathematical justification for IL by showing it closely approximates a proper optimization method known as implicit SGD, which is distinct from the explicit SGD executed by BP. This finding, in conjunction with the theory IL and predictive coding are canonical computations in the brain, suggests the interesting possibility that implicit SGD, and not explicit SGD, is the central optimization method utilized by biological neural circuits. We observed two performance advantages that fit with this possibility. First, consistent with our theoretical interpretation, IL algorithms are more stable across learning rates than BP. This property is compatible with the fact that synaptic plasticity in the brain can fluctuate rapidly, e.g. due to neuromodulation, yet maintains stable learning properties. The stability of IL is arguably its main advantage over BP, which is consistent with claims that the main advantage of implicit SGD over standard/explicit SGD is its stability across learning rates (e.g., see \cite{toulis2014implicit, toulis2016towards}). Second, we found IL minimized the loss more quickly than BP in the more biologically realistic scenario where single data points, rather than large mini-batches, are presented each training iteration. These simulations provide evidence that IL is more compatible with biologically realistic learning scenarios than BP. Collectively, our findings suggest IL is a promising basis for developing biologically constrained, high performing, local learning algorithms.

\section{Broader Impacts}\label{sec:broadImp}
Our work is largely theoretical so it does not suggest any direct, possibly negative, societal impacts, as far as we can see.

\bibliographystyle{plain}
\bibliography{neurips_2022}

\newpage
\appendix
\appendixpage
\startcontents[sections]
\printcontents[sections]{l}{1}{\section*{Sections}\setcounter{tocdepth}{2}}

\section{Preliminaries}
The theoretical analyses developed in this appendix focuses on the learning scenario where single data-points are presented to an MLP each training iteration (i.e., mini-batch size 1). We focus on this scenario in large part because it is more biologically realistic than training with large mini-batches, and a central goal of the paper is to analyze IL in order to better understand how optimization and credit assignment may work in the brain. Other analyses of learning algorithms make similar simplifying assumptions (e.g., see \cite{meulemans2020theoretical}). We discuss the case of mini-batches briefly in section \ref{app:minibatch}, and our empirical simulations in this paper and previous works in the literature show that IL algorithms are able to perform competitively with BP when trained on mini-batches (e.g., see \cite{whittington2017approximation, alonso2021tightening, salvatori2022learning} as well as in the scenario where single data-points are presented. This suggests using IL with large mini-batches is approximated by the case of training with single data points. We leave it to future work to analyze differences in the behavior of IL in the case of small versus large mini-batches in more detail.

Our analyses considers the version of IL that uses local predictions computed $p_{n+1} = W_n f(\hat{h}_n)$ at hidden layers and $p_{1} = W_0 \hat{h}_0$ at the input layer. This kind of prediction can be interpreted as a prediction of neuron pre-activations. This is slightly different from predictions of neuron activations: $p_{n+1} = f(W_n \hat{h}_n)$. Although we only analyse the pre-activation version of the prediction, we find the two behave very similarly in practice (e.g., our IL-SGD model uses prediction of activations, while our IL-prox models use predictions of pre-activations and both achieve similar performance on a variety of tasks). Thus, our analyses here should apply approximately to IL models that use the activation prediction equation. 

Finally, we will sometimes use two slightly different expressions to describe the loss produced by MLP parameter $\theta$. First, the form $L(\theta^{(b)}, x^{(b)}, y^{(b)})$ refers to the loss produced by MLP parameters $\theta^{(b)}$ at training iteration $b$, given data point $x^{(b)}$ and prediction target $y^{(b)}$. This loss generally measures the difference between the FF activity at the output layer $h_N$ and target $y$. Second, the form $L(\hat{h}_N, y^{(b)})$ refers to the loss between the output layer optimized activities $\hat{h}_N$ and target $y^{(b)}$. Similarly, $L(h_N^{(b)}, y^{(b)})$ the loss between output layer FF activity and the target. Note that both $h_N$ and $\hat{h}_N$ are initialized to $W_{N-1}f(h_{N-1})$. We generally assume some non-linearity is applied to these activities at the output layer before being inputted into loss. For example, in the case where cross entropy loss is used, one would compute $L(\sigma(h_N^{(b)}), y^{(b)})$, where $\sigma$ may be sigmoid in the case binary cross entropy or softmax in the case a categorical cross entropy loss. To simplify notation, we do not write this $\sigma$ below, but assume it is 'built' into the loss as needed. None of the proofs below are affected by this assumption.

\section{G-IL and Implicit SGD}

\subsection{A Brief Introduction to Implicit SGD}\label{app:introISGD}

Let $\theta^{(b)}$ be a set of parameters, $x^{(b)}$ input data, and global output target is $y^{(b)}$, where $b$ is the current training iteration. The explicit SGD update uses the loss gradient produced by the current parameters:
\begin{equation}\label{eq:explicit}
    \theta^{(b + 1)} = \theta^{(b)} - \alpha \frac{\partial L(\theta^{(b)}, x^{(b)}, y^{(b)})}{\partial \theta^b},
\end{equation}
where $\alpha$ is step size and $L$ is the function being minimized. This update is explicit because the gradient can be readily computed given $\theta^{(b)}, x^{(b)}, y^{(b)}$. The \textit{implicit} SGD update uses the loss gradient of the parameters at the next training iteration:
\begin{equation}\label{eq:implicitApp}
    \theta^{(b + 1)} = \theta^{(b)} - \alpha \frac{\partial L(\theta^{(b+1)}, x^{(b)}, y^{(b)})}{\partial \theta^{(b+1)}}.
\end{equation}
This is an implicit update because $\theta^{(b+1)}$ shows up on both sides of the equation. Unlike explicit SGD, $\theta^{(b+1)}$ cannot be readily computed using available quantities. One can still perform an implicit gradient update, however, by computing the solution of the following optimization problem:
\begin{equation}\label{eq:proximal}
    \theta^{(b + 1)} = \argmin_{\theta} (L(\theta, x^{(b)}, y^{(b)}) + \frac{1}{2\alpha} \Vert \theta - \theta^{(b)} \Vert^2).
\end{equation}
This update is known as the proximal algorithm or proximal point method \cite{parikh2014proximal}. The proximal algorithm turns the general optimization problem of minimizing $L$ across all the data set into a series of sub-optimization problems. This algorithm is a specific application of the more general proximal operator:
\begin{equation}
    prox_{\alpha f}(z) = \argmin_{\theta} (f(\theta) + \frac{1}{2\alpha} \Vert \theta - z \Vert^2),
\end{equation}
where $f$ is a function with scalar output. We can see the proximal algorithm changes parameters $\theta^{(b)}$ in a way that minimizes the loss $L$ and the magnitude of the update, which helps keep $\theta^{(b+1)}$ in the proximity of $\theta^{(b)}$. We can also see the proximal algorithm only requires using the known quantities $\theta^{(b)}, x^{(b)}, y^{(b)}$. 

Equation \ref{eq:proximal} is closely related to incremental proximal methods \cite{bertsekas2011incremental}, which is a stochastic version of the standard proximal algorithm, which usually performs batch updates. The equivalence of the proximal algorithm in the stochastic setting to the implicit SGD update (equation \ref{eq:implicitApp}) can be shown easily using the fact that at the minimum $\theta = \theta^{(b+1)}$ and the gradient of \ref{eq:proximal} is zero:
\begin{equation}\label{eq:implicitAsProx}
    \begin{split}
        0 &= \frac{\partial L}{\theta^{(b+1)}} + \frac{1}{\alpha}(\theta^{(b+1)} - \theta^{(b)})\\
        \theta^{(b+1)} &= \theta^{(b)} - \alpha \frac{\partial L}{\theta^{(b+1)}},
    \end{split}
\end{equation}
where $L = L(\theta^{(b+1)}, x^{(b)}, y^{(b)})$.

Explicit and implicit SGD have similar convergence guarantees. Analysis of the convergence properties of implicit SGD in linear regression models was done by \cite{toulis2014implicit, toulis2017asymptotic}, in generalized linear models by \cite{toulis2014statistical}. However, explicit and implicit SGD have importantly distinct stability properties. It is well-known that explicit SGD is highly sensitive to learning rate. Implicit SGD, on the other hand, is \textit{unconditionally stable} in the sense that equation \ref{eq:proximal} monotonically decreases the loss function $L$ for $0 < \alpha$, as it is implied by equation \ref{eq:proximal} that $L(\theta^{(b+1)}) + \frac{1}{\alpha 2}\Vert \theta^{(b+1)} - \theta^{(b)}\Vert^2 \leq L(\theta^{(b)})$. One can also gain an intuition for the unconditional stability of implicit SGD by noting the differences in the way the learning rate $\alpha$ is used in the explicit SGD update (equation \ref{eq:explicit}) versus the proximal algorithm/implicit update (equation \ref{eq:proximal}). In the explicit SGD update $\alpha$ is used to scale the gradient, which clearly implies that as $\alpha \rightarrow \infty$ so does the magnitude of the update. However, with the proximal algorithm $\alpha$ only controls the relative weighting of the two terms in equation \ref{eq:proximal}. As $\alpha \rightarrow \infty$ we see the regularization term $\frac{1}{2\alpha} \Vert \theta - \theta^{(b)} \Vert^2$ is down-weighted to 0. In this case, the updated parameters are simply those that best minimize the loss function $L$, which lead to an update that reduces the loss given $x^{(b)}$. For further details on the stability of explicit and implicit SGD in linear models see \cite{toulis2014implicit, toulis2014statistical, toulis2017asymptotic}.

\subsection{G-IL Local Targets as Target FF Activities}
In this section, we show that when a single datapoint is presented each training iteration, there are two interesting properties concerning the local targets computed by IL and the NLMS rule used by IL. We express these properties in lemma \ref{lem:argminFW} and proposition \ref{prop:NLMSsol}. This lemma and proposition are used in proofs below.

First, we show that IL local targets $\hat{h}^{(b)}_n$, at training iteration $b$, become the FF activities $h_n^{(b+1)}$ at the next training iteration given the same data point when weights are updated to fully minimize local prediction errors. In this sense, IL local targets are target FF activities.

Let $\theta^{(b)}$ be the set of MLP weight matrices $\theta^{(b)} = [W_0^{(b)}, ..., W_{N-1}^{(b)}]$. Let the solution parameters at iteration $b$ be $\theta^{*} = \argmin_{\theta} F$, where $F$ is the energy function defined in equation \ref{eq:FEnergy}. In the IL algorithm, minimizing $F$ w.r.t. weight $W_n$ equates to updating weights in a way that minimizes the local prediction error $e_{n+1} = \frac12 \Vert \hat{h}_{n+1} - p_{n+1} \Vert^2$ (see section \ref{sec:GIL}). In the case where the MLP is only presented with a single data-point each iteration, this local learning problem can be solved such that there is zero error. More specifically, let $W_n^*$ be the weight matrix at layer $n$ of $\theta^*$. When training with single data-points, this solution weight matrix has the property $\hat{h}^{(b)}_{n+1} = W_n^{*} f(\hat{h}^{(b)}_n)$ (for an example of one solution with this property see proposition \ref{prop:NLMSsol}). It follows trivially that $\hat{h}_n$ are equivalent to the FF activities of $\theta^*$ when given the same data-point $x^{(b)}$ as input:

\begin{lemma}\label{lem:argminFW}
    Consider a non-linear MLP trained with G-IL at iteration $b$ with mini-batch size 1. Let $h^*_n$ be the FF activities of solution parameters $\theta^*$ at layer $n$ given data-point $x^{(b)}$. It is the case that $h^*_n = \hat{h}^{(b)}_n$.
\end{lemma}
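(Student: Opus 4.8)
The plan is to argue by induction on the layer index $n$, exploiting the fact that during the weight-minimization phase the optimized activities $\hat{h}^{(b)}_n$ are held fixed, so each $f(\hat{h}^{(b)}_n)$ is a constant vector and the minimization of $F$ over every $W_n$ reduces to an ordinary linear least-squares problem. This is the observation that makes the hypothesis ``non-linear MLP'' harmless: the nonlinearity only ever acts on frozen activities.

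First I would show that $F$ decouples over the weight matrices. The global loss $L(y,\hat{h}_N)$ and each regularizer $\gamma^{decay}_n \frac12 \Vert \hat{h}_n \Vert^2$ do not depend on any $W_n$, and a given $W_n$ appears only in the single prediction-error term $\gamma_{n+1}\frac12\Vert \hat{h}_{n+1} - p_{n+1}\Vert^2$ with $p_{n+1} = W_n f(\hat{h}_n)$. Hence $\theta^* = \argmin_\theta F$ is obtained layerwise, by minimizing each $\frac12\Vert \hat{h}^{(b)}_{n+1} - W_n f(\hat{h}^{(b)}_n)\Vert^2$ over $W_n$ independently; the positive scalars $\gamma_{n+1}$ rescale but do not move the minimizer.

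Next I would establish the key zero-residual property: with mini-batch size $1$ this per-layer problem is solvable exactly. Because there is a single pre-synaptic vector $f(\hat{h}^{(b)}_n)$, the NLMS solution of proposition \ref{prop:NLMSsol}, namely $W_n^* = W_n^{(b)} + \Vert f(\hat{h}^{(b)}_n)\Vert^{-2} e_{n+1} f(\hat{h}^{(b)}_n)^T$, satisfies $W_n^* f(\hat{h}^{(b)}_n) = \hat{h}^{(b)}_{n+1}$, so the local error at layer $n+1$ vanishes. I expect verifying this exact solvability—and confirming that it holds simultaneously at every hidden layer—to be the main point to get right, but since all activities are frozen it amounts to an elementary rank argument for one linear constraint per output coordinate rather than anything involving $f$ or the interaction between layers.

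Finally I would run the induction over the feedforward pass of $\theta^*$, defined by $h^*_0 = x^{(b)}$ and $h^*_{n+1} = W_n^* f(h^*_n)$. For the base case, both the feedforward pass and the IL initialization set the input layer to the data point, so $h^*_0 = x^{(b)} = \hat{h}^{(b)}_0$. For the inductive step, assuming $h^*_n = \hat{h}^{(b)}_n$, I would compute $h^*_{n+1} = W_n^* f(h^*_n) = W_n^* f(\hat{h}^{(b)}_n) = \hat{h}^{(b)}_{n+1}$, invoking the inductive hypothesis and then the zero-residual property. This closes the induction and yields $h^*_n = \hat{h}^{(b)}_n$ at every layer, as claimed.
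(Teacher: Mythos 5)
Your proposal is correct and takes essentially the same route as the paper: the paper also inducts over layers from the base case $h^*_0 = \hat{h}^{(b)}_0 = x^{(b)}$, using the zero-residual property $\hat{h}^{(b)}_{n+1} = W_n^{*} f(\hat{h}^{(b)}_n)$ of the solution weights, which it takes from the surrounding discussion and defers to Proposition \ref{prop:NLMSsol}. The two supporting facts you spell out explicitly — the layerwise decoupling of $F$ once activities are frozen, and the direct check that the NLMS update attains zero local error with a single data point — are exactly the content the paper relegates to the text preceding the lemma and to that proposition, so your write-up is the same argument with those steps made explicit.
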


\begin{proof}
At initialization, $\hat{h}_0^{(b)} = x^{(b)}$ and $h_0^{*} = x^{(b)}$. Thus, $\hat{h}_0^{(b)} = h_0^{*}$. Next, by definition at the input layer $\hat{h}^{(b)}_{1} = W_0^{*} \hat{h}^{(b)}_0$. This implies that $\hat{h}^{(b)}_{1} = W_0^{*} \hat{h}^{(b)}_0 = W_0^{*} \hat{h}^{*}_0 = h^*_1$. This property holds for remaining layers because by definition $\hat{h}^{(b)}_{n+1} = W_n^{*} f(\hat{h}^{(b)}_n)$, which implies $\hat{h}^{(b)}_{2} = W_1^{*} f(\hat{h}^{(b)}_1) = W_1^{*} f(h^{*}_1) = h^*_2$. This same procedure can then be repeatedly applied to show the same is true of all remaining layers $n$.
\end{proof}

The solution matrices $W^*_n$ are non-unique in the case where single-data points are used. However, there are unique minimum-norm solution matrices, which can be computed using the NLMS rule (equation \ref{eq:NLMSUpdate}) as we now show.

\begin{proposition}\label{prop:NLMSsol}
Consider an non-linear MLP trained with G-IL under the NLMS rule (equation \ref{eq:NLMS1}) at iteration $b$ mini-batch size 1. For all $n$, the updated matrix $W_n^{(b+1)}$ is the minimum norm solution matrix, i.e. $W_n^{(b+1)} = \argmin_{W_n^{(b+1)}} \frac12 \Vert W_n^{(b+1)} - W_n^{(b)}\Vert^2$, subject to the constraint that $\hat{h}_n^{(b)} = W_n^{b+1} f(\hat{h}^{(b)}_n)$.
\end{proposition}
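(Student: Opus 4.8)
The plan is to observe that, once the inference phase has frozen the activities $\hat{h}^{(b)}_n$, the weight update at each layer is a \emph{linear} problem in $W_n$ despite the network being nonlinear: the nonlinearity enters only through the fixed pre-synaptic vector $a_n := f(\hat{h}^{(b)}_n)$. Writing $t_n := \hat{h}^{(b)}_{n+1}$ for the target and $\Delta_n := W_n^{(b+1)} - W_n^{(b)}$ for the weight change, the interpolation constraint $\hat{h}^{(b)}_{n+1} = W_n^{(b+1)} f(\hat{h}^{(b)}_n)$ becomes the single affine matrix equation $\Delta_n a_n = t_n - W_n^{(b)} a_n = e_{n+1}$, while the objective is $\frac12\Vert W_n^{(b+1)} - W_n^{(b)}\Vert^2 = \frac12\Vert \Delta_n\Vert^2$. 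So the proposition reduces, layer by layer, to showing that the NLMS increment is the minimum-Frobenius-norm solution of $\Delta_n a_n = e_{n+1}$, and the ``for all $n$'' clause follows by applying the same single-layer argument at each layer.

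First I would write the NLMS increment explicitly from equation \ref{eq:NLMSUpdate}, namely $\Delta_n^* = \Vert a_n\Vert^{-2}\, e_{n+1} a_n^T$, and verify feasibility by the direct computation $\Delta_n^* a_n = \Vert a_n\Vert^{-2}\, e_{n+1}\,(a_n^T a_n) = e_{n+1}$. This step simultaneously confirms the zero-local-error (interpolation) property $W_n^{(b+1)} f(\hat{h}^{(b)}_n) = \hat{h}^{(b)}_{n+1}$ that makes Lemma \ref{lem:argminFW} applicable, so the same calculation serves both the present proposition and the earlier lemma.

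To establish minimality and uniqueness I would use an orthogonal-projection (Pythagorean) argument rather than Lagrange multipliers, since it is cleaner and yields uniqueness for free. The feasible set is the affine subspace $\Delta_n^* + \mathcal{N}$, where $\mathcal{N} = \{D : D a_n = 0\}$ is the tangent space. The key computation is that $\Delta_n^*$ is Frobenius-orthogonal to every $D \in \mathcal{N}$: using $(\Delta_n^*)^T = \Vert a_n\Vert^{-2}\, a_n e_{n+1}^T$ together with the cyclic property of the trace, $\langle \Delta_n^*, D\rangle_F = \Vert a_n\Vert^{-2}\,\mathrm{tr}(a_n e_{n+1}^T D) = \Vert a_n\Vert^{-2}\, e_{n+1}^T (D a_n) = 0$. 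Hence for any feasible $\Delta_n = \Delta_n^* + D$ the Pythagorean identity gives $\Vert\Delta_n\Vert^2 = \Vert\Delta_n^*\Vert^2 + \Vert D\Vert^2 \ge \Vert\Delta_n^*\Vert^2$, with equality if and only if $D = 0$. Thus $\Delta_n^*$, and therefore the NLMS-updated $W_n^{(b+1)}$, is the unique minimum-norm solution.

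The main obstacle is not depth but bookkeeping: the result is essentially the statement that NLMS realizes the Moore--Penrose pseudoinverse solution of a rank-one (single-sample) regression, so the two points needing care are (i) confirming that freezing the activities renders the per-layer problem linear, so the single-vector constraint is affine and not entangled across layers by $f$, and (ii) the Frobenius-inner-product manipulation in the orthogonality step, where trace-cyclicity must correctly collapse $e_{n+1}^T (D a_n)$ to zero. The bias convention (an appended $1$ and an extra weight column) changes nothing, as it only augments $a_n$.
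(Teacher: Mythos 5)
Your proof is correct, and it takes a genuinely different route from the paper's. The paper follows the classical NLMS derivation of Cho et al.: it forms the Lagrangian $P = \frac12 \Vert W_n^{(b+1)} - W_n^{(b)} \Vert^2 + \lambda e^{(b+1)}_{n+1}$, sets the partials with respect to $W_n^{(b+1)}$ and $\lambda$ to zero, solves for $\lambda$, and thereby \emph{derives} the NLMS update as the stationary point of the constrained problem. Your argument instead \emph{verifies} the known candidate: you check feasibility ($\Delta_n^* a_n = e_{n+1}$), show $\Delta_n^*$ is Frobenius-orthogonal to the nullspace $\mathcal{N} = \{D : D a_n = 0\}$ of the constraint map via trace-cyclicity, and conclude by the Pythagorean identity. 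Each approach buys something. The Lagrangian route is constructive — it produces the NLMS form rather than presupposing it — and it mirrors the cited literature. Your projection route is tighter on two points where the paper is informal: the paper writes a single scalar multiplier $\lambda$ against the vector-valued constraint $e_{n+1}^{(b+1)} = 0$ (strictly it should be a vector of multipliers $\lambda^T e_{n+1}$), and stationarity of the Lagrangian is only a necessary condition, with global optimality resting implicitly on convexity of the objective and affinity of the constraint; your Pythagorean inequality certifies global minimality directly and delivers uniqueness for free (equality iff $D=0$), which the paper never argues even though the proposition asserts \emph{the} minimum-norm solution. Your observation that the feasibility computation simultaneously establishes the interpolation property needed for Lemma \ref{lem:argminFW} is also a nice economy that the paper does not make explicit.
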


\begin{proof}
Here we follow the proof of \cite{cho2009derivation}, which proved a similar result concerning linear regression models trained with a variant of NLMS.

Using the method of Lagrangian multipliers we can rewrite $\argmin_{W_n} \frac12 \Vert W_n^{(b+1)} - W_n^{(b)}\Vert^2$, subject to $\hat{h}_n^{(b)} = W_n^{b+1} f(\hat{h}^{(b)}_n)$ as as follows

\begin{equation}
    \begin{split}
        P = \frac12 \Vert W_n^{(b+1)} - W_n^{(b)} \Vert^2 + \lambda e^{(b+1)}_{n+1},
    \end{split}
\end{equation}
where $\lambda$ is the Lagrangian multiplier and $e_{n+1}^{(b+1)} = \hat{h}_{n+1}^{(b)} - W_n^{(b+1)}f(\hat{h}_n^{(b)})$.
First we compute the gradient of P w.r.t. the weights and set to 0,
\begin{equation}\label{eq:NLMS1}
        \frac{\partial P}{\partial W_n^{(b+1)}} = W_n^{(b+1)} - W_n^{(b)}  + \lambda f(\hat{h}_n^{(b)})^T = 0.
\end{equation}

The partial w.r.t. the Lagrangian is then 
\begin{equation}\label{eq:NLMS2}
        \frac{\partial P}{\partial \lambda} = \hat{h}_{n+1}^{(b)} - W_n^{(b+1)}f(\hat{h}_n^{(b)}) = 0.
\end{equation}

Rearranging \ref{eq:NLMS1} we get
\begin{equation}\label{eq:NLMS3}
       W_n^{(b+1)}  = W^{(b)}_n - \lambda f(\hat{h}_n^{(b)})^T.
\end{equation}
Substituting \ref{eq:NLMS3} into \ref{eq:NLMS2} we get 
\begin{equation}\label{eq:NLMS4}
\begin{split}
      0 &= \hat{h}_{n+1}^{(b)} - (W^{(b)}_n - \lambda f(\hat{h}_n^{(b)})^T)f(\hat{h}_n^{(b)})\\
      \lambda &= -\Vert f(\hat{h}_n^{(b)}) \Vert^{-2} e_{n+1}^{(b)}\\.
\end{split}
\end{equation}
Finally, substituting the value for $\lambda$ into \ref{eq:NLMS3} we get

\begin{equation}
       W_n^{(b+1)}  = W^{(b)}_n + \Vert f(\hat{h}_n^{(b)}) \Vert^{-2} e_{n+1}^{(b)} f(\hat{h}_n^{(b)})^T,
\end{equation}

which is exactly equal to the NLMS rule used in the IL algorithm (equation \ref{eq:NLMSUpdate}). 
\end{proof}

In sum, when training with single data-points, activities $\hat{h}$ optimized by IL are target FF activities in the sense that they become FF activities after $F$ is minimized completely w.r.t. weights and local errors go to zero. The NLMS rule yields such a solution. One important implication of this is that the LMS update (equation \ref{eq:LMSUpdate}), which is commonly used in practice, is a good approximation of the NLMS solution since the two are proportional.

This lemma and proposition allow us to do something important, which lays the basis for the connection between implicit SGD and G-IL. In particular, this lemma and proposition show us that the network solution $\theta^*$, its FF activities $h_n^*$, and $\Delta \theta$ can be expressed implicitly in terms of $\hat{h}$. For example, lemma \ref{lem:argminFW} shows that $h_n^* = \hat{h}_n$. This implies that we know what the loss produced by $\theta^*$:
\begin{equation}\label{eq:losshhat}
L(\theta^*, x^{(b)}, y^{(b)}) = L(\hat{h}^{(b)}_N, y^{(b)}),
\end{equation}
where $L(\hat{h}_N, y^{(b)})$ is the loss measure, which in the case of MLPs is some measure of the difference between prediction target $y^{(b)}$ and FF output layer value, which for $\theta^*$ is equivalent to $\hat{h}_N$ (as implied by lemma \ref{lem:argminFW}).

Additionally, proposition \ref{prop:NLMSsol} shows that each $\Delta W_n$ can be expressed as $\Vert f(\hat{h}_n^{(b)}) \Vert^{-2} e_{n+1}^{(b)} f(\hat{h}_n^{(b)})^T$, where $e_{n+1}$ again is can be expressed in terms of optimized activities: $e_{n+1} = \hat{h}_{n+1}^{(b)} - W_n^{(b)}f(\hat{h}_n^{(b)})$. This means 
\begin{equation}\label{eq:reghhat}
\frac12 \Vert\theta^* - \theta^{(b)} \Vert^2 = \frac12 \sum_n^N  \Vert \Delta W_n \Vert^2 = \frac12 \sum_n^N  \Vert \Vert f(\hat{h}_n^{(b)}) \Vert^{-2} (\hat{h}_{n+1}^{(b)} - W_n^{(b)}f(\hat{h}_n^{(b)}))f(\hat{h}_n)^T \Vert^2.
\end{equation}
This allows for the possibility to minimize the proximal quantity w.r.t. $\hat{h}_n$. That is, it allows for an IL algorithm, where during the inference phase the algorithm approximates $\argmin_{\hat{h}} (L(\theta^*) + \frac{1}{2\alpha} \Vert \theta^* - \theta^{(b)} \Vert^2)$, where $L(\theta^*)$ and $\Vert \theta^* - \theta^{(b)} \Vert^2$ are defined implicitly in terms of $\hat{h}$ using equations \ref{eq:losshhat} and \ref{eq:reghhat}. At each step in the inference phase $\hat{h}$ are updated to minimize the proximal loss, which results in a change in $\theta^*$, since $\theta^*$ is defined in terms of the $\hat{h}$. After a minimum is reached weights are updated with the NLMS rule such that $\hat{h}$ values become the FF values given the same data point. The inference phase, in other words, finds the FF values of a $\theta^*$ that best minimizes the proximal loss, then uses those FF values to update the actual parameters $\theta^{(b)}$ such that $\theta^{(b+1)} = \theta^*$ (see figure \ref{fig:ILprocess} for visualization).

Let's define such an algorithm as proximal inference learning (IL-prox), since it, like G-IL, computes $\hat{h}_n$ by minimizing an objective w.r.t. activities, then updates weights afterward:
\begin{definition}
\textit{Proximal Inference Learning (IL-prox).} An algorithm identical to G-IL (algorithm \ref{alg:2}), except activities are optimized to approximate $\argmin_{\hat{h}_n} (L(\theta^*) + \frac{1}{2\alpha} \Vert \theta^* - \theta^{(b)} \Vert^2)$ and weights are updated using the NLMS rule.
\end{definition}
Again, the $\theta^*$ here is defined implicitly in terms of the $\hat{h}$ values using equations \ref{eq:losshhat} and \ref{eq:reghhat}. In the next, section we show that under certain variable settings it is the case that $\argmin_{\hat{h}} F = \argmin_{\hat{h}} (L(\theta^*) + \frac{1}{2\alpha} \Vert \theta^* - \theta^{(b)} \Vert^2)$.

\subsection{Generalized Inference Learning Approximates Proximal Inference Learning}

Let $\alpha$ be a 'global' learning rate hyper-parameter for $\theta$, and $\alpha_n$ be the layer-wise learning rate used in the weight update $\Delta W_n$. Finally, let $\theta^*$ be defined as the parameters after the NLMS update is applied to each weight matrix, as in the last section.

\begin{theorem}\label{thrm:ILisProxApp}
Let $\alpha_n = \Vert f(\hat{h}_n) \Vert^{-2}$. In the limit where $\gamma^{decay}_n \rightarrow  \Vert e_{n+1} \Vert^2 (1 - \frac{2}{\alpha_n^6})$, $\gamma_N \rightarrow \frac{\alpha_{N-1}}{\alpha}$, and $\gamma_n \rightarrow \alpha_{n-1}^{-1}$ for all $n < N$, it is the case that $\argmin_{\hat{h}} F = \argmin_{\hat{h}} L(\theta^*) +  \frac{1}{2\alpha}\Vert \theta^* - \theta^{(b)} \Vert^2$. Hence, in these limits, G-IL is equivalent to IL-prox. 
\end{theorem}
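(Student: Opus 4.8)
The plan is to turn the right-hand objective, which is written in terms of the updated weights $\theta^*$, into an explicit function of the activities $\hat{h}$ alone, and then compare it to $F$ through their stationarity conditions in $\hat{h}$. The two ingredients that make this possible are already in hand: Lemma~\ref{lem:argminFW} gives $L(\theta^*) = L(\hat{h}_N, y^{(b)})$, and Proposition~\ref{prop:NLMSsol} gives the closed form $\Delta W_n = \alpha_n e_{n+1} f(\hat{h}_n)^T$ with $\alpha_n = \Vert f(\hat{h}_n)\Vert^{-2}$. First I would collapse the weight-update norm: since each $\Delta W_n$ is an outer product, $\Vert \Delta W_n \Vert^2 = \alpha_n^2 \Vert e_{n+1}\Vert^2 \Vert f(\hat{h}_n)\Vert^2 = \alpha_n \Vert e_{n+1}\Vert^2$, using $\Vert f(\hat{h}_n)\Vert^2 = \alpha_n^{-1}$. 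Hence the proximal objective becomes
\[
L(\hat{h}_N, y^{(b)}) + \frac{1}{2\alpha}\sum_n \alpha_n \Vert e_{n+1}\Vert^2,
\]
a function of exactly the same variables $\hat{h}_1,\dots,\hat{h}_N$ that $F$ depends on. Because both this loss and $F$ are smooth in $\hat{h}$, it suffices to show their gradients vanish on the same set; concretely I would show $\partial F/\partial\hat{h}_k$ equals $\partial(\mathrm{prox})/\partial\hat{h}_k$ up to a fixed positive factor (which cannot move the zero locus) at every layer $k$, under the stated $\gamma$ values, and note that near the feed-forward initialization this common stationary point is the minimizer, so the $\argmin$ sets coincide.

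The output layer is the easy case and fixes $\gamma_N$. Since $\hat{h}_N$ enters the proximal loss only through $L(\hat{h}_N,y^{(b)})$ and the single error term $\alpha_{N-1}\Vert e_N\Vert^2$, and $\alpha_{N-1}$ does not depend on $\hat{h}_N$, matching the coefficient of $e_N$ against the term $\gamma_N \tfrac12\Vert e_N\Vert^2$ in $F$ yields directly $\gamma_N = \alpha_{N-1}/\alpha$.

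The hidden layers are the crux and where I expect essentially all the difficulty to lie. For $k < N$ the activity $\hat{h}_k$ appears in three places in the rewritten proximal loss: in $\Vert e_k\Vert^2$ as the clamped term, in $\Vert e_{k+1}\Vert^2$ through $p_{k+1}=W_k f(\hat{h}_k)$, and---crucially---inside the coefficient $\alpha_k=\Vert f(\hat{h}_k)\Vert^{-2}$ that multiplies $\Vert e_{k+1}\Vert^2$. The first two contributions have direct analogues in $\partial F/\partial\hat{h}_k$, so matching the coefficient of $e_k$ fixes $\gamma_k$ and the back-projected cross term $\mathrm{diag}(f'(\hat{h}_k))W_k^T e_{k+1}$ fixes its consistency with $\gamma_{k+1}$, reproducing the stated $\gamma_n$ for $n<N$. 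The third contribution, $\tfrac{1}{2\alpha}(\partial\alpha_k/\partial\hat{h}_k)\Vert e_{k+1}\Vert^2$, is the term with no naive counterpart in $F$, and the whole purpose of the decay/regularizer $\gamma^{decay}_k\tfrac12\Vert\hat{h}_k\Vert^2$ is to absorb it.

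Concretely I would compute $\partial\alpha_k/\partial\hat{h}_k = -2\alpha_k^2\,\mathrm{diag}(f'(\hat{h}_k))f(\hat{h}_k)$, invoke the ReLU identity $\mathrm{diag}(f'(\hat{h}_k))f(\hat{h}_k)=f(\hat{h}_k)$ to align this (in the limit) with the decay gradient $\gamma^{decay}_k\hat{h}_k$, and then solve for $\gamma^{decay}_k$; carrying along the remaining powers of $\Vert f(\hat{h}_k)\Vert^2=\alpha_k^{-1}$ is what produces the $\Vert e_{k+1}\Vert^2(1-2/\alpha_k^6)$-type expression. The delicate part is exactly this reconciliation---matching the vector $f(\hat{h}_k)$ coming out of the $\alpha_k$ derivative against the vector $\hat{h}_k$ appearing in the decay term, while keeping every factor of $\alpha_n$ straight---which is precisely why the result is phrased as a limit rather than an exact identity. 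Once all three coefficient matches hold simultaneously at every layer, the stationarity systems for $F$ and for the proximal loss coincide, giving $\argmin_{\hat{h}}F=\argmin_{\hat{h}}\big(L(\theta^*)+\tfrac{1}{2\alpha}\Vert\theta^*-\theta^{(b)}\Vert^2\big)$ and hence the equivalence of G-IL and IL-prox.
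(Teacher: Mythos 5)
Your proposal follows essentially the same route as the paper's proof: it uses Lemma~\ref{lem:argminFW} and Proposition~\ref{prop:NLMSsol} to express the proximal objective as a function of $\hat{h}$ alone, then matches the layer-wise stationarity conditions of $F$ against those of the proximal loss (the output layer fixing $\gamma_N = \alpha_{N-1}/\alpha$, the hidden layers fixing $\gamma_n$ and $\gamma^{decay}_n$, with the decay term absorbing the derivative through the learning rate $\alpha_k$), and concludes the $\argmin$ sets coincide because the gradients agree up to a positive factor that cannot move the zero locus. Your preliminary collapse $\Vert \Delta W_n \Vert^2 = \alpha_n \Vert e_{n+1} \Vert^2$ is just cleaner bookkeeping for the paper's three-way chain rule through $e_{n+1}$, $f(\hat{h}_n)^T$, and $\alpha_n$; otherwise the argument is the same.
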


\begin{proof}
To prove this statement, we show that $\frac{\partial F}{\partial \hat{h}_n} = 0 \Longleftrightarrow \frac{\partial Prox}{\partial \hat{h}_n} = 0$, which implies $\argmin_{\hat{h}} F = \argmin_{\hat{h}} L(\theta^*) +  \frac{1}{2\alpha}\Vert \theta^* - \theta^{(b)} \Vert^2$. We first compute the gradients of the IL energy function $F$ w.r.t. activities $\hat{h}$, then the gradient of the proximal loss, which is defined in terms of equation \ref{eq:losshhat} and \ref{eq:reghhat}. 

\textbf{Gradient of F} In IL networks, activities are updated with local gradients of $F$, as follows:

At the output layer,
\begin{equation}\label{eq:DervFEnergyOut}
    \frac{\partial F(\hat{h}_N)}{\partial \hat{h}_N} = \frac{\partial L(y, \hat{h}_N)}{\partial \hat{h}_N} + \gamma_N e_N.
\end{equation}
and at hidden layers:
\begin{equation}\label{eq:DervFEnergy}
    \frac{\partial F(\hat{h}_n)}{\partial \hat{h}_n} = - \gamma_{n+1} f'(\hat{h}_n)W_n^T e_{n+1} + \gamma_n e_n + \gamma_n^{decay} f'(\hat{h}_n) \hat{h}_n,
\end{equation}
where $f'(\hat{h}_n) = \frac{\partial f}{\partial \hat{h}_n}$. We see the gradient of the loss $L$ term is only taken w.r.t. the local output layer activity $\hat{h}_N$, and the gradient of prediction error at layers $n$ and $n+1$ are taken w.r.t. to local layer $n$. We assume whatever process is used to minimize $F$ similarly only uses local information such that its minima are described by the above gradients when they equal zero.

\textbf{Gradient of Prox w.r.t. Output Layer} We first compute gradients for the output layer. As with G-IL, we assume only local gradients of the proximal loss are used to update the activities. The gradients of the proximal loss w.r.t. output layer activity $\hat{h}_N$ is
\begin{equation}\label{eq:proxOut}
    Prox(\hat{h}_N) = \underbrace{L(\hat{h}_N, y)}_{prox_1} + \underbrace{\frac{1}{2\alpha} \Vert \Delta W_{N-1} \Vert^2}_{prox_2},
\end{equation}
since $\hat{h}_N$ is local to $L$ and since $W_{N-1}$ are the only weights local to $\hat{h}_N$.

The gradient $\frac{\partial Prox(\hat{h}_N)}{\partial \hat{h}_N}$, can be expressed in terms of $prox_1$ and $prox_2$ as follows: $\frac{\partial Prox(\hat{h}_N)}{\partial \hat{h}_N} = \frac{\partial prox_1(\hat{h}_N)}{\partial \hat{h}_N} + \frac{\partial prox_2(\hat{h}_N)}{\partial \hat{h}_N}$. Clearly, $\frac{\partial prox_1(\hat{h}_N)}{\partial \hat{h}_N} = \frac{\partial L(\hat{h}_N, y)}{\partial \hat{h}_N}$. The second term $ \frac{\partial prox_2(\hat{h}_N)}{\partial \hat{h}_N}$ can be computed using the chain rule. First,
\begin{equation}
    prox_2(\hat{h}_N) = \frac{1}{2\alpha} \Vert \Delta W_{N-1} \Vert^2 = \frac{1}{2\alpha}  \Vert  \alpha_{N-1} e_N f(\hat{h}_{N-1})^T \Vert^2.
\end{equation}

where $p_N = W_{N-1} f(\hat{h}_{N-1})$, f is an element-wise non-linearity, and $e_N = \hat{h}_N - p_{N}$. Then using the chain rule we have
\begin{equation}
    \frac{\partial prox_2}{\partial \hat{h}_N} = \frac{\partial prox_2}{\partial \Delta W_{N-1}} \frac{\partial \Delta W_{N-1}}{\partial e_{N}} \frac{\partial e_{N}}{\partial \hat{h}_{N}},
\end{equation}
where $\frac{\partial prox_2}{\partial \Delta W_{N-1}} = \frac{\alpha_{N-1}}{\alpha} e_n f(\hat{h}_{N-1})^T$ and $\frac{\partial \Delta W_{N-1}}{\partial e_N} \frac{\partial e_{N}}{\partial \hat{h}_{N}} = \alpha_{N-1} f(\hat{h}_{N-1})$. Together we get 

\begin{equation}
\begin{split}
    \frac{\partial Prox(\hat{h}_N)}{\partial \hat{h}_N} &= \frac{\partial L(\hat{h}_N, y)}{\partial \hat{h}_N} + \frac{\alpha^2_{N-1}}{\alpha}\Vert f(\hat{h}_{N-1}) \Vert^2 e_N\\
    &= \frac{\partial L(\hat{h}_N, y)}{\partial \hat{h}_N} + \frac{\alpha_{N-1}}{\alpha} e_N.
\end{split}
\end{equation}
Note that here $\alpha_{N-1}=\Vert f(\hat{h}_{N-1}) \Vert^{-2}$ as noted in the theorem. In the limit where $\gamma_N \rightarrow \frac{\alpha_{N-1}}{\alpha}$, this gradient comes to $\frac{\partial L(\hat{h}_N, y)}{\partial \hat{h}_N} +  \gamma_N e_N$, which is equivalent to the gradient $\frac{\partial F}{\partial \hat{h}_N}$ (see equation \ref{eq:DervFEnergyOut}).

\textbf{Gradient w.r.t. Hidden Layers} Next, we compute $\frac{\partial Prox(\hat{h}_n)}{\partial \hat{h}_n}$ for hidden layers. The components of $Prox$ local to hidden layer $\hat{h}_n$ are
\begin{equation}\label{eq:proxHid}
    Prox(\hat{h}_n) = \underbrace{\frac{1}{2\alpha} \Vert \Delta W_{n} \Vert^2}_{prox_1} + \underbrace{\frac{1}{2\alpha} \Vert \Delta W_{n-1} \Vert^2}_{prox_2}.
\end{equation}
whose gradient can be expressed in terms of $prox_1$ and $prox_2$: $\frac{\partial Prox(\hat{h}_n)}{\partial \hat{h}_n} = \frac{\partial prox_1(\hat{h}_n)}{\partial \hat{h}_n} + \frac{\partial prox_2(\hat{h}_n)}{\partial \hat{h}_n}$. As above, the gradient $\frac{\partial prox_2(\hat{h}_n)}{\partial \hat{h}_n}$ is computed using the chain rule.
\begin{equation}
    prox_2 = \frac{1}{2\alpha}\Vert \Delta W_{n-1} \Vert^2 = \frac{1}{2\alpha} \Vert  \alpha_{n-1} e_n f(\hat{h}_{n-1})^T \Vert^2,
\end{equation}

where $p_n = W_{n-1} f(\hat{h}_{n-1})$ and $e_n = \hat{h}_n - p_{n}$. Using the chain rule 
\begin{equation}
    \frac{\partial prox_2(\hat{h}_n)}{\partial \hat{h}_n} = \frac{\partial prox_2}{\partial \Delta W_{n-1}} \frac{\partial \Delta W_{n-1}}{\partial e_{n}} \frac{\partial e_{n}}{\partial \hat{h}_{n}},
\end{equation}

where $\frac{\partial prox_2}{\partial \Delta W_{n-1}} = \frac{\alpha_{n-1}}{\alpha} e_n f(\hat{h}_{n-1})^T$ and $\frac{\partial \Delta W_{n-1}}{\partial e_n} \frac{\partial e_{n}}{\partial \hat{h}_{n}} = \alpha_{n-1} f(\hat{h}_{n-1})$. Multiplying together we get
\begin{equation}
    \frac{\partial prox_2(\hat{h}_n)}{\partial \hat{h}_n} = \frac{\alpha_{n-1}^2}{\alpha} \Vert f(\hat{h}_{n-1}) \Vert^2 e_n.
\end{equation}
Now we derive $\frac{prox_1(\hat{h}_n)}{\hat{h}_n}$:
\begin{equation}
    prox_1 = \frac{1}{2\alpha} \Vert \Delta W_{n} \Vert^2 = \frac{1}{2\alpha} \Vert  \alpha_{n-1} e_{n+1}f(\hat{h}_{n})^T \Vert^2,
\end{equation}

where $p_{n+1} = W_{n} f(\hat{h}_{n})$ and $e_{n+1} = \hat{h}_{n+1} - p_{n+1}$. Using the chain rule
\begin{equation}
\begin{split}
    \frac{\partial prox_1}{\partial \hat{h}_n} &= \frac{\partial prox_1}{\partial \Delta W_{n}} \frac{\partial \Delta W_{n}}{\partial e_{n+1}} \frac{\partial e_{n+1}}{\partial p_{n+1}} \frac{\partial p_n}{\partial \hat{h}_{n}} \\
    &+ \frac{\partial prox_1}{\partial \Delta W_{n}} \frac{\partial \Delta W_{n}}{\partial f(\hat{h}_{n})^T} \frac{\partial f(\hat{h}_n)^T}{\partial \hat{h}_{n}}\\
    &+ \frac{\partial prox_1}{\partial \Delta W_{n}} \frac{\partial \Delta W_{n}}{\partial \Vert f(\hat{h}_{n}) \Vert^{-2}} \frac{\partial \Vert f(\hat{h}_n) \Vert^{-2}}{\partial \hat{h}_{n}}.
\end{split}
\end{equation}

Unlike the previous gradients, we now need propagate the gradient through the learning rate, which is what is done by the term $\frac{\partial prox_1}{\partial \Delta W_{n}} \frac{\partial \Delta W_{n}}{\partial \Vert f(\hat{h}_{n}) \Vert^{-2}} \frac{\partial \Vert f(\hat{h}_n) \Vert^{-2}}{\partial \hat{h}_{n}}$ above. 

First, $\frac{\partial prox_1}{\partial \Delta W_{n}} \frac{\partial \Delta W_{n}}{\partial e_{n+1}} \frac{\partial e_{n+1}}{\partial p_{n+1}} \frac{\partial p_n}{\partial \hat{h}_{n}} = -\frac{\alpha_n^2}{\alpha}  \Vert f(\hat{h}_n)\Vert^2 f'(\hat{h}_n) W^T_n e_{n+1}.$ Next, $\frac{\partial prox_1}{\partial \Delta W_{n}} \frac{\partial \Delta W_{n}}{\partial f(\hat{h}_{n})^T} \frac{\partial f(\hat{h}_n)^T}{\partial \hat{h}_{n}} = \frac{\alpha_n^2}{\alpha}\Vert e_{n+1} \Vert^2 f'(\hat{h}_n)\hat{h}_n$. Finally, $\frac{\partial prox_1}{\partial \Delta W_{n}} \frac{\partial \Delta W_{n}}{\partial \Vert f(\hat{h}_{n}) \Vert^{-2}} \frac{\partial \Vert f(\hat{h}_n) \Vert^{-2}}{\partial \hat{h}_{n}} = \frac{\alpha_n^2}{\alpha} (\frac{-2 \Vert e_n \Vert^2}{\alpha_n^6}) f'(\hat{h}_n)\hat{h}_n$, which results in 

\begin{equation}
\begin{split}
    \frac{\partial prox_1}{\partial \hat{h}_n} &= -\frac{\alpha_n^2}{\alpha}  \Vert f(\hat{h}_n)\Vert^2 f'(\hat{h}_n) W^T_n e_{n+1} + \frac{\alpha_n^2}{\alpha}\Vert e_{n+1} \Vert^2 f'(\hat{h}_n)\hat{h}_n + \frac{\alpha_n^2}{\alpha} (\frac{-2 \Vert e_{n+1} \Vert^2}{\alpha_n^6}) f'(\hat{h}_n)\hat{h}_n\\
    &= \frac{\alpha_n^2}{\alpha}  (-\Vert f(\hat{h}_n)\Vert^2 f'(\hat{h}_n) W^T_n e_{n+1} + \Vert e_{n+1} \Vert^2 f'(\hat{h}_n)\hat{h}_n + \frac{-2 \Vert e_{n+1} \Vert^2}{\alpha_n^6} f'(\hat{h}_n)\hat{h}_n)\\
    &= \frac{\alpha_n^2}{\alpha}  (-\alpha_n^{-1} f'(\hat{h}_n) W^T_n e_{n+1} + \Vert e_{n+1} \Vert^2 (1 + \frac{-2}{\alpha_n^6}) f'(\hat{h}_n)\hat{h}_n)
\end{split}
\end{equation}

Now we substitute our $\frac{\partial prox_1}{\partial \hat{h}_n}$ and $\frac{\partial prox_2}{\partial \hat{h}_n}$ terms back into the gradient of $Prox(\hat{h}_n)$:
\begin{equation}
\begin{split}
    \frac{\partial Prox(\hat{h}_n)}{\partial \hat{h}_n} &=  \frac{\alpha_n^2}{\alpha}  (-\alpha_n^{-1} f'(\hat{h}_n) W^T_n e_{n+1} + \alpha_{n-1}^{-1} e_n + \Vert e_{n+1} \Vert^2 (1 + \frac{-2}{\alpha_n^6}) f'(\hat{h}_n)\hat{h}_n) \\
\end{split}
\end{equation}

Finally, we set this gradient equal to 0 (i.e. at a minimum of Prox) and in the limit where $\gamma^{decay}_n \rightarrow  \Vert e_{n+1} \Vert^2 (1 + \frac{-2}{\alpha_n^6})$ and $\gamma_n \rightarrow \alpha_{n-1}^{-1}$ for all $n < N$. To simplify notation, we just label this limit as $\lim$

\begin{equation}
\begin{split}
0 &= \lim  \frac{\alpha_n^2}{\alpha}  (-\alpha_n^{-1} f'(\hat{h}_n) W^T_n e_{n+1} + \alpha_{n-1}^{-1} e_n + \Vert e_{n+1} \Vert^2 (1 + \frac{-2}{\alpha_n^6}) f'(\hat{h}_n)\hat{h}_n )\\
&= - \gamma_{n+1} f'(\hat{h}_n) W_n^T e_{n+1} + \gamma_n e_n + \gamma_n^{decay} f'(\hat{h}_n) \hat{h}_n.
\end{split}
\end{equation}

When set to zero, $\frac{\alpha_n^2}{\alpha}$ cancels. The result is exactly equal to $\frac{\partial F}{\partial \hat{h}_n}$ (equation \ref{eq:DervFEnergy}). It follows that, in these limits, for any hidden layer $n$ it is the case that $\frac{\partial F}{\partial \hat{h}_n} = 0 \Longleftrightarrow \frac{\partial Prox}{\partial \hat{h}_n} = 0$. Since the same result holds at the output layer, it follows that  $\argmin_{\hat{h}} F = \argmin_{\hat{h}} L(\theta^*) +  \frac{1}{2\alpha}\Vert \Delta \theta \Vert^2$. Hence, under these $\gamma$ settings G-IL is equivalent to IL-prox. 
\end{proof}

\subsection{G-IL Approximates Implicit SGD}
Here we present the theorem showing the IL-prox, and G-IL with the $\gamma$ setting noted in the above theorem, are equivalent to the proximal algorithm and thus implicit SGD. An intuitive way to think about the relation between the typical proximal algorithm and IL-prox is that IL-prox does what the typical proximal algorithm would do, but in reverse order. The standard proximal algorithm $argmin_{\theta} L(\theta) + \frac{1}{2\alpha} \Vert \theta - \theta^{(b)} \Vert^2$ minimizes the proximal loss w.r.t. parameters, then the new parameters can be used to compute new FF values given the data-point $x^{(b)}$. IL prox, on the other hand, first computes the new FF values of the parameters that best minimize the proximal loss given $x^{(b)}$ during the inference phase. Then it updates weights so they become the parameters that best minimize the proximal loss. Since both optimization problems are unconstrained (and thus optimize over the same space of possible parameter values), they yield the same parameter updates in the end.

\begin{theorem}\label{thrm:GILisImpSGD}
Let $\theta^{(b)}$ be a set of MLP parameters at training iteration $b$. Let $\theta^{(b+1)}_{prox} = argmin_{\theta} \, L(\theta) + \frac{1}{2\alpha} \Vert \theta - \theta^{(b)} \Vert^2$. Let $\theta^{(b+1)}_{IL-prox}$ be the parameters updated by IL-prox (see \ref{def:il-prox}) and $\theta^{(b+1)}_{IL}$ the parameters updated by G-IL under parameter setting in theorem \ref{thrm:ILisProx}. Assume mini-batch size 1. Under these assumptions, it is the case $\theta^{(b+1)}_{prox} =  \theta^{(b+1)}_{IL-prox} = \theta^{(b+1)}_{IL}$.
\end{theorem}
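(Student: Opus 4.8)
The plan is to split the chain of equalities into two pieces, $\theta^{(b+1)}_{IL-prox} = \theta^{(b+1)}_{IL}$ and $\theta^{(b+1)}_{prox} = \theta^{(b+1)}_{IL-prox}$, and dispatch them separately. The first equality is essentially a corollary of Theorem \ref{thrm:ILisProx}: under the stated $\gamma$ limits that theorem gives $\argmin_{\hat h} F = \argmin_{\hat h}\big(L(\theta^*) + \tfrac{1}{2\alpha}\|\theta^* - \theta^{(b)}\|^2\big)$, so the inference phases of G-IL and IL-prox return the same optimized activities $\hat h^*$; since both algorithms then apply the identical NLMS rule to produce weights from $\hat h^*$, the two updated parameter sets coincide. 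I would state this first and move on quickly.

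The substance is the second equality. Here I would recast the standard proximal problem (equation \ref{eq:proximal}) as a nested minimization that exposes the network's forward structure. Writing $h_N(\theta)$ for the feed-forward output on $x^{(b)}$, the loss depends on $\theta$ only through the realized activities, so I would partition parameter space by the level sets of the forward map $\theta \mapsto (\hat h_1,\dots,\hat h_N)$ and write
\[
\min_{\theta} L(h_N(\theta)) + \tfrac{1}{2\alpha}\|\theta - \theta^{(b)}\|^2 = \min_{\hat h}\Big( L(\hat h_N) + \tfrac{1}{2\alpha}\min_{\theta:\, \hat h_{n+1} = W_n f(\hat h_n)\,\forall n} \|\theta - \theta^{(b)}\|^2\Big).
\]
Because $\|\theta - \theta^{(b)}\|^2 = \sum_n \|W_n - W_n^{(b)}\|^2$ separates across layers and each realization constraint $\hat h_{n+1} = W_n f(\hat h_n)$ (with $\hat h_0 = x^{(b)}$ clamped) constrains only $W_n$, the inner minimization decouples into $N$ independent layer-wise minimum-norm problems. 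By Proposition \ref{prop:NLMSsol} each is solved exactly by the NLMS rule, so the inner minimizer is the NLMS parameter $\theta^*(\hat h)$ and its value is the expression in equation \ref{eq:reghhat}; moreover Lemma \ref{lem:argminFW} (equation \ref{eq:losshhat}) gives $L(\theta^*) = L(\hat h_N)$. Hence the outer objective over $\hat h$ is exactly the IL-prox inference objective, so both problems share a minimizer $\hat h^*$, and applying NLMS at $\hat h^*$ (which IL-prox does) returns the proximal minimizer $\theta^*(\hat h^*) = \theta^{(b+1)}_{prox}$.

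I expect the main obstacle to be justifying the nested rewrite rigorously rather than the per-layer algebra. Two points need care: (i) completeness of the decomposition, i.e.\ that the forward map is surjective onto the activity sequences being optimized over, so no parameter configuration is missed and the outer $\min$ genuinely ranges over all achievable fibers; and (ii) that the minimum-norm representative within each fiber is the one the proximal objective selects, which holds because $L$ is constant on a fiber and only the quadratic regularizer distinguishes parameters there (so the unique minimizer is attained). Both hinge on the mini-batch-size-1 hypothesis: it makes each layer constraint a single affine (in $W_n$) underdetermined equation, guaranteeing the fibers are nonempty affine subspaces on which the minimum-norm solution is unique and given by NLMS. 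I would flag explicitly that this is where the single-data-point assumption is essential, since with larger mini-batches the layer constraints can become inconsistent and the fiber decomposition breaks down.
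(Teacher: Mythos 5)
Your proposal is correct and follows essentially the same route as the paper: the paper likewise dispatches $\theta^{(b+1)}_{IL-prox} = \theta^{(b+1)}_{IL}$ as an immediate consequence of Theorem \ref{thrm:ILisProx}, and proves $\theta^{(b+1)}_{prox} = \theta^{(b+1)}_{IL-prox}$ by arguing that IL-prox performs the proximal minimization ``in reverse order,'' optimizing over activities whose NLMS-induced parameters (via Lemma \ref{lem:argminFW} and Proposition \ref{prop:NLMSsol}) range over all candidate solutions. Your nested-minimization/fiber decomposition is a more rigorous rendering of the paper's informal assertion that ``both optimization problems are unconstrained (and thus optimize over the same space of possible parameter values),'' and your point (ii) --- that the proximal minimizer must itself be the minimum-norm representative of its own fiber, since $L$ is constant there --- makes explicit exactly the step the paper glosses over.
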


\begin{proof}
The weight update procedure performed by IL-prox can be described as follows: 
\begin{equation}
\theta^{(b+1)}_{IL-prox} = argmin_{\theta} \, F(argmin_{\hat{h}} L(\theta^*) + \Vert \theta^* - \theta^{(b)} \Vert^2),
\end{equation}
where $argmin_{\theta} \, F$ is computed using the NLMS rule and $\theta^*$ are the parameters updated with the NLMS rule (equation \ref{eq:NLMS1}). Lemma \ref{lem:argminFW} and proposition \ref{prop:NLMSsol} imply that the $\hat{h}$ produced by $argmin_{\hat{h}} L(\theta^*) + \frac{1}{2\alpha} \Vert \theta^* - \theta^{(b)} \Vert^2)$ are the FF activities of $\theta^*$. The fact that $argmin_{\theta} F$ is computed using the NLMS rule implies that the resulting parameters are the $\theta^*$ of the optimized activities. $\hat{h}$ are unbounded, real-valued vectors, which implies that $argmin_{\hat{h}} L(\theta^*) + \Vert \theta^* - \theta^{(b)} \Vert^2$ is an unbounded optimization problem and also implies that the possible $\theta^*$ values are also unbounded (because they can vary over a set of FF values that have unbounded possible values). This implies that $argmin_{\hat{h}} L(\theta^*) + \Vert \theta^* - \theta^{(b)} \Vert^2$ outputs the FF values of the parameters $\theta^*$ that best minimize the proximal loss. Then $argmin_{\theta} F$ uses those FF activities to update $\theta^{(b)}$ such that it become equal to the $\theta^*$ that best minimize the proximal loss. This implies $\theta^{(b+1)}_{IL-prox} = argmin_{\theta} L(\theta) + \Vert \theta - \theta^{(b)} \Vert^2$ and thus $\theta^{(b+1)}_{prox} =  \theta^{(b+1)}_{IL-prox}$.

Further, this conclusion implies that under the $\gamma$ parameter settings and learning rates  noted in theorem \ref{thrm:ILisProx}, the parameter update produced by G-IL, $\theta^{(b+1)}_{IL}$ also equals $\theta^{(b+1)}_{prox}$, since $\theta^{(b+1)}_{prox} =  \theta^{(b+1)}_{IL-prox} = \theta^{(b+1)}_{IL}$, given the proof above and theorem  \ref{thrm:ILisProx}.
\end{proof}

\subsection{A Note about Mini-batches}\label{app:minibatch}
Our analysis above focuses on the more biologically realistic scenario where a single data-point is presented each training iteration. In this case, the NLMS rule provides the minimum-norm solution to $argmin_{\theta} F$ and the LMS update approximates this solution in the sense it is proportional to the NLMS update. However, in the case where mini-batches of size $>1$ are used, the NLMS rule is not a solution to $argmin_{\theta} F$. The gradient update of $F$ w.r.t. $W_n$ is
\begin{equation}
    W_n^{b+1} = W_n^{(b)} - \alpha_n \frac{\partial F}{\partial W_n} = W_n^{(b)} + \alpha_n  (H_{n+1}^{T(b)} - W_n^{(b)} f(H_n^{T (b)})) f(H_n^{(b)}),
\end{equation} 
where $f$ is an element-wise non-linearity and $H_n$ is the mini-batch matrix of neurons activities. Each row of $H_n$ is a $\hat{h}_n$ for one data-point in the mini-batch. 

The solution to the local error minimization problem is found by setting the gradient equal to zero and solving for $W_n$:
\begin{equation}
\begin{split}
    0 &= (H_{n+1}^{T(b)} - W_n f(H_n^{T (b)})) f(H_n^{(b)})\\
    W_n &= H_{n+1}^{T(b)} f(H_n^{(b)}) (f(H_n^{T(b)}) f(H_n^{(b)}))^{-1},
\end{split}
\end{equation}
This update is generally not equal or proportional to the gradient update. Gradient updates are thus poorer approximations of the solution(s) to local least mean squared problem in the case of mini-batches size $>1$ than they are in the case with mini-batches size equal to one. This may provide some insight into why the IL models we tested do not show performance advantages over BP-SGD when mini-batches are used as opposed to when single datapoints are used to update weights. Future research could explore alternative learning rules that better approximate the solution above in the mini-batch case.

\section{A Closed Form Description of G-IL Local Targets}\label{sec:closedDesc}

In this section, we derive a closed form description of $argmin_{\hat{h}} F$ for linear MLPs. In addition to simplifying notation, we focus on linear networks because we aim to use this closed form description to analyze the stability properties of linear networks trained with IL in the next section.

\subsection{A Brief Intro to Gauss-Newton Optimization and Ridge Regression}

Let $\theta$ be a vector of parameters, $y$ a vector of prediction targets, $X$ a data matrix. The linear least squares problem is defined as

\begin{equation}\label{eq:linLeast}
    \argmin_{\theta} \Vert y - X \theta \Vert^2.
\end{equation}

Its closed form solution is $(X X^{T})^{-1} X^{T} y$ which equals $X^+y$ when $X$ has linearly independent columns. $X^+$ is the left pseudo-inverse of $X$. For matrix $M$ the left pseudo-inverse has the property $I = M^+M$. When the regularization term $\lambda \Vert \theta \Vert^2$ is added to \ref{eq:linLeast} we get ridge regression whose closed form solution is $(X X^{T} + \lambda I)^{-1} X^{T} y$, where $I$ is the identity matrix and $\lambda$ is a scalar.

The Gauss-Newton method is an iterative method used to solve non-linear least squares problems, which has some similarities to the solutions to linear least squares problem just mentioned. Let $e^{(b)}$ be the residuals of the model at $b$, i.e. $e^{(b)} = y^{(b)} - f(x^{(b)}; \theta^{(b)})$, where $f$ is the non-linear model function. The Gauss-Newton (GN) update for $\theta^{(b)}$ is 

\begin{equation}
\begin{split}
    \theta^{(b+1)} &= \theta^{(b)} + (J J^T)^{-1} J^T e^{(b)}\\
                   &= \theta^{(b)} + J^+ e^{(b)},
\end{split}
\end{equation}

where $J$ is the Jacobian of $f$ w.r.t. $\theta^{(b)}$ and $J^+$ is the left pseudo-inverse of $J$. Note the similarity to the closed form solution to the linear least squares. The parameters are iteratively updated  until convergence  or a near convergent state is reached.

It is also common to add a regularization term to the Gauss-Newton update as follows 

\begin{equation}\label{eq:GNReg}
    \theta^{(b+1)} = \theta^{(b)} + (J J^T + \lambda I)^{-1} J^T e^{(b)},
\end{equation}

where $I$ is the identity matrix and $\lambda$ is some scalar. Notice the similarity to the ridge regression solution. The regularization term helps prevent the change to $\theta$ from growing too large. Note that as $\lambda \rightarrow \infty$ the update approaches gradient descent: $\theta^{(b+1)} = \theta^{(b)} + J^T e^{(b)}$.

\subsection{A Closed form Description of G-IL Local Targets}\label{app:closedFormIL}

Consider a linear MLP trained with G-IL using a squared error global loss. Local targets at hidden layers of such networks are the result of the optimization process that attempts to find

\begin{equation}\label{eq:convG-IL}
    \hat{h}_n^{(b)} = \argmin_{\hat{h}_n} (\frac12\Vert \hat{h}^{(b)}_{n+1}  -W_n \hat{h}_n \Vert^2 + \frac{\lambda}{2} \Vert \hat{h}_n - p^{(b)}_n \Vert^2),
\end{equation}

where $\lambda = \frac{\gamma_{n}}{\gamma_{n+1}}$ which represents the relative weighting of the two terms (see equation \ref{eq:FEnergy}). Here we ignore the optional decay term in equation \ref{eq:FEnergy}.

The $\hat{h}^{(b)}_n$ can be expressed in closed form by noting that at the minimum of the expression above $h_n = \hat{h}^{(b)}_n$, and the gradient of the expression equals 0. One can thus compute the gradient, set it equal to zero, and solve for $\hat{h}^{(b)}_n$:

\begin{equation}\label{eq:closedForm}
\begin{split}
    0 &= -W_n^T(\hat{h}^{(b)}_{n+1} - W_n \hat{h}^{(b)}_n) + \lambda \hat{h}^{(b)}_n - \lambda p_n^{(b)}\\
    0 &= -W_n^T\hat{h}^{(b)}_{n+1} + (W_n^T W_n  + \lambda I) \hat{h}^{(b)}_n - \lambda p_n^{(b)}\\
    \hat{h}^{(b)}_n &= (W_n^T W_n + \lambda I)^{-1} W^T_n  \hat{h}^{(b)}_{n+1} + \lambda (W_n^T W_n + \lambda I)^{-1} p_n^{(b)}
    \end{split}
\end{equation}

Notice the term on the left $(W_n^T W_n + \lambda I)^{-1} W^T_n \hat{h}^{(b)}_{n+1}$ is identical to the closed form solution for ridge regression (see previous section), which is a regularized version of a simple target propagation $W_n^+ \hat{h}^{(b)}_{n+1}$. The term on the right $(W_n^T W_n + \lambda I)^{-1} p_n \approx \epsilon p_n$, where $\epsilon$ is a scalar, since $(W_n^T W_n + \lambda I)^{-1}$ is positive definite and approaches a scalar multiple of $I$ as $\lambda \rightarrow \infty$. Thus, at the minimum $\hat{h}^{(b)}_n$ closely approximates a weighted average between $p_n$ and the solution to the regularized squared error at layer $n+1$.

\subsection{Relation to Gauss-Newton Updates}

\begin{proposition}\label{prop:GNClosed}
In the limit where $\lambda \rightarrow 0$ ( in equation \ref{eq:closedForm}), $\hat{h}_n = h_n^{(b)} + W_n^+ \Delta h^{(b)}_{n+1}$, which is a Gauss-Newton update on initial activities $h_n^{(b)}$ with residual $\Delta h^{(b)}_{n+1}$ and Jacobian $J = W_n$.
\end{proposition}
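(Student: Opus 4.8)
The plan is to pass to the limit $\lambda \to 0$ directly in the closed-form expression \ref{eq:closedForm} and then algebraically rearrange the result into the claimed Gauss-Newton form. Writing the closed form as a sum of two terms, $\hat h_n^{(b)} = (W_n^T W_n + \lambda I)^{-1} W_n^T \hat h_{n+1}^{(b)} + \lambda (W_n^T W_n + \lambda I)^{-1} p_n^{(b)}$, I would first treat the left-hand term. Since the network is linear and $W_n$ has linearly independent columns, $W_n^T W_n$ is invertible, so matrix inversion is continuous at $\lambda = 0$ and this term converges to $(W_n^T W_n)^{-1} W_n^T \hat h_{n+1}^{(b)} = W_n^+ \hat h_{n+1}^{(b)}$, where $W_n^+$ is the left pseudo-inverse introduced in the previous subsection.

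Next I would dispose of the second term. Because $(W_n^T W_n + \lambda I)^{-1}$ converges to the \emph{bounded} matrix $(W_n^T W_n)^{-1}$, the scalar prefactor $\lambda \to 0$ forces $\lambda (W_n^T W_n + \lambda I)^{-1} p_n^{(b)} \to 0$, leaving $\hat h_n = W_n^+ \hat h_{n+1}^{(b)}$ in the limit. I expect this to be the main obstacle: the argument hinges on $W_n^T W_n$ being nonsingular, i.e. on full column rank. If $W_n$ were rank-deficient, a Tikhonov/SVD analysis shows the second term instead limits to the projection of $p_n^{(b)}$ onto $\ker W_n$ and does not vanish. Thus the full-column-rank hypothesis — the same hypothesis that underwrites $W_n^+ W_n = I$ used below — is exactly what makes the clean statement hold, and I would flag it explicitly.

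I would then rewrite $W_n^+ \hat h_{n+1}^{(b)}$ into the target-update form. Using linearity of the forward pass, the feed-forward activity satisfies $h_{n+1}^{(b)} = W_n h_n^{(b)}$, so the left pseudo-inverse property $W_n^+ W_n = I$ gives $W_n^+ h_{n+1}^{(b)} = h_n^{(b)}$. Substituting $\hat h_{n+1}^{(b)} = h_{n+1}^{(b)} + \Delta h_{n+1}^{(b)}$ and splitting the product yields $\hat h_n = W_n^+ h_{n+1}^{(b)} + W_n^+ \Delta h_{n+1}^{(b)} = h_n^{(b)} + W_n^+ \Delta h_{n+1}^{(b)}$, which is the claimed identity.

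Finally I would identify this with a Gauss-Newton step. The forward map $h_n \mapsto W_n h_n$ has Jacobian $J = W_n$, and $\Delta h_{n+1}^{(b)} = \hat h_{n+1}^{(b)} - h_{n+1}^{(b)}$ plays the role of the residual $e^{(b)}$. Comparing with the Gauss-Newton update $\theta^{(b+1)} = \theta^{(b)} + J^+ e^{(b)}$ recalled in the previous subsection, the expression $h_n^{(b)} + W_n^+ \Delta h_{n+1}^{(b)}$ is precisely a Gauss-Newton update on the initial activities $h_n^{(b)}$ with residual $\Delta h_{n+1}^{(b)}$ and Jacobian $W_n$, completing the proof.
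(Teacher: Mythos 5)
Your proof is correct and follows essentially the same route as the paper's: pass to the limit $\lambda \to 0$ in equation \ref{eq:closedForm} to obtain $\hat{h}_n = W_n^+ \hat{h}_{n+1}^{(b)}$, then use $W_n^+ h_{n+1}^{(b)} = h_n^{(b)}$ (linearity of the forward pass plus the left pseudo-inverse property) to rearrange into the Gauss--Newton form $h_n^{(b)} + W_n^+ \Delta h_{n+1}^{(b)}$. The only difference is that you make explicit two points the paper leaves implicit --- that the term $\lambda (W_n^T W_n + \lambda I)^{-1} p_n^{(b)}$ vanishes in the limit, and that full column rank of $W_n$ is required both for this and for $W_n^+ W_n = I$ --- which is a welcome but minor sharpening of the same argument.
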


\begin{proof}
\begin{equation}\label{eq:GNClosed}
\begin{split}
    \lim_{\lambda\rightarrow0} \hat{h}_n^{(b)} &= (W_n^T W_n)^{-1} W^T_n  \hat{h}^{(b)}_{n+1} = W_n^+ \hat{h}^{(b)}_{n+1} = h_n^{(b)} + W_n^+ \hat{h}^{(b)}_{n+1} - W_n^+ h_{n+1}^{(b)}\\
    &= h_n^{(b)} + W_n^+ (\hat{h}^{(b)}_{n+1} - h_{n+1}^{(b)}) = h_n^{(b)} + W_n^+ \Delta h^{(b)}_{n+1},
    \end{split}
\end{equation}
\end{proof}

More generally, in the case where the influence of the top down error term $\Vert \hat{h}_n - p^{(b)}_n \Vert^2$ is negligible, G-IL targets converge to Gauss-Newton targets.

\subsection{Gauss-Newton G-IL as a Closed Form Approximation of G-IL}

In practice, one approximates $\argmin_{\hat{h}_n} F$ by initializing $h_n = \hat{h}_n = p_n$ then minimizes $F$ using some optimization process. At the beginning of this optimization process the top down error $\Vert \hat{h}_n - p^{(b)}_n \Vert^2$ is small since it is initialized to zero and grows slowly afterward. Additionally, if optimization is stopped early (which is typically done in practice) this error's effect on activities will remain negligible compared to the larger bottom-up error term. This fact along with proposition \ref{prop:GNClosed} suggest Gauss-Newton updated activities are a good approximation of G-IL local targets. Let's define a network that computes $\hat{h}_n$ using Gauss-Newton updates as follows

\begin{definition}\label{def:IL-GN}
Gauss-Newton G-IL (IL-GN). IL-GN is a closed-form approximation of G-IL for training MLPs which uses the same weight update as G-IL and computes local targets using a Gauss-Newton update. In linear networks the update is $\hat{h}_n = h_n^{(b)} + \gamma W_n^+ \Delta h^{(b)}_{n+1}$, where $0 < \gamma \leq 1$.
\end{definition}

IL-GN approximates G-IL when $\gamma = 1$ as shown by proposition \ref{eq:GNClosed}. A $\gamma < 1$, however, better captures the fact that in the true closed form solution (see equation \ref{eq:closedForm}), $\hat{h}_n$ is a value in between a regularized GN-target and top-down prediction. Simulations below provide further justification that IL-GN with $0 < \gamma < 1$ is a good approximation of G-IL (see figure \ref{fig:outputlayer}.

\subsection{Some Properties of IL-GN}\label{app:GNG-ILprop}

There are several lemmas concerning IL-GN that will either be used in subsequent proofs or are relevant for understanding IL generally.

The first lemma says that predictions $p_n$ in IL-GN networks are a weighted average of $h_n$ and $\hat{h}_n$.
\begin{lemma}
    After all $\hat{h}_n$ are computed, local predictions will lie between local sub-targets and FF activity: $p_n = (1 - \gamma) h_n + \gamma \hat{h}_n$.
\end{lemma}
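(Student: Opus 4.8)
The plan is to prove the identity by direct substitution of the IL-GN activity rule (Definition \ref{def:IL-GN}) into the definition of the prediction, exploiting the fact that IL-GN is defined only for linear MLPs, so the nonlinearity $f$ is the identity. In that setting the prediction reduces to $p_n = W_{n-1}\hat{h}_{n-1}$ and the feedforward activity is $h_n = W_{n-1}h_{n-1}$. No induction is needed, since each $p_n$ depends only on the single adjacent target $\hat{h}_{n-1}$.

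First I would write the IL-GN update at layer $n-1$, namely $\hat{h}_{n-1} = h_{n-1} + \gamma W_{n-1}^+ \Delta h_n$ with $\Delta h_n = \hat{h}_n - h_n$, then left-multiply by $W_{n-1}$ and distribute:
\begin{equation*}
p_n = W_{n-1}\hat{h}_{n-1} = W_{n-1}h_{n-1} + \gamma\, W_{n-1}W_{n-1}^+ \Delta h_n .
\end{equation*}
The first term is exactly $h_n$. Provided the product $W_{n-1}W_{n-1}^+$ acts as the identity on $\Delta h_n$, the second term collapses to $\gamma \Delta h_n = \gamma(\hat{h}_n - h_n)$, and collecting terms gives $p_n = h_n + \gamma(\hat{h}_n - h_n) = (1-\gamma)h_n + \gamma\hat{h}_n$, as claimed. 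The endpoints provide a consistency check: $\gamma = 0$ yields $p_n = h_n$ (no inference) and $\gamma = 1$ yields $p_n = \hat{h}_n$ (the prediction exactly reaches the target), matching the convex-combination formula.

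The main obstacle is justifying $W_{n-1}W_{n-1}^+ = I$, equivalently that $\Delta h_n$ lies in the column space of $W_{n-1}$. This is the right-inverse property and is distinct from the left-inverse identity $W_{n-1}^+ W_{n-1} = I$ used in the Gauss-Newton derivation of Proposition \ref{prop:GNClosed}; it requires $W_{n-1}$ to have full row rank, with $W_{n-1}^+$ read as the corresponding (right) pseudo-inverse $W_{n-1}^T(W_{n-1}W_{n-1}^T)^{-1}$, which satisfies $W_{n-1}W_{n-1}^+ = I$. I would state this rank condition explicitly (it holds in particular when the weight matrices are square and invertible), since otherwise $W_{n-1}W_{n-1}^+\Delta h_n$ is only the orthogonal projection of $\Delta h_n$ onto the column space of $W_{n-1}$ and the equality degrades to an approximation. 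Under that condition the substitution is immediate and the lemma follows with no further machinery.
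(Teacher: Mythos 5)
Your proposal is correct and follows essentially the same route as the paper: substitute the IL-GN rule $\hat{h}_{n-1} = h_{n-1} + \gamma W_{n-1}^+ \Delta h_n$ into $p_n = W_{n-1}\hat{h}_{n-1}$ and simplify. The one difference is that you make explicit the condition $W_{n-1}W_{n-1}^+ \Delta h_n = \Delta h_n$ (full row rank of $W_{n-1}$, or $\Delta h_n$ lying in its column space), which the paper's proof uses silently in passing from $W_{n-1}(h_{n-1} + \gamma W_{n-1}^+\Delta h_n)$ to $h_n + \gamma(\hat{h}_n - h_n)$ --- a worthwhile clarification, since without it the identity degrades to a projection.
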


\begin{proof}
First, prediction $p_n$ can be described as follows: $p_n = W_{n-1} (\hat{h}_{n-1} + \Delta \hat{h}_{n-1}) = W_{n-1} (h_{n-1} + \gamma W_{n-1}^+ \Delta h_{n})$, which implies

\begin{equation}\label{lem:pred}
\begin{split}
p_n &= W_{n-1} (h_{n-1} + \gamma W_{n-1}^+ \Delta h_{n}) \\
&= h_n + \gamma (\hat{h}_n - h_n) \\
&= (1- \gamma) h_n + \gamma \hat{h}_n.
\end{split}
\end{equation}

\end{proof}

It then follows that the error after the targets and predictions are updated, $e_n$, is smaller but proportional to the initial error:

\begin{lemma}\label{lem:hhat}
    After all $\hat{h}_n$ are computed, local errors $e_n = (1 - \gamma) \Delta h_n$.
\end{lemma}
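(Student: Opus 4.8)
The plan is to obtain this as an immediate algebraic corollary of the preceding lemma, which established that once all local targets have been computed the prediction at layer $n$ is the weighted average $p_n = (1-\gamma) h_n + \gamma \hat{h}_n$. Since we are in the IL setting, I would use the prediction-error form of the local error, namely $e_n = \hat{h}_n - p_n$ (as used in the LMS/NLMS rules, equations \ref{eq:LMSUpdate} and \ref{eq:NLMSUpdate}), rather than the $\hat{h}_n - h_n$ alternative also listed in Table \ref{tab:notation}.

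First I would substitute the weighted-average expression for $p_n$ into the definition of $e_n$, writing $e_n = \hat{h}_n - \bigl[(1-\gamma)h_n + \gamma \hat{h}_n\bigr]$. Collecting the $\hat{h}_n$ terms gives $(1-\gamma)\hat{h}_n - (1-\gamma)h_n = (1-\gamma)(\hat{h}_n - h_n)$. Finally, recalling the definition $\Delta h_n = \hat{h}_n - h_n$ from the notation table, this is exactly $(1-\gamma)\Delta h_n$, which is the claimed identity.

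There is no genuine obstacle here: the statement is one line of substitution and term-collection once the preceding $p_n$ lemma is in hand. The only point requiring care is selecting the correct definition of $e_n$. With the prediction-error definition the factor $(1-\gamma)$ emerges cleanly and the lemma has content (it says the residual error is a shrunken copy of the total target displacement, with the same direction). Using instead the $\hat{h}_n - h_n$ reading would collapse the statement to $e_n = \Delta h_n$ and trivialize it, so I would flag at the outset that $e_n = \hat{h}_n - p_n$ is intended. I would also note in passing that the result confirms the intended interpretation of $\gamma$ as an interpolation parameter: at $\gamma = 1$ the error vanishes (the target is fully realized), while at $\gamma = 0$ the error equals the full displacement.
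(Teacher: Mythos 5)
Your proof is correct and is essentially identical to the paper's: both substitute the weighted-average expression $p_n = (1-\gamma)h_n + \gamma\hat{h}_n$ from the preceding lemma into $e_n = \hat{h}_n - p_n$ and collect terms to obtain $(1-\gamma)\Delta h_n$. Your remark about choosing the prediction-error definition of $e_n$ matches the paper's (implicit) usage and is a sensible clarification.
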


\begin{proof}

\begin{equation}
\begin{split}
e_n &= \hat{h}_n - p_n \\
&= \hat{h}_n - ((1- \gamma) h_n + \gamma \hat{h}_n) \\
&= (1 - \gamma) (\hat{h}_n - h_n) \\
&= (1 - \gamma) \Delta h_n,
\end{split}
\end{equation}

where the second line is computed using \ref{lem:pred}.
\end{proof}

\begin{lemma}\label{lem:hhat2}
    After all $\hat{h}_n$ are computed, local errors $e_n = (1 - \gamma) \gamma W^+_n \Delta h_{n+1}$.
\end{lemma}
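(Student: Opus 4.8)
The plan is to chain the immediately preceding Lemma~\ref{lem:hhat} with the defining update rule of IL-GN from Definition~\ref{def:IL-GN}. Lemma~\ref{lem:hhat} already establishes that, once all targets are computed, the local error is a scaled copy of the activity change, namely $e_n = (1-\gamma)\Delta h_n$. So the only remaining work is to express $\Delta h_n$ in terms of the downstream quantity $\Delta h_{n+1}$ and then substitute.

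First I would recall from the notation table that $\Delta h_n = \hat{h}_n - h_n$, and from Definition~\ref{def:IL-GN} that in the linear setting the target is computed as $\hat{h}_n = h_n + \gamma W_n^+ \Delta h_{n+1}$. Subtracting $h_n$ from both sides immediately yields
\begin{equation}
\Delta h_n = \hat{h}_n - h_n = \gamma W_n^+ \Delta h_{n+1}.
\end{equation}
This is the key identity: it relays the activity change at layer $n$ back through the pseudo-inverse $W_n^+$ to the activity change one layer up.

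Next I would simply substitute this expression for $\Delta h_n$ into the conclusion of Lemma~\ref{lem:hhat}, obtaining
\begin{equation}
e_n = (1-\gamma)\Delta h_n = (1-\gamma)\gamma W_n^+ \Delta h_{n+1},
\end{equation}
which is precisely the claimed statement. The proof is therefore a two-line composition of a prior lemma and a definition, with no auxiliary estimates required.

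There is essentially no hard step here; the entire content is bookkeeping. The only point that warrants care is keeping the layer indices straight — Lemma~\ref{lem:hhat} relates $e_n$ to $\Delta h_n$ (same layer), whereas Definition~\ref{def:IL-GN} relates $\Delta h_n$ to $\Delta h_{n+1}$ (one layer downstream) — and remembering that the closed-form target $\hat{h}_n = h_n + \gamma W_n^+\Delta h_{n+1}$ is valid only in the linear network regime in which these lemmas are stated, so I would flag that assumption explicitly to ensure $W_n^+$ acts as intended.
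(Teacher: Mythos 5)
Your proof is correct and follows exactly the paper's own argument: apply Lemma~\ref{lem:hhat} to get $e_n = (1-\gamma)\Delta h_n$, then substitute $\Delta h_n = \gamma W_n^+ \Delta h_{n+1}$ from Definition~\ref{def:IL-GN}. The paper's proof additionally records the consequence $e_n = \gamma W_n^+ e_{n+1}$ (used in later theorems), but for the stated lemma itself your two-step composition is precisely what the paper does.
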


\begin{proof}

\begin{equation}
\begin{split}
e_n &= (1 - \gamma) \Delta h_n\\
&= \gamma (1 - \gamma) W^+ \Delta h_{n+1}\\
&= \gamma W^+ e_{n+1},
\end{split}
\end{equation}

where the first and third lines are computed using lemma \ref{lem:hhat} and the second line using definition \ref{def:IL-GN}.
\end{proof}

These lemmas can be used to show local errors are smaller than but proportional to the global loss GN update w.r.t. hidden layer activities. For example, $e_n$ can be expressed as follows:

\begin{proposition}\label{prop:globalGNILErr}
Consider a linear MLP trained with IL-GN. Assume $(W_n^+...W_{N-1}^+) = (W_{N-1}...W_{n})^+$ for all $n$. In this case, $e_n = \gamma_n'(W_{N-1}...W_{n})^+ \frac{-\partial L}{\partial h_N}$, where $\gamma_n' = \gamma^{(N - n)}(1 - \gamma)$.
\end{proposition}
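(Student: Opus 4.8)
The plan is to turn the two recursive lemmas just established into a single closed-form expression by unrolling the layer-to-layer recursion for the local errors down to the output layer, and then invoking the stated pseudo-inverse identity to collapse the resulting product of individual pseudo-inverses into one pseudo-inverse of the composite forward map.

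First I would record the one-step recursion that is already available. Lemma \ref{lem:hhat2} gives, in its final line, $e_n = \gamma W_n^+ e_{n+1}$ for every hidden layer $n$. Iterating this relation from layer $n$ up to the output layer $N$ --- that is, applying it $N-n$ times --- telescopes to
\begin{equation}
e_n = \gamma^{N-n}\, W_n^+ W_{n+1}^+ \cdots W_{N-1}^+\, e_N.
\end{equation}
This step is essentially a finite induction on the layer index and should be routine; the only care needed is bookkeeping the exponent of $\gamma$, which picks up one factor per application and hence equals $N-n$.

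Next I would apply the standing assumption $(W_n^+ \cdots W_{N-1}^+) = (W_{N-1}\cdots W_n)^+$ to rewrite the product of pseudo-inverses as the single pseudo-inverse $(W_{N-1}\cdots W_n)^+$ of the linear forward map from layer $n$ to $N$. Then the base case supplies $e_N$: by Lemma \ref{lem:hhat} applied at $n=N$, we have $e_N = (1-\gamma)\Delta h_N$, and for the squared-error output layer the change $\Delta h_N = \hat{h}_N - h_N$ equals the negative loss gradient, $\Delta h_N = -\partial L/\partial h_N$ (the output target is the point toward which $\hat{h}_N$ is driven). Substituting gives $e_N = (1-\gamma)(-\partial L/\partial h_N)$, and combining with the telescoped expression yields $e_n = \gamma^{N-n}(1-\gamma)(W_{N-1}\cdots W_n)^+ (-\partial L/\partial h_N)$, i.e.\ the claim with $\gamma_n' = \gamma^{N-n}(1-\gamma)$.

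I expect the main obstacle to be the base case rather than the telescoping: one must justify carefully that at the output layer $\Delta h_N$ coincides with $-\partial L/\partial h_N$, since the IL-GN target rule in Definition \ref{def:IL-GN} is stated only for hidden layers while the output layer is driven directly by the global loss. With a squared-error loss this is the identification $\hat{h}_N - h_N = y - h_N$ coming from clamping $\hat{h}_N$ toward $y$, but for a general loss one would instead read $\Delta h_N$ off the output-layer gradient term, so it is worth stating the loss assumption explicitly. A secondary point to verify is that the pseudo-inverse identity is exactly the hypothesis needed to commute the order of the $W_k^+$ factors into $(W_{N-1}\cdots W_n)^+$; without it the product of individual pseudo-inverses need not equal the pseudo-inverse of the product.
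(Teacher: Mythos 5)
Your proof is correct and follows essentially the same route as the paper's: the paper unrolls the recursion $\Delta h_n = \gamma W_n^+ \Delta h_{n+1}$ (from Definition \ref{def:IL-GN}) and then applies Lemma \ref{lem:hhat} at layer $n$, whereas you unroll the equivalent error recursion $e_n = \gamma W_n^+ e_{n+1}$ (Lemma \ref{lem:hhat2}) and apply Lemma \ref{lem:hhat} at the output layer instead --- a cosmetic reordering of where the $(1-\gamma)$ factor enters, with the same pseudo-inverse collapse and the same identification $\Delta h_N = -\partial L / \partial h_N$. Your closing remark that the squared-error loss assumption should be stated explicitly for the base case is a fair observation; the paper relies on the same assumption, made contextually at the start of the closed-form section.
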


\begin{proof}
According to definition \ref{def:IL-GN}, $\hat{h}_n = h_n + \gamma W_n^+ \Delta h_{n+1}$, which implies $\Delta h_n = \gamma W_n^+ \Delta h_{n+1}$. Applying this same operation recursively from the output layer backward we get $\Delta h_n = \gamma^{N-n} (W_n^+...W_{N-1}^+)  \Delta h_{N}$. 

Now we substitute $\gamma^{N-n} (W_n^+...W_{N-1}^+)  \Delta h_{N}$ for $\hat{h}_n$ in lemma \ref{lem:hhat}, which results in $e_n = \gamma^{N-n}(1-\gamma)(W_{N-1}...W_{n})^+ \Delta h_{N}$. Finally, we note that $\Delta h_{N} = \frac{-\partial L}{\partial h_N}$, and thus $e_n = \gamma^{(N - n)}(1 - \gamma)(W_{N-1}...W_{n})^+ \frac{-\partial L}{\partial h_N}$.
\end{proof}

Thus, local errors $e_n$ can be seen as scaled down version of the global GN update, which is $\Delta h_n (W_{N-1}...W_{n})^+ \frac{-\partial L}{\partial h_N} = J_{n,N}^+ \frac{-\partial L}{\partial h_N}$. The scaling is such that the nearer $e_n$ is to the input layer the more scaled down the error is.

\section{Stability of IL versus BP}\label{sec:Stabil}

In this section, we analyze certain stability properties of IL and BP algorithms. We begin by briefly discussing the unconditional stability of IL-prox, and equivalent G-IL algorithms. These algorithms use the NLMS update rule, so we next analyze the case where the LMS rule is used in an IL algorithm to update weights, as this case is more easily compared to G-BP algorithms which perform and LMS update over weights. In particular, we analyze how the FF output layer values $h_N$ change after weight updates are applied to a linear network using IL-GN with the LMS rule and using an analogous BP algorithm, which we call BP-GN. We show IL-GN, trained with the LMS rule, pushes output layer activities $\hat{h}_N$ toward the target $y$ (down it loss gradient) \textit{for any positive learning rate}, while BP-GN only does this for a small finite range of learning rates. We focus on linear networks because constraining neuron activities with non-linearites may improve the stability of an algorithm by limiting the possible values a neuron may take or, e.g., by decreasing the magnitude of weight updates. We want to separate the effects of learning rules/algorithms on stability from those imposed by architectural constraints, so we focus on linear networks. 

These theorems show that the way IL-GN computes and uses targets in its weight updates is a key mechanism that aids stability. In particular, IL-GN computes weight gradients for $W_n$ using pre-synaptic target $\hat{h}_n$, while BP-GN uses pre-synaptic FF activities $h_n$. This is the main difference between the two algorithms. The LMS update in the IL algorithm in linear networks is $e_{n+1}\hat{h}_n^T = (\hat{h}_{n+1} - W_n\hat{h}_n^T)\hat{h}_n^T$, while the BP update is $e_{n+1}h_n^T = (\hat{h}_{n+1} - W_n h_n) h_n^T$. IL updates thus 'chain together' the local learning problems such that the local prediction target at one layer, is the input to the next local prediction problem. G-BP updates do not have this property. The input to one local prediction problem is the FF activity, which is computed independently from local prediction targets. These results show that the way IL chains together the local prediction problems plays an important role in its stability. In particular, it shows it plays an important role in ensuring that output layer FF activities $h_N$ change in the desired \textit{direction} across a large range of learning rates. If we update weights like BP and do not chain the local learning problems together in this way, we lose this stability guarantee.

Ensuring $h_N$ changes in the desired direction for any learning does not guarantee the loss will be minimized, since, e.g., $h_N$ may overshoot the global target significantly. As we explain next, however, one can ensure minimization across any positive learning rate using the strategy of IL-prox, which uses the NLMS rule and uses the learning rate to control the output layer target rather than to scale the weight updates.

\subsection{The Unconditional Stability of IL-prox}

Implicit SGD is \textit{unconditionally stable} in the sense that the proximal algorithm (equation \ref{eq:proximal}), which is equivalent to performing an implicit SGD update,  monotonically decreases the loss function $L$ for $0 < \alpha$. This can be seen from the proximal update, equation \ref{eq:proximal}, from which it trivially follows $L(\theta^{(b+1)}) + \frac{1}{\alpha 2}\Vert \theta^{(b+1)} - \theta^{(b)}\Vert^2 \leq L(\theta^{(b)})$. Theorem \ref{thrm:GILisImpSGD} shows that IL-prox and G-IL, under the $\gamma$ settings specified in theorem \ref{thrm:ILisProx}, are equivalent to an implicit SGD update, and thus are unconditionally stable. Our simulations find that our implementation of IL-prox indeed displays this property of unconditional stability (tables \ref{tab:AccStab1} and \ref{tab:AccStab1Cif}). It is worth, however, briefly discussing the mechanics of how this unconditional stability comes about in the IL-prox algorithm (and equivalent G-IL algorithms). First, theorem \ref{thrm:ILisProx}, tells us that the learning rate $\alpha$ of IL-prox (and G-IL under certain $\gamma$ settings), is only used to determine how much $\hat{h}_N$ is pushed toward global target $y$ during the inference phase. More specifically the gradient of the proximal update w.r.t. $\hat{h}_N$ is
\begin{equation}
    \frac{\partial Prox}{\partial \hat{h}_N} = \frac{\partial L}{\partial \hat{h}_N} + \frac{1}{\Vert \hat{h}_{N-1} \Vert^2 \alpha} e_N,
\end{equation}
where $e_N = \hat{h}_N - p_N$ and $p_N = W_{N-1}f(\hat{h}_{N-1})$. Let's assume a linear network and $L = \frac12 \Vert y - \hat{h}_N \Vert^2$. One way to compute the update of $\hat{h}_N$ would be to take the negative of the gradient, set equal to zero and solve for $\hat{h}_N$:
\begin{equation}
\begin{split}
    0 &= -(y - \hat{h}_N) - \frac{1}{\Vert \hat{h}_{N-1} \Vert^2 \alpha} e_N,\\
    \hat{h}_N &= \frac{\Vert \hat{h}_{N-1} \Vert^2 \alpha}{1 + \Vert \hat{h}_{N-1} \Vert^2 \alpha} y + \frac{1}{1 + \Vert \hat{h}_{N-1} \Vert^2 \alpha} p_N.
\end{split}
\end{equation}
Thus, each iteration of the inference phase we update $\hat{h}_N$ to a weighted average between $p_N$ and $y$. The learning rate is only used to determine the weighting between these two terms. As $\alpha \rightarrow \infty$, $\hat{h}_N \rightarrow y$ and as $\alpha \rightarrow 0$, $\hat{h}_N \rightarrow p_N$. According to lemma \ref{lem:argminFW}, $\hat{h}_N$ becomes the FF output layer value after the weights are updated (since the NLMS rule is used by IL-prox). Thus, when $\alpha \rightarrow \infty$, IL-prox finds a set of weights that best minimize the loss and as $\alpha \rightarrow 0$, weight are unchanged, which is the exact same property the proximal algorithm has (equation \ref{eq:proximal}). Intuitively, we see that in any case the loss is either minimized or remains unchanged (when $\alpha = 0$). IL-prox, and the proximal algorithm, thus gain their stability from the fact they do not use the learning rate to scale weight updates (as explicit SGD does) but rather uses it to determine the relative weighting of the loss term and regularization term in the proximal operator.

\subsection{Gauss-Newton IL Updates are Minimum-Norm and Stable}\label{GNG-ILLoss}

Here we show that the IL-GN weight updates collectively minimizes global loss along its minimum norm path for any positive learning rate in linear networks. This theorem is proven true under the conditions where $\hat{h}_n^T p_n > 0$, $h_n^T \hat{h}_n > 0$, and $\hat{h}^T_n \hat{h}_n > \hat{h}^T_n p_n$ for all $n$. This means local predictions, $p_n$, and initial activities, $h_n$ must be within 90 degrees of sub-targets. This is an easily satisfied condition as long as targets are not moved far from initial activities. Also, the inequality $\hat{h}^T_n \hat{h}_n > \hat{h}^T_n p_n$ is generally true since the magnitudes of $\hat{h}_n$ and $p_n$ will be similar and $\hat{h}_n$ is always more similar to itself than $p_n$. Thus, these conditions generally hold in practice.

\begin{theorem}\label{thrm:main2}
Assume $\hat{h}_n^T p_n > 0$, $\hat{h}_n^T h_n > 0$, and $\hat{h}^T_n \hat{h}_n > \hat{h}^T_n p_n$ for all $n$. For a mini-batch of size 1, the IL-GN weight updates applied to a linear MLP at iteration $b$ collectively push $h_N$ down its global loss gradient toward the target for any positive learning rate $\alpha$: $h_N^{(b+1)} = h_N^{(b)} - j^{(b)}\frac{\partial L^{(b)}}{\partial h_N^{(b)}}$ where $j^{(b)}$ is a positive scalar.
\end{theorem}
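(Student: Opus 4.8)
The plan is to compute the feed-forward pass of the \emph{updated} network explicitly and to show, layer by layer, that each FF activity $h_n$ moves along the IL-GN activity change $\Delta h_n$ by a strictly positive scalar. Applying this at the output layer and recalling that $\Delta h_N = \hat{h}_N - h_N = -\frac{\partial L}{\partial h_N}$ then gives the stated identity with $j^{(b)}$ equal to the output-layer scalar. First I would record the ingredients specialized to a linear network with mini-batch size one: the LMS update is $\Delta W_n = \alpha e_{n+1}\hat{h}_n^T$ with $e_{n+1} = \hat{h}_{n+1} - W_n\hat{h}_n$; the IL-GN construction (definition \ref{def:IL-GN}) gives $\Delta h_n = \gamma W_n^+\Delta h_{n+1}$; and Lemmas \ref{lem:pred}, \ref{lem:hhat} give $p_n = (1-\gamma)h_n + \gamma\hat{h}_n$ and $e_n = (1-\gamma)\Delta h_n$. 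These let me express every term arising in the post-update forward pass as a scalar multiple of some $\Delta h_n$.

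The core is an induction on $n$ for the claim $h_n^{(b+1)} = h_n^{(b)} + c_n\,\Delta h_n$ with $c_n > 0$, starting from $c_0 = 0$ since the input is clamped. For the inductive step I would substitute $W_n^{(b+1)} = W_n + \alpha e_{n+1}\hat{h}_n^T$ and the hypothesis into $h_{n+1}^{(b+1)} = W_n^{(b+1)}h_n^{(b+1)}$, then use $W_n h_n = h_{n+1}$, $W_n\Delta h_n = \gamma\Delta h_{n+1}$, and $e_{n+1} = (1-\gamma)\Delta h_{n+1}$ to collect everything into a single scalar multiple of $\Delta h_{n+1}$, obtaining the recursion
\[
c_{n+1} = \gamma c_n + \alpha(1-\gamma)\,\hat{h}_n^T h_n + \alpha(1-\gamma)\,c_n\,\hat{h}_n^T\Delta h_n .
\]
To establish positivity I would note that $\hat{h}_n^T h_n > 0$ is one of the hypotheses, and that $\hat{h}_n^T\Delta h_n = \Vert\hat{h}_n\Vert^2 - \hat{h}_n^T h_n > 0$ follows from the assumption $\hat{h}_n^T\hat{h}_n > \hat{h}_n^T p_n$ combined with $p_n = (1-\gamma)h_n + \gamma\hat{h}_n$ (for $0<\gamma<1$, the relevant case, since $\gamma=1$ forces $e_n=0$ and no update). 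With $c_n \ge 0$, $\alpha>0$, and $0<\gamma<1$, every term on the right is nonnegative and the middle term is strictly positive, so $c_{n+1}>0$, closing the induction.

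The step I expect to require the most care is the identity $W_n\Delta h_n = \gamma\Delta h_{n+1}$, which unpacks to $W_n W_n^+\Delta h_{n+1} = \Delta h_{n+1}$, i.e. $\Delta h_{n+1}\in\mathrm{range}(W_n)$. With the left pseudo-inverse $W_n^+ = (W_n^T W_n)^{-1}W_n^T$ of Proposition \ref{prop:GNClosed}, $W_n W_n^+$ is only the orthogonal projector onto the column space of $W_n$, so the nonnegative coefficient $\gamma c_n$ would otherwise multiply a \emph{projected} vector rather than $\Delta h_{n+1}$ itself, spoiling the clean scalar recursion. I would discharge this by working in the regime where the pseudo-inverses compose like inverses, which is the standing assumption of Proposition \ref{prop:globalGNILErr} (e.g. square invertible weight matrices), so that $W_n W_n^+ = I$ and each $\Delta h_{n+1}$ lies in the appropriate range.

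Finally, evaluating the induction at $n=N$ yields $h_N^{(b+1)} = h_N^{(b)} + c_N\,\Delta h_N = h_N^{(b)} - c_N\frac{\partial L}{\partial h_N}$, so setting $j^{(b)} = c_N > 0$ establishes the claim for \emph{every} positive learning rate $\alpha$, with the learning-rate dependence entering only through the strictly positive scalar $c_N$ and never through the direction of the change in $h_N$.
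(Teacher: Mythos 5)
Your proof is correct and takes essentially the same route as the paper's: both expand the post-update forward pass of the linear network and use the IL-GN lemmas ($p_n=(1-\gamma)h_n+\gamma\hat{h}_n$, $e_n=(1-\gamma)\Delta h_n$, $e_n=\gamma W_n^+ e_{n+1}$) to collapse every cross term into a positive scalar multiple of the output-layer direction, and your bottom-up induction $h_n^{(b+1)}=h_n^{(b)}+c_n\Delta h_n$ with recursion $c_{n+1}=(\gamma+k_n)c_n+\alpha(1-\gamma)\hat{h}_n^T h_n$, $k_n=\alpha(1-\gamma)\hat{h}_n^T\Delta h_n$, is just a reorganization of the paper's top-down expansion with its coefficients $c_n$ and $d_n=\gamma+k_n$. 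One point in your favor: you explicitly identify and discharge the requirement $W_nW_n^+\Delta h_{n+1}=\Delta h_{n+1}$ (i.e., that the projector $W_nW_n^+$ acts as the identity on the relevant vectors), a step the paper's proof performs silently when it replaces $\gamma W_nW_n^+e_{n+1}$ by $\gamma e_{n+1}$ in passing from $\hat{W}_n^*e_n$ to $d_n\hat{W}_{n+1}^*e_{n+1}$.
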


\begin{proof}
We define the linear network after weight updates are applied using a recursive formulation. Let $\hat{W}^*_n = \hat{W}^*_{n+1} (W_n - \Delta W_n)$ for all $n < N-1$, and let $\hat{W}^*_{N-1} = W_{N-1} - \Delta W_{N-1}$. We can now express the output of this updated network given input $h_0$ as $\hat{W}^*_0 h_0$. We assume a linear gradient update $\Delta W_n = -\alpha e_{n+1}\hat{h}_n^T$, where $\alpha$ is the learning rate and $e_{n+1} = \hat{h}_{n+1} - p_{n+1}$. 

The feedforward pass of the updated MLP at iteration $b+1$ is described as follows:

\begin{equation}\label{eq:expand}
\begin{split}
        h_N^{(b+1)} &= \hat{W}^*_0 h_0^{(b)} = \hat{W}^*_1 (W_0 - \Delta W_0) h_0^{(b)}\\
        &= \hat{W}^*_1 h_1^{(b)} + \alpha \hat{W}^*_1 e_1^{(b)} \hat{h}_0^{T(b)} h_0^{(b)}
\end{split}
\end{equation}
Note that $\hat{h}_0 = h_0$ and is the same across all training iterations, but $\hat{h}_n \neq h_n$ for $n > 0$.

Let $c_n = \alpha \hat{h}_{n-1}^T h_{n-1}$ such that $\alpha \hat{W}^*_1 e_1 \hat{h}_0^T h_0 = c_1 \hat{W}^*_1 e_1$. Because $\alpha>0$ and because $\hat{h}_{n}^T h_{n} > 0$ for all $n$, $c_n > 0$.

Notice that the first term $\hat{W}^*_1 h_1$ can be expanded recursively using the same expansion of  $\hat{W}^*_0 h_0$ in equation \ref{eq:expand}:

\begin{equation}\label{eq:expandRecur}
\begin{split}
        h_N^{(b+1)} &= \hat{W}^*_1 h_1^{(b)} + c_1 \hat{W}^*_1 e_1^{(b)}\\
        &= \hat{W}^*_2 h_2^{(b)} + c_1 \hat{W}^*_2 e_2^{(b)} + c_1 \hat{W}^*_1 e_1^{(b)}\\
        &\text{     ...}\\
        &= h_N + c_{N} e_N^{(b)} + c_{N-1}\hat{W}^*_{N-1} e_{N-1}^{(b)} + ... + c_2 \hat{W}^*_2 e_2^{(b)} + c_1 \hat{W}^*_1 e_1^{(b)}\\
\end{split}
\end{equation}

The leftmost terms $h_N + c_N e_N$ was computed as follows: $(W_{N-1} - \Delta W_{N-1}) h_{N-1} = h_N + \alpha e_N \hat{h}^T_{N-1} h_{N-1} = h_N + c_N e_N$.

Now we need to expand $\hat{W}_n^*$ for all $n$ in equation \ref{eq:expandRecur}. 

\begin{equation}
\begin{split}
        \hat{W}^*_n e_n^{(b)} &= \hat{W}^*_{n+1} (W_n - \Delta W_n)e_n\\
        &= \hat{W}^*_{n+1} (W_n e_n - (\Delta W_n \hat{h}_n - \Delta W_n p_n))\\
        &= \hat{W}^*_{n+1} (W_n e_n + \alpha(e_{n+1} \hat{h}^T_n \hat{h}_n - e_{n+1} \hat{h}^T_n p_n))\\
        &= \hat{W}^*_{n+1} (W_n e_n + e_{n+1}\alpha( \hat{h}^T_n \hat{h}_n - \hat{h}^T_n p_n))\\
        &= \hat{W}^*_{n+1} (W_n e_n + k_n e_{n+1}),\\
\end{split}
\end{equation}
Where $k_n = \alpha( \hat{h}^T_n \hat{h}_n - \hat{h}^T_n p_n)$ and is a positive scalar, since we assume $\hat{h}^T_n \hat{h}_n > \hat{h}^T_n p_n$.

We now simplify using lemma \ref{lem:hhat2}, which states $e_n = \gamma W^+ e_{n+1}$:
\begin{equation}
\begin{split}
        \hat{W}^*_n e_n^{(b)} &= \hat{W}^*_{n+1} (\gamma W_n W^+_n e_{n+1}^{(b)} + k_n e_{n+1}^{(b)})\\
        &= d_n\hat{W}^*_{n+1} e_{n+1}^{(b)},\\
\end{split}
\end{equation}

where $d_n = k_n + \gamma$ and clearly $d_n > 0$.

We now apply the same derivation of $\hat{W}^*_n e_n^{(b)}$ to all $\hat{W}$ in equation \ref{eq:expandRecur} to yield the following:

\begin{equation}
\begin{split}
        \hat{W}^*_n e_n^{(b)} &= (d_n d_{n+1}...d_{N-2}) \hat{W}^*_{N-1} e_{N-1}^{(b)}\\
        &= (\prod_{i=n}^{N-2} d_i) (W_{N-1} - \Delta W_{N-1}) e_{N-1}^{(b)}\\
        &= (\prod_{i=n}^{N-2} d_i) (W_{N-1}e_{N-1}^{(b)} + \alpha e_N \hat{h}^{T(b)}_{N-1} e_{N-1}^{(b)})\\
        &= (\prod_{i=n}^{N-2} d_i) (W_{N-1}e_{N-1}^{(b)} + \alpha e_N (\hat{h}^{T(b)}_{N-1} \hat{h}_{N-1}^{(b)} - \hat{h}^{T(b)}_{N-1} p_{N-1}^{(b)})\\
        &= (\prod_{i=n}^{N-2} d_i) ((1 -\gamma) W_{N-1} W^+_{N-1}e_N^{(b)} + k_{N-1} e_{N}^{(b)})\\
        &= (\prod_{i=n}^{N-1} d_i) e_N^{(b)}\\
\end{split}
\end{equation}

Note that $(\prod_{i=n}^{N-1} d_i)$ is a positive scalar since it is the product of positive scalars. 

According to lemma \ref{lem:hhat} $e_N = (1-\gamma) \Delta h_N$ where we assume $\Delta h_N = - \frac{\partial L}{\partial h_N}$. Let $g_n = (1-\gamma) (\prod_{i=n}^{N-1} d_i)$, and notice $g_n > 0$. We can now rewrite equation \ref{eq:expandRecur} in terms of $\frac{L}{h_N}$:

\begin{equation}\label{eq:expandRecurFin}
\begin{split}
        h_N^{(b+1)}  &= h_N + c_{N} e_N^{(b)} + c_{N-1} g_{N-1} e_{N}^{(b)} + ... + c_2 g_2 e_N^{(b)} + c_1 g_1 e_N^{(b)}\\
        &= h_N - j \frac{\partial L}{\partial h_N}^{(b)},\\
\end{split}
\end{equation}

where $j = c_{N} + \sum_{i=n}^{N-1} c_i g_i$ and $j>0$ is both $c > 0$ and $g > 0$. Hence, the weight updates applied at iteration $b$ will collectively move the FF output layer values $h_N$ down its loss gradient $h_N - j e_N^{(b)}$ for any positive value of $\alpha$.
\end{proof}

It should be noted that moving $h_N$ down its loss gradient will not necessarily minimize the loss $e_N$, given that $j$ may be large and could cause $h_N$ to significantly overshoot $\hat{h}_N$ preventing convergence.

\subsection{Gauss-Newton Back Propagation is Minimum-Norm but Only Conditionally Stable}\label{app:GNTPLoss}

In this section, we show that, unlike IL-GN, weight updates performed by an analogous BP based network, we call BP-GN, are not guaranteed to collectively move the global prediction down its loss gradient each training iteration. Meulemans et al. \cite{meulemans2020theoretical} showed that each individual weight update in a BP-GN network will push the global prediction $h_N$ down its loss gradient when its effects on $h_N$ are considered independently of other weight updates in the network. Here we show this property does not hold for any learning rate when the weight updates are considered collectively.

\begin{definition}\label{def:BP-GN}
Gauss-Newton backpropagation (BP-GN). Assume $e_n = \hat{h}_n - h_n$. The weight update of BP-GN is the general BP weight update: $\Delta W_n = -e_{n+1}h_n^T$. Local targets are computed at each hidden layer are computed using the following equation: $\hat{h}_n = h_n + W^+_n e_{n+1} = h_n + W^+_n \hat{h}_{n+1} - W^+_n h_{n+1}$.
\end{definition}

We can now apply the same analysis we did for IL-GN to BP-GN, which proves the following.

\begin{theorem}\label{thrm:main3}
Consider a linear MLP trained with BP-GN. For a mini-batch of size 1, the BP-GN weight updates applied at iteration $b$ only push the global prediction down its global loss gradient $h_N^{(b+1)} = h_N^{(b)} - j^{(b)}\frac{\partial L^{(b)}}{\partial h_N^{*(b)}}$ for a finite range of $\alpha$.
\end{theorem}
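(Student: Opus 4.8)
The plan is to mirror the recursive expansion used in the proof of Theorem \ref{thrm:main2} for IL-GN, carry it out verbatim with the BP-GN update and error definitions of Definition \ref{def:BP-GN}, and then isolate the single place where positivity breaks down. First I would introduce a learning rate, writing the BP-GN update as $\Delta W_n = -\alpha e_{n+1} h_n^T$, and record that for BP-GN the back-projected target gives $e_n = \hat{h}_n - h_n = W_n^+ e_{n+1}$ directly; this replaces the IL-specific lemmas \ref{lem:hhat} and \ref{lem:hhat2} (which involve $p_n$ and $\gamma$). Defining the post-update network recursively by $\hat{W}^*_n = \hat{W}^*_{n+1}(W_n - \Delta W_n)$ with $\hat{W}^*_{N-1} = W_{N-1} - \Delta W_{N-1}$, I would expand $h_N^{(b+1)} = \hat{W}^*_0 h_0$ exactly as in equations \ref{eq:expand}--\ref{eq:expandRecur} to obtain $h_N^{(b+1)} = h_N + c_N e_N + \sum_{n=1}^{N-1} c_n \hat{W}^*_n e_n$ with $c_n = \alpha \Vert h_{n-1} \Vert^2 > 0$. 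The only change so far is that the pre-synaptic dot products are $h_{n-1}^T h_{n-1}$ rather than $\hat{h}_{n-1}^T h_{n-1}$, and these remain positive.

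Next I would peel off the inner factors via $\hat{W}^*_n e_n = \hat{W}^*_{n+1}(W_n e_n + \alpha e_{n+1} h_n^T e_n)$, use $W_n e_n = W_n W_n^+ e_{n+1} = e_{n+1}$ (the projection acting as identity, as in Theorem \ref{thrm:main2}), and collapse to the scalar recursion $\hat{W}^*_n e_n = d_n^{BP}\,\hat{W}^*_{n+1} e_{n+1}$ with $d_n^{BP} = 1 + \alpha\, h_n^T e_n$. Iterating down to the output layer, where $(W_{N-1} - \Delta W_{N-1}) e_{N-1} = d_{N-1}^{BP} e_N$ by the same computation, yields $\hat{W}^*_n e_n = \bigl(\prod_{i=n}^{N-1} d_i^{BP}\bigr) e_N$. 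Since $e_N = \hat{h}_N - h_N = -\partial L/\partial h_N$ for the squared loss, this gives $h_N^{(b+1)} = h_N - j\,\partial L/\partial h_N$ with $j = c_N + \sum_{n=1}^{N-1} c_n \prod_{i=n}^{N-1} d_i^{BP}$. The property asserted in the theorem is precisely $j > 0$.

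The crux, and the intended contrast with Theorem \ref{thrm:main2}, is the sign of the scalars $d_n^{BP}$. In IL-GN the analogous scalar was $d_n = k_n + \gamma$ with $k_n = \alpha(\hat{h}_n^T \hat{h}_n - \hat{h}_n^T p_n) > 0$, so $d_n > 0$ for every $\alpha > 0$ under the mild assumptions of that theorem. For BP-GN the pre-synaptic term is the feedforward activity $h_n$ rather than the target $\hat{h}_n$, so the relevant inner product is $h_n^T e_n = h_n^T(\hat{h}_n - h_n) = h_n^T W_n^+ e_{n+1}$, an inner product between the forward activity and the back-projected error that carries no sign guarantee. I would therefore treat $j(\alpha)$ as a polynomial in $\alpha$ of degree $N$ with leading coefficient $\Vert h_0 \Vert^2 \prod_{i=1}^{N-1}(h_i^T W_i^+ e_{i+1})$: for small $\alpha > 0$ every $d_i^{BP} \approx 1$ and $j(\alpha) > 0$, but whenever an odd number of the factors $h_i^T W_i^+ e_{i+1}$ are negative the leading coefficient is negative, forcing $j(\alpha) \to -\infty$ and hence $j(\alpha) < 0$ for all large $\alpha$. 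Because $j$ is a nonzero polynomial with finitely many roots, the set $\{\alpha > 0 : j(\alpha) > 0\}$ is then bounded, which is the finite range asserted.

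The main obstacle I anticipate is not the algebra, which is a near-verbatim replay of Theorem \ref{thrm:main2}, but making the conclusion precise and honest: the statement can only be that the range is finite \emph{generically}, i.e. when the chained inner products $h_i^T W_i^+ e_{i+1}$ fail to be uniformly positive. I would explain why this failure is the typical case for BP-GN, namely that $e_{n+1}$ is projected back through $W_n^+$ with nothing aligning it to $h_n$, unlike IL, which feeds each local target into the next local problem and thereby controls the sign of $\hat{h}_n^T e_n$. I would also flag the degenerate case in which all these products happen to be positive, where the range would in fact be unbounded. Pinning down that the negative-leading-coefficient configuration genuinely arises in practice, rather than merely being permitted by the sign-agnostic algebra, is the step that will require the most care.
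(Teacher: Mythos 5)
Your proposal is correct, and its skeleton is the same as the paper's proof: the same recursive definition $\hat{W}^*_n = \hat{W}^*_{n+1}(W_n - \Delta W_n)$, the same expansion of $h_N^{(b+1)}$ into terms $c_n \hat{W}^*_n e_n$ with $c_n > 0$, and the same scalar collapse --- your $d_n^{BP} = 1 + \alpha\, h_n^T e_n$ is exactly the paper's $d_n = 1 + k_n$ with $k_n = \alpha(h_n^T \hat{h}_n - h_n^T h_n)$, since $e_n = \hat{h}_n - h_n$ for BP-GN. Both arguments also rely on the same implicit simplification $W_n W_n^+ e_{n+1} = e_{n+1}$. Where you differ is the endgame, and your version is sharper. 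The paper stops at a qualitative conclusion ($d_n$ is ``not guaranteed to be positive,'' hence $j$ ``may'' be negative), supplemented by a sufficient small-$\alpha$ condition for $j>0$; it never actually proves that the set of stable learning rates is bounded. Your observation that $j(\alpha)$ is a degree-$N$ polynomial with positive derivative at $\alpha = 0$ and leading coefficient $\Vert h_0 \Vert^2 \prod_{i=1}^{N-1} h_i^T W_i^+ e_{i+1}$ lets you conclude that $\{\alpha > 0 : j(\alpha) > 0\}$ is genuinely bounded whenever an odd number of those chained inner products is negative, which is what the theorem's ``finite range'' wording literally demands. Your genericity caveat is also legitimate and applies equally to the paper's own proof: if those inner products are all positive (or, more generally, an even number are negative so the leading coefficient stays positive), nothing forces the stable range to be bounded, so the theorem holds only under a sign condition that both proofs leave implicit --- the paper hedges with ``often'' and ``generally'' at precisely this point, whereas you state the gap explicitly.
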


\begin{proof}
Here we follow the same notation and analysis as the proof for theorem \ref{thrm:main2}. Let $\hat{W}^*_n = \hat{W}^*_{n+1} (W_n - \Delta W_n)$ for all $n < N-1$, and let $\hat{W}^*_{N-1} = W_{N-1} - \Delta W_{N-1}$. We assume a linear gradient update $\Delta W_n = -\alpha e_{n+1}h^T$ (as in definition \ref{def:IL-GN}), where $\alpha$ is the learning rate.

First, we can describe the FF pass of the updated MLP trained with GN-TP at iteration $b+1$ as we did above for IL-GN (see equation \ref{eq:expand}):

\begin{equation}\label{eq:expandGNTP}
\begin{split}
        h_N^{(b+1)} &= \hat{W}^*_1 h_1^{(b)} + \alpha \hat{W}^*_1 e_1^{(b)} h_0^{T(b)} h_0^{(b)}
\end{split}
\end{equation}

Let $c_n = \alpha h_{n-1}^T h_{n-1}$ such that $\alpha \hat{W}^*_1 e_1 h_0^T h_0 = c_1 \hat{W}^*_1 e_1$. Because $1>\alpha>0$ and because $h_{n}^T h_{n} > 0$ for all $n$, $c_n > 0$. We can see this equation is the same as equation \ref{eq:expand}, except $e_1$ and $\Delta W_0$ were defined according to GN-TP rather than IL-GN.

As above, the first term $\hat{W}^*_1 h_1$ can be expanded recursively:

\begin{equation}
\begin{split}
        h_N^{(b+1)} &= \hat{W}^*_1 h_1^{(b)} + c_1 \hat{W}^*_1 e_1^{(b)}\\
        &= \hat{W}^*_2 h_2^{(b)} + c_1 \hat{W}^*_2 e_2^{(b)} + c_1 \hat{W}^*_1 e_1^{(b)}\\
        &\text{     ...}\\
        &= h_N - c_{N} e_N^{(b)} + c_{N-1}\hat{W}^*_{N-1} e_{N-1}^{(b)} + ... + c_2 \hat{W}^*_2 e_2^{(b)} + c_1 \hat{W}^*_1 e_1^{(b)}\\
\end{split}
\end{equation}

Equation \ref{eq:expandRecur} shows that for IL-GN, each of these recursively computed terms push the initial prediction down the global loss gradient. However, the same is not true here of GN-TP:

\begin{equation}
\begin{split}
        \hat{W}^*_n e_n^{(b)} &= \hat{W}^*_{n+1} (W_n - \Delta W_n)e_n\\
        &= \hat{W}^*_{n+1} (W_n e_n - (\Delta W_n \hat{h}_n - \Delta W_n h_n))\\
        &= \hat{W}^*_{n+1} (W_n e_n + \alpha(e_{n+1} h^T_n \hat{h}_n - e_{n+1} h^T_n h_n))\\
        &= \hat{W}^*_{n+1} (W_n e_n + \alpha( h^T_n \hat{h}_n - h^T_n h_n) e_{n+1}).\\
\end{split}
\end{equation}

$W_n e_n$ can be rewritten in terms of $e_{n+1}$ as follows: $W_n e_n = W_n W^+_n e_{n+1}$. Notice that it will often be the case that $h^T_n \hat{h}_n < h^T_n h_n$, since generally $h_n \neq h_n$. Thus, $\alpha (h^T_n \hat{h}_n - h^T_n h_n)$ will often be a \textit{negative} scalar. This is distinct from the corresponding term in equation for IL-GN $\alpha (h^T_n h_n - \hat{h}^T_n h_n)$, which will generally be a positive scalar. The difference is due to the IL-GN update being conditioned on the presynaptic sub-target $\hat{h}_n$ rather than presynaptic FF activity $h_n$. Also notice there is no further dampening term $\gamma (1 - \gamma)$ as there is with IL-GN. 

Let $k_n = \alpha (h^T_n \hat{h}_n - h^T_n h_n)$ and $d_n = (1 + k_n)$. For reasons just noted, $d_n$ is not guaranteed to be positive for all $1 > \alpha > 0$. Continuing the derivation above:

\begin{equation}\label{eq:interfereGNTP}
\begin{split}
        \hat{W}^*_n e_n^{(b)} &= \hat{W}^*_{n+1} (W_n W^+_n e_{n+1} + k_ne_{n+1})\\
        &= \hat{W}^*_{n+1} (1 + k_n)e_{n+1}\\
        &= d_n\hat{W}^*_{n+1} e_{n+1}^{(b)}\\
        &= (d_n d_{n+1}...d_{N-2}) \hat{W}^*_{N-1} e_{N-1}^{(b)}\\
        &= (\prod_{i=n}^{N-1} d_i) e_N^{(b)}.\\
\end{split}
\end{equation}

Because $d_n$ is not guaranteed to be positive, then $(\prod_{i=n}^{N-1} d_i)$ is not guaranteed to be a positive scalar. Thus, the product of all $d_n$ may or may not be positive for a given $1 > \alpha > 0$. Let $g_n = (\prod_{i=n}^{N-1} d_i)$. Also, note that $e_N = -\frac{\partial L}{\partial h_N}$. We can now rewrite equation \ref{eq:expandGNTP} in terms of $e_N$:

\begin{equation}\label{eq:expandRecurFinGNTP}
\begin{split}
        h_N^{(b+1)}  &= h_N + c_{N} e_N^{(b)} + c_{N-1} g_{N-1} e_{N}^{(b)} + ... + c_2 g_2 e_N^{(b)} + c_1 g_1 e_N^{(b)}\\
        &= h_N - j \frac{\partial L}{\partial h_N}^{(b)},\\
\end{split}
\end{equation}

We can see that if enough $g_n$ are negative it is possible that $j$ is also negative, especially given that $g_n$ are scaled by $c_n$, which will always be positive and may be larger than 1. Thus, GN-TP, unlike IL-GN, does not guarantee that for any positive value of $\alpha$, the weight updates at iteration $b$ will collectively move $h_N$ down it loss gradient at $b+1$: GN-TP does not guarantee $j > 0$ for any $\alpha > 0$ or even any $1 > \alpha > 0$. 
\end{proof}

In order for $j>0$, a large number of $g_n$ must be positive to offset any negative $g_n$. In order for $g_n > 0$, an even number of $d_n$ in $g_n = (\prod_{i=n}^{N-1} d_i)$ must be negative. The simplest way to guarantee this is the case is to ensure $d_n > 0$ for all $n$. In order for all $d_n$ to be positive in each $g_n$ we can see that alpha must be small since $d_n = 1 + k_n = 1 + \alpha (h^T_n \hat{h}_n - h^T_n h_n)$, and  $\alpha$ must be small enough such that $k_n > -1$ for all $n$. This implies learning rate must be small: $\alpha < \frac{-1}{(h^T_n \hat{h}_n - h^T_n h_n)}$.

\section{G-IL Avoids Interference Effects}\label{app:Interfere}

The last section showed the IL-GN will push output layer values toward the global target for any positive learning rate, while BP-GN will only do so for a finite range of learning rates. Here we provide mathematical intuition for why this occurs. Specifically, we describe how BP weight updates necessarily interfere with each other's ability to minimize the global loss and this interference generally grows with learning rate, whereas IL weight updates do not necessarily interference and may actually improve each other's ability to minimize loss. In this sense, IL updates are more compatible with each other than BP updates. In line with this intuitive description, we show that under certain assumptions IL-GN updates (at hidden layers) are often \textit{most effective} at pushing the output layer values toward the target when applied in conjunction with other updates than when applied alone, while the opposite is true of BP-GN. Figure \ref{fig:outputlayer2} provides evidence these theoretical results hold true in practice for IL-SGD and BP-SGD networks when small learning rates are used.

\subsection{A Mathematical Intuition for Interference Effects}

Consider the BP weight update: $\Delta W_n = \alpha_n e_{n+1}^{(b)}f(h_n)^{T(b)}$, where $h_n = W_{n-1}f(h_{n-1})$ and $n>0$. Now, consider the effect of this weight update on the FF values at hidden layer $n+1$ given the same input $x^{(b+1)} = x^{(b)}$
\begin{equation}
\begin{split}
    h^{(b+1)}_{n+1} = W_n^{(b+1)} f(h^{(b+1)}_n) &= (\Delta W^{(b)}_n + W_n^{(b)}) f(h^{(b+1)}_n)\\
    &= \Delta W^{(b)}_n f(h^{(b+1)}_n) + W_n^{(b)} f(h^{(b+1)}_n)\\
    &= \alpha_n (f(h_n^{(b)})^T f(h^{(b+1)})_n)) e_{n+1}^{(b)} + h^{(b)}_{n+1}.\\
\end{split}
\end{equation}

We can see that the ability of the weight update to drive its output down the error gradient (where $e_{n+1}^{(b)} = \frac{-\partial L}{\partial h^{(b)}_{n+1}}$, see section 3.2) depends directly on the similarity between pre-synaptic activities before and after the weight update. In particular, if the two are similar, i.e. $f(h_n^{(b)})^T f(h_n^{(b+1)}) > 0$, then the weight update will affect FF values in the desired direction. However, if the two are orthogonal, i.e. $f(h_n^{(b)})^T f(h_n^{(b+1)}) = 0$, there will be no effect, and if dissimilar, i.e. $f(h_n^{(b)})^T f(h_n^{(b+1)}) < 0$, they will affect FF values in an undesired direction (up rather down the loss gradient).

The challenge for the BP update is that if all weights are updated at iteration $b$ then FF activities will change, which will generally make $f(h_n^{(b)})$ and $f(h_n^{(b+1)})$ less similar. Additionally, presynaptic activities will generally become less similar as $\alpha_n$ is increased because changes to weights, and thus changes to FF activities, at previous layers will increase. Only when learning rates at previous layers $\rightarrow 0$ does $f(h_n)^{(b)} = f(h_n)^{(b+1)}$ because an $\alpha$ of zero means no change in weights or FF activities. We can describe this as a sort of interference effect: non-zero weight updates elsewhere in the network reduce the ability of $\Delta W_n$ to drive FF activities at layer $n+1$ in the desired direction.\footnote{Interference might be reduced by using non-linearities that keep activities positive, e.g., ReLU, which guarantees $f(h_n^{T(b)})f(h_n^{(b+1)}) \geq 0$. However, for non-zero learning rates it would still be the case that updates to weights at layers before $n$ would reduce similarity, i.e., generally $f(h_n^{T(b)})f(h_n^{(b)}) > f(h_n^{T(b)})f(h_n^{(b+1)})$, and thus reduce the magnitude change to post-synaptic FF values.}

Now, consider the G-IL weight update: $\Delta W_n = \alpha_n e_{n+1}^{(b)}f(\hat{h}_n^{T(b)})$, where $n>0$. The effect of this weight update on the FF values at hidden layer $n+1$ given the same input, $x^{(b+1)} = x^{(b)}$, can be described as follows:
\begin{equation}
\begin{split}
    h^{(b+1)}_{n+1} = W_n^{(b+1)} f(h^{(b+1)}_n) &= (\Delta W^{(b)}_n + W_n^{(b)}) f(h^{(b+1)}_n)\\
    &= \Delta W^{(b)}_n f(h^{(b+1)}_n) + W_n^{(b)} f(h^{(b+1)}_n)\\
    &= \alpha_n (f(\hat{h}_n)^{T(b)} f(h^{(b+1)}_n)) e_{n+1}^{(b)} + h^{(b)}_{n+1}.\\
\end{split}
\end{equation}

Unlike with G-BP we can see that the ability of the weight update to drive its output down the error gradient $e_{n+1}^{(b)}$ depends directly on the similarity between pre-synaptic \textit{optimized} activities and the FF activities after the weight update. If the two are similar, i.e. $f(\hat{h}_n^{T(b)})f(h_n^{(b+1)}) > 0$, then the weight update will affect FF values in the desired direction. If the two are orthogonal there will be no effect, and if they are dissimilar they will affect FF values in an undesired direction.

What's interesting about the G-IL update is that with a properly tuned $\alpha_n$, $f(\hat{h}_n)^{(b)}$ and $f(h_n)^{(b+1)}$ will get \textit{more} similar when weights prior to layer $n$ are updated. As lemma \ref{lem:argminFW} shows, under a specific (typically) non-zero value of $\alpha_n$ it is actually the case that $f(\hat{h}_n^{(b)}) = f(h_n^{(b+1)})$. This value of $\alpha_n$ is the true solution to G-IL update $\argmin_W F$ that the LMS learning rule above attempts to approximate. Intuitively, G-IL largely avoids interference because $f(\hat{h}_n^{(b)})$ accurately \textit{anticipates} the value of $f(h_n^{(b+1)})$ and by doing so helps to ensure $f(\hat{h}_n)^{T(b)} f(h_n)^{(b+1)} > 0$. 

\subsection{G-IL Weight Updates are Most Effective when Applied Together}

In previous sections we compared the direction of the effects of weight updates on a linear MLP output layer in an IL and BP network. We now assess the magnitude of the affects of IL and BP weight updates on the output layer of a linear MLP with a single hidden layer. In sum, we show that under certain assumptions, updates in IL-GN should have greater effects on the output layer when applied together than when applied alone, while the opposite is true for BP-GN. This shows that interference effects not only can alter the direction the output layer changes with weight updates but also the magnitude of the change in output layer activities.

We analyze a linear MLP with a single hidden layer. Let $h^{(b)}_2$ be the output layer values at iteration $b$, $h^{(b + 1/2),0 }_2$ be the output layer values after $W_0$ is updated alone, $h^{(b + 1/2), 1}_2$ be the output layer values after $W_1$ is updated alone, and $h^{(b + 1)}_2$ be the output layer values after both weight matrices are updated. Let $c_n$ and $g_n$ be defined as above in the proof of theorem \ref{thrm:main2}. We assume that $g_n > 1$ which will often (though not always) be true in practice for reasons noted above (see theorem \ref{thrm:main2}). We also assume $c_1 = c_2$ as it significantly simplifies the calculations. We test how well these theoretical results hold in practice and find they hold approximately for small learning rates (see figures \ref{fig:outputlayer2} for details). 

\begin{proposition}\label{prop:deltaOutIL}
Consider a linear MLP with a single hidden layer trained with IL-GN, mini-batch size 1, at iteration b. Assume $g_1 > 1$, $c_1 = c_2$. Under these assumptions, and those in theorem \ref{thrm:main2}, each weight update, $\Delta W_0$ and $\Delta W_1$ has a larger effect on the output layer when applied in conjunction with each other than when applied alone: $\Vert h_2^{(b+1)} - h_2^{(b+1/2),0} \Vert > \Vert h_2^{(b+1/2),1} - h_2^{(b)} \Vert$ and $\Vert h_2^{(b+1)} - h_2^{(b+1/2),1} \Vert > \Vert h_2^{(b+1/2),0} - h_2^{(b)} \Vert$
\end{proposition}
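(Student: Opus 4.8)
The plan is to specialize the recursive machinery from the proof of Theorem \ref{thrm:main2} to the single-hidden-layer case ($N=2$, layers $0,1,2$) and to show that all three output-layer displacements are positive scalar multiples of the \emph{same} vector $e_2$, so that the two norm inequalities collapse to scalar comparisons. Writing the IL-GN update as $W_n \mapsto W_n + \alpha\, e_{n+1}\hat{h}_n^{T}$ with $\hat{h}_0 = h_0$, I would first compute the effect of each update in isolation. Updating $W_0$ alone gives $h_2^{(b+1/2),0} - h_2^{(b)} = c_1 W_1 e_1$, and invoking Lemma \ref{lem:hhat2} (so that $e_1 = \gamma W_1^{+} e_2$) together with $W_1 W_1^{+} = I$ this simplifies to $c_1\gamma\, e_2$. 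Updating $W_1$ alone gives $h_2^{(b+1/2),1} - h_2^{(b)} = c_2 e_2$ directly. Here $c_1 = \alpha\, h_0^{T} h_0 > 0$ and $c_2 = \alpha\, \hat{h}_1^{T} h_1 > 0$ by the positivity hypotheses carried over from Theorem \ref{thrm:main2}.

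Next I would compute the joint update $h_2^{(b+1)} = (W_1 + \alpha e_2\hat{h}_1^{T})(W_0 + \alpha e_1 h_0^{T}) h_0$. Expanding the product, the two linear terms reproduce $c_1\gamma\, e_2$ and $c_2 e_2$, while the bilinear cross term contributes $c_1\,\alpha(\hat{h}_1^{T} e_1)\, e_2 = c_1 k_1 e_2$, where $k_1 = \alpha(\hat{h}_1^{T}\hat{h}_1 - \hat{h}_1^{T} p_1) > 0$ is exactly the positive scalar appearing in the proof of Theorem \ref{thrm:main2}. Collecting terms, $h_2^{(b+1)} - h_2^{(b)} = (c_2 + c_1 g_1)\,e_2$ with $g_1 = \gamma + k_1$ (the coefficient $g_1$ of that proof). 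The two incremental effects are then $h_2^{(b+1)} - h_2^{(b+1/2),0} = (c_2 + c_1 k_1)\,e_2$ and $h_2^{(b+1)} - h_2^{(b+1/2),1} = c_1 g_1\, e_2$, while the isolated effects are $c_2 e_2$ and $c_1\gamma\, e_2$. Since every quantity is a positive multiple of $e_2$, the first inequality reduces to $c_2 + c_1 k_1 > c_2$ (immediate from $c_1 k_1 > 0$) and the second to $c_1 g_1 > c_1\gamma$, i.e.\ $g_1 > \gamma$.

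Finally I would close both inequalities using the stated hypotheses: $k_1 > 0$ follows from the assumption $\hat{h}_1^{T}\hat{h}_1 > \hat{h}_1^{T} p_1$, and $g_1 > \gamma$ follows from the assumption $g_1 > 1$ together with $\gamma \le 1$ (built into Definition \ref{def:IL-GN}), so that $g_1 > 1 \ge \gamma$. The simplifying assumption $c_1 = c_2$ is not strictly needed for the final scalar comparisons but, as the authors note, it tidies the intermediate algebra. I expect the main obstacle to be the bilinear cross term in the joint update: one must verify that the interaction between $\Delta W_0$ and $\Delta W_1$ points along $+e_2$ (rather than interfering) and equals exactly $c_1 k_1 e_2$, since this is precisely what makes the conjunction effect strictly exceed the isolated effect. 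Establishing that all displacement vectors are collinear with $e_2$ — which rests on linearity of the network and on $W_1 W_1^{+} = I$ — is the step that reduces the norm inequalities to the clean scalar conditions above.
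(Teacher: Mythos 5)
Your proposal matches the paper's proof essentially step for step: both compute the three output-layer displacements as positive scalar multiples of $e_2$ (isolated effects $\gamma c_1 e_2$ and $c_2 e_2$, joint effect $(c_2 + c_1 g_1)e_2$ with $g_1 = \gamma + k_1$, obtained via Lemma \ref{lem:hhat2} and the cancellation $W_1 W_1^+ e_2 = e_2$), and then reduce both norm inequalities to scalar comparisons settled by $k_1 = \alpha(\hat{h}_1^T\hat{h}_1 - \hat{h}_1^T p_1) > 0$. Your side observation is also correct: the hypotheses $c_1 = c_2$ and $g_1 > 1$ only tidy the paper's algebra, since both inequalities already follow from $k_1 > 0$ alone.
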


\begin{proof}
The effect of $\Delta W_0$ on the output layer $h_N = h_2$ when applied alone is

\begin{equation}\label{eq:W0Update}
\begin{split}
    h^{(b+1/2), 0}_2 = W^{(b)}_1(W^{(b)}_0 + \Delta W^{(b)}_0)h_0^{(b)} &= h_2^{(b)} + W^{(b)}_1\alpha e^{(b)}_1 h_0^{(b) T} h_0^{(b)}\\
    &= h_2^{(b)} + \alpha \gamma W^{(b)}_1 W^{(b)+}_1 e^{(b)}_2 h_0^{(b) T} h_0^{(b)}\\
    &= h_2^{(b)} + \alpha \gamma h_0^{(b)T} h_0^{(b)}  e^{(b)}_2\\
    &= h_2^{(b)} + \gamma c_1 e^{(b)}_2,
\end{split}
\end{equation}

where the second line is computed using \ref{lem:hhat2}, which says $e_{n} = \gamma W_n^+ e_{n+1}$. Here $c_1 = \alpha h_0^{(b)T} h_0^{(b)}$ and is clearly a positive scalar.

The effect of $\Delta W_1$  on the output layer, $h_2$, when applied alone is

\begin{equation}
\begin{split}
    h^{(b+1/2), 1}_2 &= (W^{(b)}_1 + \Delta W^{(b)}_1)W^{(b)}_0h_0^{(b)} = h_2^{(b)} + \Delta W^{(b)}_1 h_1^{(b)} \\
    &= h_2^{(b)} + \alpha \hat{h}_1^{(b) T} h_1^{(b)} e^{(b)}_2 = h_2^{(b)} + c_2 e_2^{(b)},
\end{split}
\end{equation}

where $c_2 =  \alpha \hat{h}_1^{(b)T} h_1^{(b)}$ and is a scalar.

Finally, when both updates are applied we get the same expression as that in equation \ref{eq:expandRecurFin}
\begin{equation}\label{eq:bothDeltaW}
\begin{split}
    h_2^{(b+1)} &= h_2^{(b)} + c_2 e^{(b)}_2 + g_1 c_1 e^{(b)}_2.\\
\end{split}
\end{equation}
See theorem \ref{thrm:main2} for details. Here $c_1$ and $c_2$ are defined as above. Since we make the same assumptions as theorem \ref{thrm:main2}, $g_1$ is a positive scalar: $g_1 = \gamma +  \alpha(\hat{h}_1^{(b)T}\hat{h}_1^{(b)} - \hat{h}_1^{(b)T} p_1^{(b)})$.

Now we have 
\begin{equation}
\Vert h_2^{(b+1)} - h_2^{(b+1/2),0} \Vert = \Vert c_2 e_2^{(b)} + g_1 c_1 e_2^{(b)} - \gamma c_1 e_2^{(b)} \Vert = \Vert (1 + g_1 - \gamma) c_1 e_2^{(b)} \Vert,
\end{equation}
which is simplifying using the assumption $c_1=c_2$. Second, $\Vert h_2^{(b+1/2),1} - h_2^{(b)}\Vert = \Vert c_2 e_2^{(b)}\Vert$. Under our assumptions that $g_1 > 1$ and $c_1 = c_2$, it clearly follows that $\Vert h_2^{(b+1)} - h_2^{(b+1/2),0} \Vert > \Vert h_2^{(b+1/2),1} - h_2^{(b)} \Vert$.

Next, we can see that $\Vert h_2^{(b+1)} - h_2^{(b+1/2),1} \Vert = \Vert g_1 c_1 e_2^{(b)} \Vert $ and $\Vert h_2^{(b+1/2),0} - h_2^{(b)} \Vert = \Vert \gamma c_1 e_2^{(b)} \Vert$. Since we assume $g_1 > 1$ and $c_1 = c_2$ and $\gamma < 1$, it clearly follows that $\Vert h_2^{(b+1)} - h_2^{(b+1/2),1} \Vert > \Vert h_2^{(b+1/2),0} - h_2^{(b)} \Vert$.
\end{proof}

\subsection{BP-GN Updates are Most Effective When Applied Alone}
Next, we perform the same analysis on BP-GN and find the opposite is true: BP-GN updates are most effective when applied alone. We assume $g_n < 1$, which is plausible given that $g_n$ is typically a negative scalar for reasons noted in the proof for theorem \ref{thrm:main3}. We also make the same assumption as we did for IL-GN that $c_1 = c_2$. 

\begin{proposition}\label{prop:deltaOutBP}
Consider a linear MLP with a single hidden layer trained with BP-GN, mini-batch size 1, at iteration b. Assume generally $g_n < 1$ and $c_1 = c_2$. The weight update to $\Delta W_0$ and $\Delta W_1$ either have a larger effect on the output layer when applied alone: $\Vert h_2^{(b+1)} - h_2^{(b+1/2),0} \Vert < \Vert h_2^{(b+1/2),1} - h_2^{(b)} \Vert$ and $\Vert h_2^{(b+1)} - h_2^{(b+1/2),1} \Vert < \Vert h_2^{(b+1/2),0} - h_2^{(b)} \Vert$  or they push the output layer activities up the loss gradient (i.e., away from global target).
\end{proposition}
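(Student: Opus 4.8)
The plan is to mirror the proof of Proposition \ref{prop:deltaOutIL} step for step, substituting the BP-GN rule of Definition \ref{def:BP-GN} for the IL-GN rule and reusing the collective-update expansion already established in Theorem \ref{thrm:main3}. For the single-hidden-layer linear network ($N=2$) I would compute three output-layer displacements: the effect of updating $W_0$ alone, $h_2^{(b+1/2),0}-h_2^{(b)}$; the effect of updating $W_1$ alone, $h_2^{(b+1/2),1}-h_2^{(b)}$; and the effect of both, $h_2^{(b+1)}-h_2^{(b)}$, which equation \ref{eq:expandRecurFinGNTP} already gives as $(c_2 + c_1 g_1)\,e_2^{(b)}$ with $g_1 = \prod_{i=1}^{N-1} d_i = d_1$ and $d_1 = 1 + \alpha(h_1^T\hat{h}_1 - h_1^T h_1)$. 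The four norms in the statement are then read directly off these three vectors.

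The key computations, and the single place where the argument departs from the IL-GN case, are the two ``applied alone'' displacements. Updating to $W_0 + \alpha e_1 h_0^T$ alone gives $h_2^{(b+1/2),0} = h_2 + \alpha W_1 e_1 h_0^T h_0$; because BP-GN sets $\hat{h}_n = h_n + W_n^+ e_{n+1}$ (Definition \ref{def:BP-GN}), the relevant identity here is $e_1 = W_1^+ e_2$ \emph{without} the damping factor $\gamma$ that lemma \ref{lem:hhat2} supplies in IL-GN. Using $W_1 W_1^+ e_2 = e_2$ as in the earlier proofs, this collapses to $h_2 + c_1 e_2$ with $c_1 = \alpha h_0^T h_0$ (contrast the $\gamma c_1 e_2$ of equation \ref{eq:W0Update}). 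Updating to $W_1 + \alpha e_2 h_1^T$ alone gives $h_2^{(b+1/2),1} = h_2 + \alpha e_2 h_1^T h_1 = h_2 + c_2 e_2$, where $c_2 = \alpha h_1^T h_1$ is now built from the \emph{feedforward} presynaptic activity. Imposing the hypothesis $c_1 = c_2 =: c$, all the displacement vectors become scalar multiples of $e_2$: the two single-update effects are both $c e_2$, the two marginal effects are both $c g_1 e_2$, and the combined displacement is $c(1+g_1)e_2$.

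With everything collinear with $e_2$, both ``most effective alone'' inequalities collapse to the one scalar comparison $|g_1|$ versus $1$, since $\Vert h_2^{(b+1)}-h_2^{(b+1/2),0}\Vert = \Vert h_2^{(b+1)}-h_2^{(b+1/2),1}\Vert = |g_1|\,c\,\Vert e_2\Vert$ while $\Vert h_2^{(b+1/2),1}-h_2^{(b)}\Vert = \Vert h_2^{(b+1/2),0}-h_2^{(b)}\Vert = c\,\Vert e_2\Vert$. I would then case-split under the standing hypothesis $g_1 < 1$. If $-1 < g_1 < 1$ then $|g_1| < 1$ and both strict inequalities of the first disjunct hold. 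If instead $g_1 \le -1$, the net displacement $c(1+g_1)e_2$ is a non-positive multiple of $e_2$, so the combined update drives $h_2$ in the $-e_2$ direction; since $e_2 = -\partial L/\partial h_2$ points down the loss gradient, this is exactly the second disjunct. These two cases exhaust $g_1 < 1$, the excluded regime $g_1 > 1$ being the IL-GN-like situation of Proposition \ref{prop:deltaOutIL}.

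The main obstacle is bookkeeping the sign of $g_1$ rather than any delicate estimate. Because $g_1 = \prod d_i$ is a product of factors $d_i = 1 + \alpha(h_i^T\hat{h}_i - h_i^T h_i)$ that are typically negative (as argued in the proof of Theorem \ref{thrm:main3}), $g_1$ can be negative with magnitude exceeding one, which is precisely what forces the disjunctive conclusion and separates BP-GN from IL-GN. I would therefore justify carefully that $g_n < 1$ is the correct threshold, fold the degenerate edge $g_1 = -1$ (zero net displacement, where neither branch is strict) into the second branch or discard it as non-generic, and verify that the reduction to collinearity with $e_2$ relies only on $c_1 = c_2$ together with the assumptions already inherited from Theorem \ref{thrm:main2}.
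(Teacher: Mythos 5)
Your proposal is correct and follows essentially the same route as the paper's proof: compute the two single-update displacements ($c_1 e_2$ and $c_2 e_2$), reuse the collective expansion from Theorem \ref{thrm:main3} to get the combined displacement, reduce everything to scalar multiples of $e_2$ via $c_1 = c_2$, and case-split on $-1 < g_1 < 1$ versus $g_1 \le -1$. Your treatment is in fact slightly more careful than the paper's, since you explicitly note the absence of the $\gamma$ damping in $e_1 = W_1^+ e_2$ and handle the degenerate boundary $g_1 = -1$, which the paper leaves unaddressed.
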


\begin{proof}
Following equation \ref{eq:W0Update} above, the effect of $\Delta W_0$ on the output layer is $h_2^{(b)} + c_1 e_2^{(b)}$, except $c_1$ is now computed $c_1 = h_0^{T(b)} h_0^{(b)}$.

The effect of $\Delta W_1$  on the output layer, $h_2$, when the update is applied alone is

\begin{equation}
\begin{split}
    h^{(b+1/2),1}_2 &= (W^{(b)}_1 + \Delta W^{(b)}_1)W^{(b)}_0h_0^{(b)} = h_2^{(b)} + \Delta W^{(b)}_1 h_1^{(b)} \\
    &= h_2^{(b)} + \alpha h_1^{(b) T} h_1^{(b)} e^{(b)}_2 = h_2^{(b)} + c_2 e_2^{(b)},
\end{split}
\end{equation}

where $c_2 =  \alpha h_1^{(b)T} h_1^{(b)}$ and is generally a large positive scalar.

Finally, using the derivation from theorem \ref{thrm:main3}, when both updates are applied we get
\begin{equation}\label{eq:bothDeltaWBP}
    h_2^{(b+1)} = h_2^{(b)} + c_2 e^{(b)}_2 + c_1 g_1 e^{(b)}_2
\end{equation}

where $g_1 = 1 + \alpha (h_1^{(b)T}\hat{h}_1^{(b)} - h_1^{(b)T} h^{(b)}_1)$ (see theorem \ref{thrm:main3} for details). As explained in theorem \ref{thrm:main3}, we can see that $g_1$ will often be negative since it will often be the case that $(h_1^{(b)T}\hat{h}_1^{(b)} - h_1^{(b)T} h^{(b)}_1) < 0$.

It follows from the assumption $c_1=c_2$ that $\Vert h_2^{(b+1)} - h_2^{(b+1/2),0} \Vert = \Vert g_1 c_1 e_2^{(b)} \Vert $. Second, $\Vert h_2^{(b+1/2),1} - h_2^{(b)}\Vert = \Vert c_2 e_2^{(b)}\Vert$. Under our assumptions $g_1 < 1$ and $c_0 = c_1$, in the case where $-1 < g_1 < 1$, it follows $\Vert h_2^{(b+1)} - h_2^{(b+1/2),0} \Vert < \Vert h_2^{(b+1/2),1} - h_2^{(b)}\Vert$ and generally when $g_1 < -1$, and under assumption $c_1=c_2$, we see from equation \ref{eq:bothDeltaWBP} the output layer activities away from the target (i.e. up rather than down the loss gradient).

Similarly, $\Vert h_2^{(b+1)} - h_2^{(b+1/2),0} \Vert^2 = \Vert g_1 c_1 e_2^{(b)} \Vert$ and  $\Vert h_2^{(b+1/2),0} - h_2^{(b)}\Vert = \Vert c_1 e_2^{(b)}\Vert$. Under our assumptions $g_1 < 1$ and $c_1 = c_2$, in the case where $-1 < g_1 < 1$ it follows $\Vert h_2^{(b+1)} - h_2^{(b+1/2),0} \Vert < \Vert h_2^{(b+1/2),1} - h_2^{(b)}\Vert$ and again when $g_1 < -1$ it is the case the the output layer activities are pushed away from the target (i.e. up rather than down the loss gradient).
\end{proof}

Under the above assumptions, IL-GN weight updates do not interfere with one another, and in fact are more effective when applied together than alone. BP-GN weight updates, on the other hand, tend to be most effective when applied alone. We test how well these theoretical results are approximated by IL-SGD and BP-SGD through simulations on linear networks trained on regression tasks. Indeed, IL-SGD networks tend to have larger effects on the output layer when applied together than along, more often than their BP counterparts when small learning rates are used (see figure \ref{fig:outputlayer2}).

\section{Further Experiments}\label{app:furtherRes}

\subsection{Further Stability Analysis Results and Discussion}
We  run the same stability analysis as in table \ref{tab:AccStab1Cif} for MNIST. BP-prox is unable to train in a stable manner for learning rates in the range of 2.5-100, and there is a clear decrease in accuracy as learning rate increases from .01 to 1. IL-prox's trains across all learning rates and even improves as learning rate increase from $.01$ to $1$. As with the Cifar-10 results we again see IL-SGD is more stable than BP-SGD, and prox models are more stable than SGD models.

BP-prox uses the NLMS version of its update (equation \ref{eq:BPWtupdate}) and uses the learning rate to adjust the output layer target. Like IL-prox, as $\alpha \rightarrow \infty$ then $\hat{h}_N \rightarrow y$ and as $\alpha \rightarrow 0$ then $\hat{h}_N \rightarrow h_N$. This use of the NLMS rule and learning rate improve stability. However, clearly these factors do not give BP-prox unconditional stability. The main difference between IL-prox and BP-prox is the way local targets are computed and used in the weight update. For example, while BP-prox uses presynaptic FF activity $h_n$ in its update of $W_n$, IL-prox uses presynaptic target value $\hat{h}_n$ in its update of $W_n$ (compare equations \ref{eq:NLMSUpdateEps} and \ref{eqn:wtUpdatebpprox}). Clearly, these differences have a significant effect on stability.

\begin{table}[ht]
\centering
\begin{adjustbox}{width=1\textwidth}
  \begin{tabular}{c c c c c c c c}
    \toprule
    \multicolumn{7}{c} {Stability Test: MNIST Test Accuracy} \\
    \toprule
    Model & lr=.01 & lr=.1 & lr=1 & lr=2.5 & lr=10 & lr=100 \\
    \toprule
    BP-SGD & $ 94.05(\pm.53)$ & $-$ & $-$ & $-$ & $-$ & $-$\\
    IL-SGD & $90.06 (\pm.25)$ & $94.49 (\pm.21)$ & $42.42 (\pm44.67)$ & $-$ & $-$ & $-$\\
    \midrule
    BP-prox & $93.028 (\pm.77)$ & $90.86 (\pm1.62)$ & $55.87 (\pm42.076)$ & $-$ & $-$ & $-$\\
    IL-prox & $89.30(\pm.32)$ & $93.058 (\pm.56)$ & $93.57 (\pm.32)$ & $93.49 (\pm.61)$ & $93.49 (\pm.94)$ & $93.85 (\pm.38)$ \\
    \bottomrule
\end{tabular}
\end{adjustbox}
\caption{Accuracy after 20,000 training iterations on MNIST mini-batch size 1. Fully connected networks with layer sizes 784-2x500-10. ReLU activations were used at hidden layers while softmax was used at output layer}\label{tab:AccStab1}
\end{table}

\subsection{MNIST Training Runs}

\begin{table}[ht]
\centering
  \begin{tabular}{c c c}
    \toprule
    \multicolumn{3}{c} {Test Accuracy (mean$\pm$std.) w/ Best Learning Rate}\\
    \toprule
    Model & MNIST & Fashion-MNIST\\
    \toprule
    BP-SGD & $97.096 (\pm.131)$ & $ 85.384(\pm.120)$\\
    BP-Adam & $97.294 (\pm.116)$ & $85.896 (\pm.081)$\\
    IL-SGD & $96.938 (\pm.102)$ & $84.972 (\pm.211)$ \\
    IL-prox & $96.650 (\pm.091)$ & $83.636 (\pm.091)$ \\
    IL-prox Fast & $96.466 (\pm.128)$ & $83.402 (\pm.169)$\\
    \bottomrule
\end{tabular}
\caption{Test accuracies after 1 epoch of training with mini-batch size 1 (60,000 training iterations). Fully connected networks were used with dimensions 784-2x500-10.}\label{tab:AccBestOnl}
\end{table}
\begin{figure}[ht]
\centering
 \includegraphics[width=\textwidth]{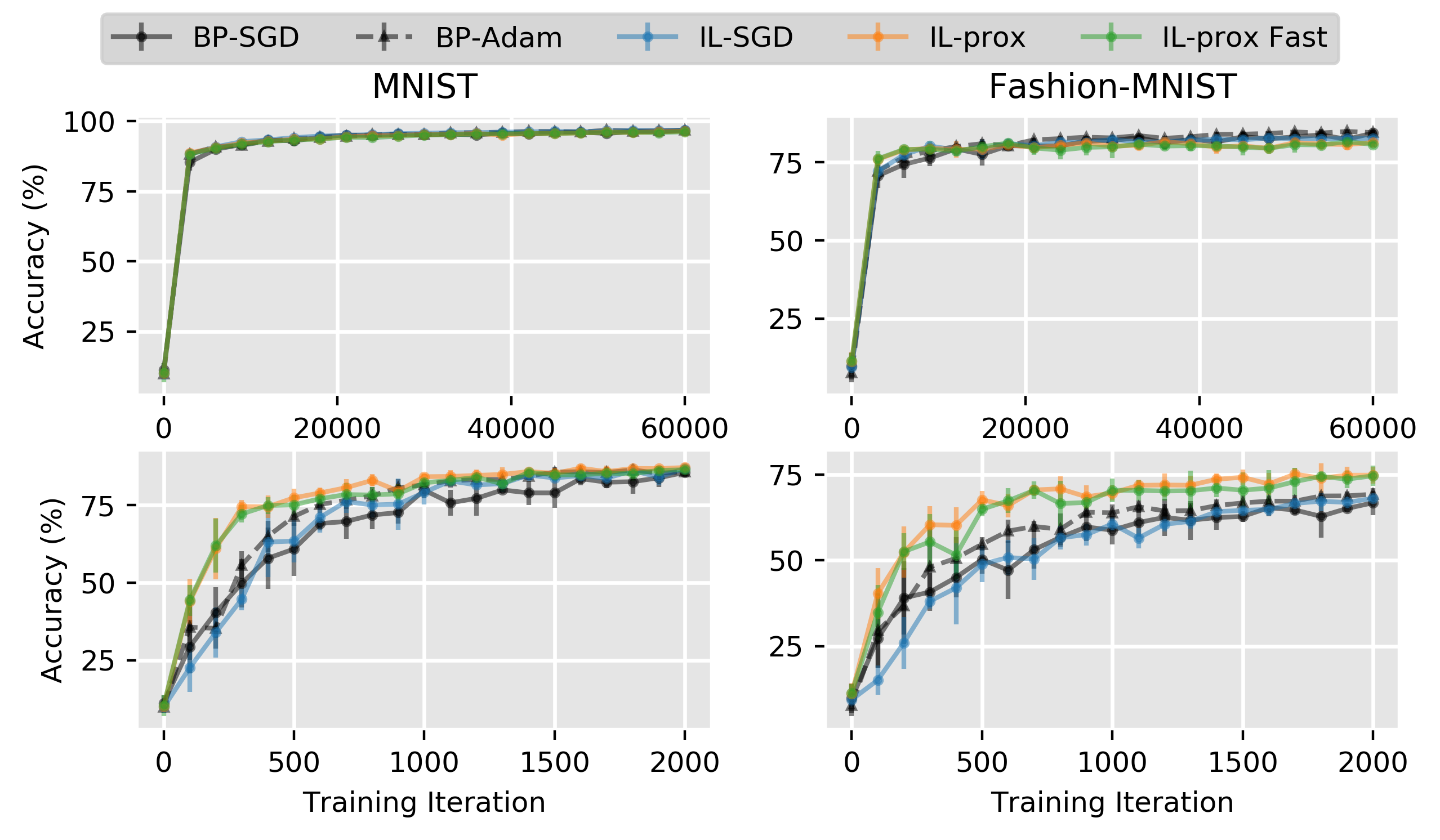}
\caption{Training runs for MNIST and Fashion-MNIST. Networks were fully connected dimensions 784-2x500-10. Models were trained mini-batch size 1 for one epoch (60,000 iterations). \textbf{Top Left} Full training run for MNIST. \textbf{Bottom Left} A zoomed in look at the first 2000 training iterations of MNIST. IL-prox algorithm improve accuracy slightly faster than other algorithms in the first 500 iterations. \textbf{Top Left} Full training run for Fashion-MNIST. \textbf{Bottom Left} A zoomed in look at the first 2000 training iterations of Fashion-MNIST. IL-prox algorithms improve accuracy faster than other algorithms for first 2000 training iterations.}
\end{figure}\label{fig:AccMNIST}

The training runs for are plotted in figure \ref{fig:AccMNIST} below. Differences in performance between IL and BP algorithms are smaller with these data-sets than they are with CIFAR-10. IL-prox improved accuracy slightly faster in the first few hundred or thousand iterations on both MNIST and Fashion-MNIST, but BP algorithms soon caught up.

\subsection{Convolutional Networks}
We also trained a small convolutional networks on CIFAR-10. Large mini-batches (size 64) were used and the Adam based algorithms were used for training (see \ref{tab:CifarAcc}), since IL algorithms worked best with Adam optimizers when large mini-batches were used. IL-prox Adam achieved essentially the same accuracy as BP-Adam, though IL-SGD did not achieve comparable accuracy. 

\begin{table}[ht]
\centering
  \begin{tabular}{c c c c}
    \toprule
    \multicolumn{4}{c} {CIFAR-10 Test Accuracy (mean$\pm$std.) w/ Best Learning Rate}\\
    \toprule
    & BP-Adam & IL-Adam & IL-prox Adam \\
    \toprule
    BP-SGD & $66.850 (\pm.591)$ & $54.802(\pm1.031)$ & $66.704 (\pm1.725)$\\
    \bottomrule
\end{tabular}
\caption{Here we train on CIFAR-10 with mini-batches of size 64 for 77,000 iteration ($\approx 100$ epochs). Runs were averaged over five seeds and the best score is shown. There were three convolutional layers followed by one fully connected layer. We used a similar architecture as \cite{bartunov2018assessing}. Convolutions were (5x5, 64, 2), (5x5, 128, 2), (3x3, 256, 2). Fully connected layer was 1024x10. ReLU activations were used at hidden layers, and softmax at the output layer.}\label{tab:ConvBestAcc}
\end{table}

\subsection{Output Layer Analysis}

Theorem \ref{thrm:main2} states that IL-GN, the closed form approximation of IL, updates parameters of linear MLPs in a way that changes FF output layer values ($h_N^{(b+1)} - h_N^{(b)}$) in the direction of the minimum-norm path of loss minimization for any positive learning rate. Theorem \ref{thrm:main3} states that BP-GN updates parameters in a way that yields change in FF output layer values ($h_N^{(b+1)} - h_N^{(b)}$ ) in the direction of the minimum-norm path of loss minimization, but only for a finite range of positive learning rates. Here we test how well this result holds in practice in a network trained with IL-SGD. In particular, we test how closely change in FF output layer values $\Delta h_N = h_N^{(b+1)} - h_N^{(b)}$ align with the minimum norm path to the target $\Delta h_N^{min} = y - h_N^{(b)}$ using cosine similarity. For small linear networks trained on a regression task with, IL-SGD does indeed produce changes in the output  (see \ref{fig:outputlayer} left) very close in angle to the minimum norm path. This supports the claim that IL-GN is a useful approximation of G-IL. MLPs trained with BP-SGD produce changes in the output layer that are significantly less close to the minimum norm path. This suggests IL-SGD takes a shorter, more direct path toward local minimum than that of steepest descent. Similar results were found when ReLU activations were applied at hidden layers (see \ref{fig:outputlayer} middle). Finally, we directly test theorem \ref{thrm:main2} and theorem \ref{thrm:main3} by exactly implementing IL-GN and BP-GN in a toy linear network. We measure the cosine similarity between the change in output FF values and the minimum norm change, as with the previous tests, and we do this over a range of learning rates. As predicted by theorem \ref{thrm:main2} and \ref{thrm:main3}, IL-GN and BP-GN affect output layer values along minimum norm path toward the target. However, as predicted by theorem \ref{thrm:main2} and \ref{thrm:main3}, we find BP-GN is significantly less stable than IL-GN (see \ref{fig:outputlayer} right).  

\begin{figure}[ht]
\centering
 \includegraphics[width=0.95\textwidth]{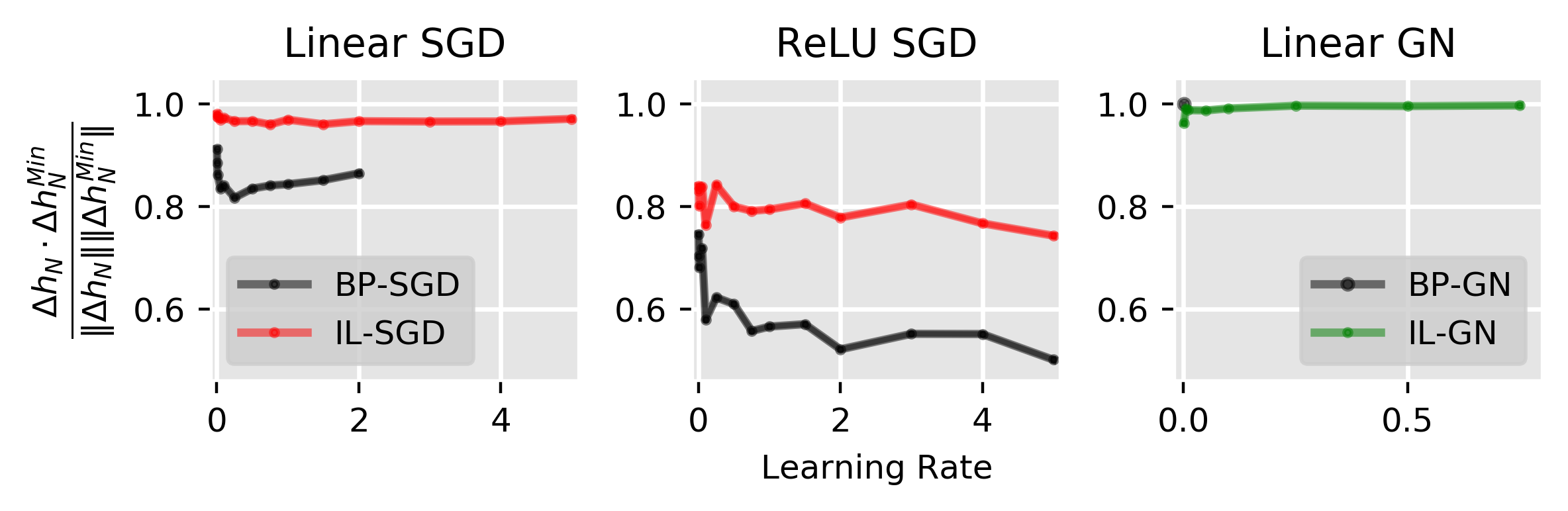}
\vspace*{8pt}
\caption{(Left) Cosine similarity was measured between $\Delta h_N$ and $\Delta h_N^{Min}$ each training iteration across a range of learning rates in linear networks trained on a regression task. $95\%$ confidence intervals are shown but are too small to be visible. (Middle) The same test was performed on non-linear networks that use ReLU at hidden layers. (Right) We run the same test for an actual IL-GN algorithm and BP-GN. As predicted by theorem \ref{thrm:main2} and \ref{thrm:main3} the IL-GN and BP-GN both push the output layer down its minimum-norm path, but BP-GN produces a minimum-norm $\Delta h_N$ only for a tiny range of learning rates (see black dot upper left), while IL-GN is stable over a much larger range.}
\end{figure}\label{fig:outputlayer}

Propositions \ref{prop:deltaOutIL} and \ref{prop:deltaOutBP} show that under certain assumptions IL-GN weight updates should have a larger effect on $h_N$ when applied together than when applied alone, and the opposite is generally true of BP-GN. We test how well this result holds in practice in IL-SGD and BP-SGD networks trained on a regression tasks. We compute the effect each individual $\Delta W^{(b)}_n$ has on the output layer FF values when applied in conjunction with other weights: $\Vert h_N^{(b+1)} - h_N^{(b+1/2, \neg n)}\Vert$, where $h_N^{(b+1)}$ are the FF output values after updates are applied to all weights, and $h_N^{(b+1/2, \neg n)}$ are the output layer values and when all updates except $\Delta W_n$ are applied. We then compute $\Vert h_N^{(b+1/2, n)} - h_N^{(b)}\Vert$ , where $h_N^{b}$ are the FF output values before any updates are applied at the current training iteration, and $h_N^{(b+1/2, n)}$ are the output layer values after only $\Delta W_n$ is applied. 

When $\Vert h_N^{b+1} - h_N^{(b+1/2, \neg n)}\Vert^2 \geq \Vert h_N^{(b+1/2, n)} - h_N^{(b)}\Vert$, we know $\Delta W_n$ had a larger affect on the output layer when applied in conjunction with other weights than when applied alone. When $\Delta W_n$ meets this condition we count it as compatible with the other weight updates. We then compute a compatibility score which stores a count of all compatible weight updates and uses it to compute the proportion of updates that were compatible.

\begin{equation}\label{eq:compScore}
    Compatibility Score = \frac{\# \text{Compatible } \Delta W_n}{\text{Total } \# \Delta W_n}
\end{equation}

We find that less than $50\%$ of weight updates in IL-SGD and BP-SGD networks are compatible though for smaller learning rates a greater proportion of IL updates are compatible than BP updates (see figure \ref{fig:outputlayer2}). This supports the claim that IL updates interfere with one another less than do BP updates, both in their ability to minimize loss and in their ability to change the FF output layer values.

\begin{figure}[ht]
\centering
 \includegraphics[width=0.8\textwidth]{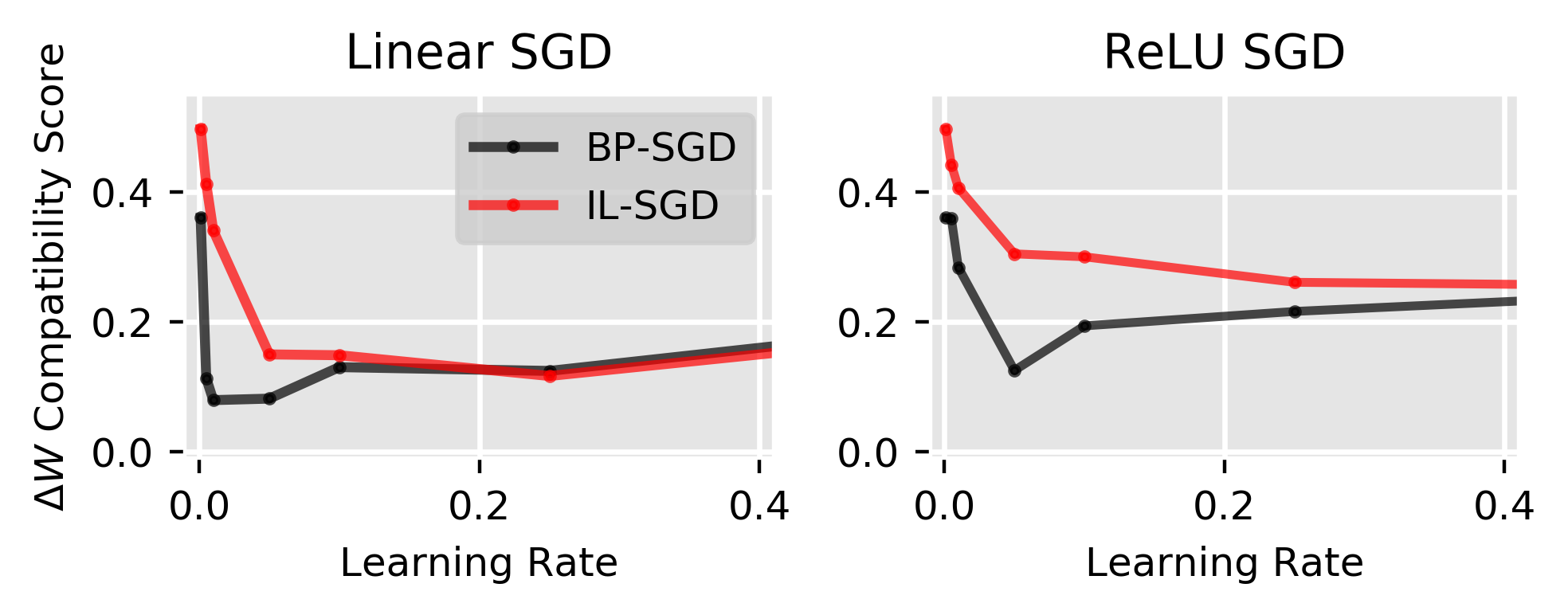}
\vspace*{8pt}
\caption{(Left) In linear networks with small learning rates ($\alpha_n < .1$) IL weight updates have significantly larger compatibility scores (according to 2-sample T-test, $95\%$ confidence intervals shown but are too small to see) than BP-SGD updates. (Right) The same trend exists in non-linear networks with ReLU at hidden layers for $\alpha_n < .4$.}\label{fig:outputlayer2}
\end{figure}

\subsection{Weight Norm Analysis}

Here we report the mean and standard deviation of the parameter updates $\Vert \Delta \theta \Vert$ from the training runs reported in \ref{tab:AccBestOnl}. We can see that generally IL updates are on average smaller than BP-SGD updates and have smaller standard deviations. IL algorithms tended to improve accuracy faster than BP-SGD. This suggests that IL did not improve accuracy faster because the overall changes in parameters were larger. One possibility is that IL improved accuracy faster because it updated parameters along a more direct path toward local minima. This suggestion is consistent with the findings above, that IL updates push FF output layer activities along a more direct path toward global target activities $y$.

\begin{table}[ht]\label{tab:wtNormsBestOnl}
\centering
  \begin{tabular}{c c c c}
    \toprule
    \multicolumn{4}{c} {$\Vert \Delta \theta \Vert$ (mean$\pm$std.) for Supervised Classification Tasks}\\
    \toprule
    Model & MNIST & Fashion-MNIST & CIFAR-10 \\
    \toprule
    BP-SGD & $.0871 (\pm.0194)$ & $.0680 (\pm.087)$ & $.0807 (\pm.0284)$\\
    BP-Adam & $.0109 (\pm.0082)$ & $.0060 (\pm.0019)$ & $.0090 (\pm.0014)$\\
    IL-SGD & $.0074 (\pm.0165)$ & $.0092 (\pm.0124)$ & $.0036 (\pm.0018)$ \\
    IL-prox & $.0059 (\pm.0132)$ & $.0361 (\pm.012)$ & $.0149 (\pm.004)$ \\
    IL-prox Fast & $.0075 (\pm.0167)$ & $.1215 (\pm.1102)$ & $.0203 (\pm.0161)$ \\
    \bottomrule
\end{tabular}
\end{table}

\section{Methods}
In this section, we provide a more detailed description of the algorithms used in our simulation, as well as a more detailed description of the experiments, so that they may be replicated in the future. We begin by describing the algorithms in more detail then discuss experiment details.\footnote{Code for simulations described below can be found at https://github.com/nalonso2/ILTheory}

\subsection{Algorithms}

\textbf{BP-SGD} This algorithm performed explicit SGD over weights computed by BP. Pytorch auto-differentiation was used to compute the gradients and update the weights. \textbf{BP-Adam} This algorithm computes (explicit) gradients using BP, which are then used by Adam optimizers to update parameters. We used the standard/default hyperparameters settings in the pytorch Adam optimizer: $\beta_1 = .9$, $\beta_2 = .999$, $\epsilon = 1*10^{-8}$.

\textbf{IL-SGD} This algorithm uses SGD during the inference phase and updates weights with a simple gradient/LMS step. The inference phase runs for 25 time/gradient steps over neuron activities. Consistent with previous work (e.g. \cite{whittington2017approximation, alonso2021tightening}, we fully clamp the output layer, i.e., $\hat{h}_N = y$.

\begin{algorithm}[H]
\SetAlgoLined
\DontPrintSemicolon
\Begin{
    \tcp{Feedforward Pass}
    $\hat{h}_0 \leftarrow x^{(b)}$\;
    \For{$n=0$ \KwTo $N-1$}{
        $p_{n+1}, \hat{h}_{n+1} \leftarrow W_n f(h_n)$
    }
    $\hat{h}_N \leftarrow y^{(b)}$\\
    \tcp{Inference Phase (Compute Local Targets)}
    $\hat{h}_{n} \leftarrow y^{(b)}$
    \For{$i=0$ \KwTo $25$}{
        \For{$n=1$ \KwTo $N$}{
            $\hat{h}_{n} \leftarrow \hat{h}_{n} - \beta \frac{\partial F}{\partial \hat{h}_{n}}$\\
            $p_{n+1} \leftarrow f(W_n \hat{h}_{n})$
        }
    }
    \tcp{Update Weight Matrices}
        Eqn. 4\;
}
\caption{IL-SGD}\label{alg:ILSGD}
\end{algorithm}
Note that the predictions are computed $f(W_n \hat{h}_{n})$, which is similar to previous implementations.

\textbf{IL-Adam} computes approximate implicit gradients of the weights using the IL-SGD algorithm (algorithm \ref{alg:ILSGD}). These gradients are then used by Adam optimizers which actually perform the update. An Adam optimizer is assigned to each weight matrix. We used the standard/default hyper-parameters settings in the pytorch Adam optimizer: $\beta_1 = .9$, $\beta_2 = .999$, $\epsilon = 1*10^{-8}$.

\textbf{IL-prox} This algorithm is our novel variant of IL that is made to more closely match the algorithm described in definition \ref{def:il-prox}. Theorem \ref{thrm:ILisProx} shows that minimizing the proximal loss w.r.t. hidden layer activities and output layer activities is equivalent to minimizing $F$ under certain settings of the $\gamma$ terms \ref{def:il-prox}. The most important difference between these $\gamma$ settings and the ones of IL-SGD is the way the learning rate is used. In particular, IL-SGD scales weight updates using the learning rate, whereas IL-prox scales weight updates by the norm of presynaptic activities using the NLMS rule (equation \ref{eq:NLMS1}) instead of learning rate. Then, following theorem \ref{thrm:ILisProx}, IL-prox uses the learning rate $\alpha$ only to determine the degree to which $h_N$ is pushed toward global target $y$ versus output layer prediction $p_N$.  In particular, following theorem \ref{thrm:ILisProx}, $\gamma_N = \frac{\alpha_{N-1}}{\alpha} = \frac{1}{\Vert \hat{h}_{N-1} \Vert^2 \alpha}$, then the gradient update at the output layer is
\begin{equation}
    \Delta \hat{h}_N = \frac{\partial L}{\partial \hat{h}_N} - \frac{1}{\Vert \hat{h}_{N-1} \Vert^2 \alpha} e_N,
\end{equation}
where $e_N = \hat{h}_N - p_N$, $p_N = \sigma(W_{N-1}f(\hat{h}_{N-1}))$, and $\sigma$ is the softmax in classification tasks and the sigmoid in the autoencoder task. In practice, we use a cross-entropy loss to update weights but find using gradients of the cross entropy loss to update output layer activities leads to some instability. Instead, we use the gradient of the MSE to update output layer activities. In particular,
\begin{equation}
    \Delta \hat{h}_N = -(y - \hat{h}_N) - \frac{1}{\Vert \hat{h}_{N-1} \Vert^2 \alpha} e_N,
\end{equation}
and we solve for $\hat{h}_N$ to get
\begin{equation}\label{eqn:outUpdatePrac}
    \hat{h}_N = \frac{\Vert \hat{h}_{N-1} \Vert^2 \alpha}{1 + \Vert \hat{h}_{N-1} \Vert^2 \alpha} y + \frac{1}{1 + \Vert \hat{h}_{N-1} \Vert^2 \alpha} p_N.
\end{equation}
Thus, at each iteration of the inference phase, the output layer activity $\hat{h}_N$ is updated to a weighted average between global target $y$ and prediction $p_N$. We see that as $\alpha \rightarrow 0$ then $\hat{h}_N \rightarrow p_N$ and as $\alpha \rightarrow \infty$ then $\hat{h}_N \rightarrow y$. Interestingly, we see the inverse relation between $\hat{h}_N$ and the normalized learning rate. The normalized learning rate for $W_{N-1}$ is $\Vert \hat{h}_{N-1} \Vert^{-2}$. We see that as $\Vert \hat{h}_{N-1} \Vert^{-2} \rightarrow 0$ then $\hat{h}_N \rightarrow y$. This means that for very small normalized learning rates the output layer activity will be pushed toward the global target $y$, which will lead to larger output layer errors $e_N$. As $\Vert \hat{h}_{N-1} \Vert^{-2} \rightarrow \infty$ then $\hat{h}_N \rightarrow p_N$.  This means that for very large normalized learning rates the output layer activity will be pushed increasingly fast toward the output layer prediction $p_N$, which will lead to smaller output layer errors. This illustrates how these activity updates manage weight update magnitude, as the magnitude of weight updates under the NLMS rule depend on the magnitude of post synaptic errors $e_N$ and the magnitude of the normalized learning rate. If the learning rate is small, $e_N$ can be larger and the weight update will still be small. If the learning rate is large, the activities are updated to keep $e_N$ small and thus the weight update small. 

We find including the $\Vert \hat{h}_{N-1} \Vert^{2}$ term in the output layer activity update is important for the stability and performance of IL-prox, but the weighting terms of theorem \ref{thrm:ILisProx} are not particularly important at hidden layers for performance or stability of IL-prox. To reduce computation, we just use preset hyper-parameter $\gamma$ values at hidden layers. IL-prox uses the NLMS rule (equation \ref{eq:NLMSUpdate}) to update weights. In practice, we find it sometimes improves performance to add a small constant $\epsilon$ to the NLMS rule such that it becomes
\begin{equation}\label{eq:NLMSUpdateEps}
    \argmin_{W_n} F = W_n +  \Delta W_n =W_n + \frac{1}{\Vert f(\hat{h}_n) \Vert^{2} + \epsilon} e_{n+1}f(\hat{h}_n)^T.
\end{equation}
The $\epsilon$ provides a smooth upper bound on the magnitude of the normalized learning rate and prevents division by zero errors. (When using biases, however, division by zero is generally not an issue, since one must treat the pre-synaptic vector $\hat{h}_n$ as having a $1$ concatenated to the end of it which for most activation functions guarantees $\Vert f(\hat{h}_n) \Vert^{-2} > 0$.) 

In the case where mini-batches are used we first average the gradient across the batch, then multiply this averaged gradient by the average of the normalized learning rates across the mini-batch.
\begin{algorithm}[H]
\SetAlgoLined
\DontPrintSemicolon
\Begin{
    \tcp{Feedforward Pass}
    $\hat{h}_0 \leftarrow x^{(b)}$\;
    \For{$n=0$ \KwTo $N-1$}{
        $p_{n+1}, \hat{h}_{n+1} \leftarrow W_n f(h_n)$
    }
    $\hat{h}_N \leftarrow$ Eqn. \ref{eqn:outUpdatePrac}\\
    \tcp{Inference phase (Compute Local Targets)}
    \For{$i=0$ \KwTo $25$}{
        \For{$n=1$ \KwTo $N$}{
            $\hat{h}_{n} \leftarrow \hat{h}_{n} - \beta \frac{\partial F}{\partial \hat{h}_{n}}$\\
            $p_{n+1} \leftarrow W_n f(\hat{h}_{n})$
        }
        $\hat{h}_N \leftarrow$ Eqn. \ref{eqn:outUpdatePrac}
    }
    \tcp{Update Weight Matrices}
        Eqn. \ref{eq:NLMSUpdateEps}\;
}
\caption{IL-prox}\label{alg:ILprox}
\end{algorithm}

\textbf{IL-prox Fast} is the same as algorithm \ref{alg:ILprox} except it only performs 12 updates over activities. \textbf{IL-prox Adam} uses IL-prox to compute (approximate implicit) gradients (i.e. uses the same inference phase as IL-prox then equation \ref{eq:NLMSUpdateEps} to compute weight gradients), then inputs these gradients to an Adam optimizers, which updates the weights. 

\textbf{BP-prox} was the algorithm we used as a control to compare against IL-prox in the stability tests. BP-prox softly clamps the output layer similarly to IL-prox and uses the NLMS rule similarly to IL-prox. The output layer activities are computed according to
\begin{equation}\label{eqn:outUpdatebpprox}
    \hat{h}_N = \frac{\Vert \hat{h}_{N-1} \Vert^2 \alpha}{1 + \Vert \hat{h}_{N-1} \Vert^2 \alpha} y + \frac{1}{1 + \Vert \hat{h}_{N-1} \Vert^2 \alpha} h_N.
\end{equation}
Notice, unlike IL-prox, the FF activity $h_N$ is used rather than $p_N$ since G-BP does not compute local predictions. However, IL-prox initializes $p_N$ to $h_N$ so these values are the same upon initialization. Here $h_N = \sigma(W_{N-1}f(h_{N-1}))$ where $\sigma$ is the softmax. The weights are updated as follows:
\begin{equation}\label{eqn:wtUpdatebpprox}
    \Delta W_n = \frac{1}{\Vert f(h_n) \Vert^{2} + \epsilon} e_{n+1}f(h_n)^T,
\end{equation}
where $\epsilon$ is a small scalar used to prevent division by zero and place upper bound on the normalized learning rate. (As we note below, in the stability tests the same small $\epsilon$ is used for both IL-prox and BP-prox for a fair comparison). Notice this is the same as the NLMS rule used by IL-prox (equation \ref{eq:NLMSUpdateEps}), except, in keeping with the G-BP framework, the pre-synaptic FF activities $h_n$ activities are used in the learning rate and gradient computation rather than target activities $\hat{h}_n$. A summary of the algorithm is presented below.
\begin{algorithm}[H]
\SetAlgoLined
\DontPrintSemicolon
\Begin{
    \tcp{Feedforward Pass}
    $h_0 \leftarrow x^{(b)}$\;
    \For{$n=0$ \KwTo $N-1$}{
        $h_{n+1} \leftarrow W_n f(h_n)$
    }
    $\hat{h}_N \leftarrow$ Eqn. \ref{eqn:outUpdatebpprox}\\
    \tcp{Compute Local Targets}
    \For{$n=1$ \KwTo $N$}{
        $\hat{h}_n = h_n - \frac{\partial L(y, \hat{h}_N)}{\partial h_n}$
    }
    \tcp{Update Weight Matrices}
        Eqn. \ref{eqn:wtUpdatebpprox}\;
}
\caption{BP-prox \label{alg:bpprox}}
\end{algorithm}

\subsection{Experiment Details}
Pytorch data-loaders were used to download and perform transformations on MNIST, Fashion-MNIST, and CIFAR-10 datasets. All data was transformed to pytorch tensors, but no other alterations were applied to the data (e.g., there was no normalization). All pixel values were in the range between $0$ and $1$. Pytorch auto-gradient library was used to compute gradients for BP models, and the local gradients for IL-SGD models. Gradients used to update weights and activities in IL-prox models were computed manually. Code is publicly available at ???.

\textbf{Supervised Tasks}
For all classification tasks, ReLU activations were used at hidden layers and softmax at the output layer. For MNIST tasks, fully connected networks size 784-2x256-10 were used. For our main results with CIFAR-10 classification (table \ref{tab:CifarAcc}), we used fully connected networks size 3072-3x1024-10. For the convolutional classification we used a similar architecture as \cite{bartunov2018assessing}, who compared a wide range of local,biologically plausible learning algorithms. Convolutions were (5x5, 64, 2), (5x5, 128, 2), (3x3, 256, 2). Fully connected layer was 1024x10. (Note, these are the same dimensions as \cite{bartunov2018assessing}, though they used locally connected layers rather than convolutions.)

A grid search was used to find the learning rates for CIFAR-10 and MNIST classification tasks shown in tables \ref{tab:CifarAcc}, \ref{tab:AccBestOnl}, and \ref{tab:ConvBestAcc}. We first hand tuned the learning rates to find a range over which the algorithms performed the best, then chose 5-10 learning rates in the range and trained 2-5 seeds at each learning learning rate. We averaged the test accuracy achieved at the end of training over the seeds, then chose the learning rate with the highest average test accuracy at the end of training. We then reran the training run with the best learning rate across 5 seeds to get the data shown in tables \ref{tab:CifarAcc}, \ref{tab:AccBestOnl}, and \ref{tab:ConvBestAcc}. Each seed was tested every 50 training iterations. Runs were averaged over the 5 seeds. The best test accuracy achieved from the averaged training run, is shown in the tables.

Learning rates used for each algorithm are shown below. The other main hyper-parameters are the $\gamma$ terms in the energy equation (equation \ref{eq:FEnergy}), the gradient step size used to update activities in the IL models, and the $\epsilon$ values in the IL-prox update (equation \ref{eq:NLMSUpdateEps}). The $\gamma$ terms and the step sizes over activity updates in IL models, each scale the gradients of F. To simplify, we found it effective to just set $\gamma^{decay} = 0$, then use the same two hyper-parameter terms at each hidden layer to differentially weight the two error gradient terms: $\gamma^{bot}$ and $\gamma^{top}$. First, $\gamma^{bot}$ is multiplied by the 'bottom up' error gradient in equation \ref{eq:DervFEnergy}, which comes to $\gamma^{bot} f'(\hat{h}_n)W_n^T e_{n+1}$. Then we $\gamma^{top}$ is multiplied by the 'top-down' error gradient in equation \ref{eq:DervFEnergy}, which comes to $\gamma^{top} e_{n}$. These same two hyper-parameters are used at each hidden layer of IL models and can be seen as incorporating both the actual gamma and step size hyper-parameters into a single hyper-parameter. For IL-SGD and IL-Adam, $\gamma^{bot} = .02$ and $\gamma^{top} = .015$. For IL-prox and IL-prox Adam, $\gamma^{bot} = .015$ and $\gamma^{top} = .015$. For IL-prox Fast, $\gamma^{bot} = .015$ and $\gamma^{top} = 0$. Finally, $\epsilon = .25$ for IL-prox and IL-prox fast models.

\begin{table}[ht]
\centering
\begin{adjustbox}{width=1\textwidth}
  \begin{tabular}{c c c c c c}
    \toprule
    \multicolumn{6}{c} {Learning Rates - Supervised Task - Fully Connected Networks}\\
    \toprule
    Model & MNIST & Fashion-MNIST & CIFAR-10 & CIFAR-10 (mb=64) & CIFAR-10 Conv. (mb=64)\\
    \toprule
    BP-SGD & $.015$ & $.01$ & $.015$ & $.01$ & -\\
    BP-Adam & $.0001$ & $.00005$ & $.000025$ & $.00005$ & $.00002$\\
    IL-SGD & $.05$ & $.03$ & $.01$ & $.01$ & -\\
    IL-prox & $5$ & $2.5$ & $100$ & $100$ & -\\
    IL-prox Fast & $3$ & $2.5$ & $100$ & $100$ & -\\
    IL-Adam & $-$ & $-$ & $.00001$ & $.000005$ & $.000009$\\
    IL-prox Adam & $-$ & $-$ & $100$ & $100$ & $.000008$\\
    \bottomrule
\end{tabular}
\end{adjustbox}
\caption{Learning rates for algorithms trained on supervised learning tasks with fully connected networks, with various data sets and mini-batch sizes (mb).}\label{tab:hyper-parameters}
\end{table}

\textbf{Self-Supervised Tasks}
The same grid search procedure was used for the self-supervised task as for the supervised tasks to find learning rates for each algorithm. ReLUs were used at hidden layers and sigmoid at the output layer. The autoencoder used fully connected layers dimensions 3072-1024-500-100-500-1025-3072. Binary cross entropy (BCE) was used at the output layer. The BCE data shown in figure \ref{fig:CIFARComb} was summed over pixels and averaged over the test set. The best test score from each of these averaged training runs is shown in the table. Figure \ref{fig:CIFARComb} shows a run with mini-batch size 1, which were trained with 50,000 data point (1 epoch). Learning rates are shown in the table below. The $\gamma$ settings are the same as those from the previous section. For the NLMS rule in IL-prox we used an $\epsilon = 20$ and generally found it useful to have a larger $\epsilon$ (and thus a smaller upper bound on the normalized learning rate) than in the supervised task.

\begin{table}[ht]
\centering
 \begin{tabular}{c c c c c}
    \toprule
    \multicolumn{5}{c} {Learning Rates - Self-Supervised Task}\\
    \toprule
    BP-SGD & BP-Adam & IL-SGD & IL-prox & IL-prox fast\\
    \toprule
    $.0001$ & $.00005$ & $.04$ & $10$  & $10$\\
    \bottomrule
\end{tabular}
\caption{Learning rates for self-supervised task}\label{tab:LRSelfSup}
\end{table}

\textbf{Stability Test}
Here we train models over learning rates .01, .1, 1, 2.5, 10, 100. We use the BP-SGD, IL-SGD, BP-prox, and IL-prox algorithms to train 5 seeds of networks sized 784-2x256-10 for MNIST and 3072-3x1024-10 for CIFAR-10, and use ReLU at hidden layers with softmax at the output layer. We use mini-batch size 1 and train for 50,000 iterations.

Importantly, the accuracy shown in table \ref{tab:AccStab1Cif} and \ref{tab:AccStab1} are the test accuracy, averaged over the 5 seeds, at the end of training. We use the test accuracy at the end of training (rather than the best test accuracy achieved during training). The purpose of this was to test which algorithms not only achieved above chance accuracy, but also did so while converging in a stable manner. Also important, the $\epsilon$ values used in the IL-prox and BP-prox updates were set to zero to ensure any stability these algorithms showed in this test was not due simply to the $\epsilon$ down scaling the weight updates. 

\textbf{Output Layer Analysis}
For the output layer analyses presented in figures \ref{fig:outputlayer}, \ref{fig:outputlayer2} we trained small toy models on a regression task. Models were FF networks with layer sizes, 10-5-5-5-5. Input data was generated by sampling vectors from a standard Gaussian. A teacher network of the same size with Tanh activations at hidden layers, was used to generate associated output target vectors associated with each input vector. 

Linear toy networks and toy networks with ReLUs at hidden layers were trained with BP-SGD and IL-SGD across learning rates (.001, .005, .01, .05, .1, .25, .5, .75, 1, 1.5, 2, 3, 4, 5). Each algorithm trained 50 seeds at each learning rate for 200 training iterations. Each training iteration the compatibility score (equation \ref{eq:compScore}) was computed. Additionally, each training iteration $b$, we computed the minimum norm path between the output layer activities and the output target $y^{(b)}$ as $y^{(b)} - h^{(b)}_N$. We then compared this value to the actual change in the output layer values $h^{(b+1)}_N - h^{(b)}_N$ after the update each training iteration. We then computed and stored the cosine similarity between the two. This data was then averaged over each iteration and seed for each learning rate. The result was the average compatibility score and minimum norm change similarity under each learning rate.
\stopcontents[sections]

\medskip

\small

\end{document}